\newtheorem{example}{Example}
\newtheorem{theorem}{Theorem}
\newtheorem{proposition}{Proposition}
\newtheorem{definition}{Definition}
\newtheorem{corollary}{Corollary}
\newtheorem{lemma}{Lemma}
\newtheorem{notation}{Notation}
\newtheorem{remark}{Remark}
\newtheorem{method}{Method}
\pgfplotsset{compat=1.13}
\title{Max-min Learning of  Approximate Weight Matrices From Fuzzy Data}
\author{Ismaïl Baaj \\ Univ. Artois, CNRS, CRIL, F-62300 Lens, France \\  \href{baaj@cril.fr}{baaj@cril.fr}}
\date{}
\begin{document}

\maketitle

\begin{abstract}
In this article, we study the approximate solutions set $\Lambda_b$ of an  inconsistent system of $\max-\min$ fuzzy relational equations $(S): A \Box_{\min}^{\max}x =b$. Using the $L_\infty$ norm, we compute by an explicit analytical formula the Chebyshev distance $\Delta~=~\inf_{c \in \mathcal{C}} \Vert b -c \Vert$, where $\mathcal{C}$ is the set of second members of the consistent systems defined with the same matrix $A$. We study 
the set $\mathcal{C}_b$ of Chebyshev approximations of the second member $b$ i.e., vectors $c \in \mathcal{C}$ such that $\Vert b -c \Vert = \Delta$, which is associated to the  approximate solutions set  $\Lambda_b$ in the following sense: an element of the set $\Lambda_b$ is a solution vector  $x^\ast$ of a system $A \Box_{\min}^{\max}x =c$ where $c \in \mathcal{C}_b$. As main results, we describe both the structure of the set $\Lambda_b$ and that of the set $\mathcal{C}_b$. 

We then introduce a paradigm for $\max-\min$ learning weight matrices  that relates input and output data from training data. The learning error is expressed in terms of the $L_\infty$ norm. We compute by an explicit formula the \textit{minimal value} of the learning error according to the training data. We give a method to construct weight matrices whose learning error is minimal, that we call approximate weight matrices. 

Finally, as an application of our results, 
we show how to learn approximately the rule parameters of a possibilistic rule-based system according to multiple training data.
\end{abstract}

\keywords{Fuzzy set theory ; Systems of fuzzy relational equations ;  Learning ; Possibility theory}

\section{Introduction}

Fuzzy relations were introduced in Zadeh's seminal paper on Fuzzy set theory \cite{zadeh1995fuzzy}.  The importance of fuzzy relations  was stressed  by Zadeh and Desoer in \cite{zadehlinear},  where they highlighted that the study of relations is equivalent to the study of systems, since a system can be viewed as relations between an input space and an output space. This perspective is emphasized in the foreword of \cite{di2013fuzzy}, where Zadeh wrote: ``human knowledge may be viewed as a collection of facts and rules, each of which may be represented as the assignment of a fuzzy relation to the unconditional or conditional possibility distribution of a variable. What this implies is that knowledge may be viewed as a system of fuzzy relational equations. In this perspective, then, inference from a body of knowledge reduces to the solution of a system of fuzzy relational equations''.

\noindent Thanks to Sanchez's pioneering work on solving a system of fuzzy $\max-\min$ relational equations \cite{sanchez1976resolution}, many Artificial Intelligence (AI) applications  based on systems of fuzzy relational equations have emerged \cite{baets2000analytical,di1991fuzzy,di2013fuzzy,dubois1995fuzzy,pedrycz1985applications}.
Sanchez gave necessary and sufficient conditions for a system of $\max-\min$ fuzzy relational equations to be consistent  i.e., to have solutions. In \cite{sanchez1977}, he also showed that, if the system is consistent, there is a greater solution and many minimal solutions, which leads him to describe  the complete set of solutions. 

However, addressing the inconsistency of these systems remains a difficult problem, which has often been raised \cite{baets2000analytical,di2013fuzzy,li2009fuzzy,pedrycz1990inverse}. 
Many authors have tackled the issue of finding approximate solutions   \cite{ cuninghame1995residuation,di2013fuzzy,gottwald1986characterizations,klir1994approximate, li2010chebyshev,pedrycz1990inverse,  wangming1986fuzzy,WEN2022,wu2022analytical,xiao2019linear,yager1980lack}, and some numerical approaches were presented \cite{pedrycz1983numerical,pedrycz1990algorithms}. Among these works, one pioneer idea  was introduced  by Pedrycz in \cite{pedrycz1990inverse}. Given an inconsistent system, Pedrycz proposes to slightly modify its second member to obtain a consistent system.  Cuninghame-Green and Cechlárová \cite{cuninghame1995residuation} and later Li and Fang \cite{li2010chebyshev}  each proposed an algorithm to measure the minimal distance   expressed with the $L_\infty$ norm $\Delta~=~\inf_{c \in \mathcal{C}} \Vert b -c \Vert$,  where $b$ is the second member  of a considered inconsistent system and $\mathcal{C}$ is the set of the  second members of the consistent systems defined with the same matrix: that of the inconsistent system. This minimal distance is called the Chebyshev distance associated to the second member of the inconsistent system. 

\noindent In this article, the first main result of our work is an explicit analytical formula (Theorem \ref{th:Deltamin}), to compute, for a system whose matrix and second member are respectively denoted $A$ and $b$,  the Chebyshev distance associated to its second member $b$. The Chebyshev distance is denoted $\Delta = \Delta(A,b)$ and is obtained by \textit{elementary calculations} involving only the components of the matrix $A$ and those of the second member $b$. Then, we tackle the study of the set $\mathcal{C}_b$ of Chebyshev approximations of the second member of the system, where a Chebyshev approximation is a vector $c$ such that $\Vert b -c \Vert = \Delta$ and the system formed by the matrix $A$ and the vector $c$ as second member is a consistent system. Moreover, we define the approximate solutions set $\Lambda_b$ of the system, and we relate $\Lambda_b$ to $\mathcal{C}_b$ in the following sense: an element of $\Lambda_b$ is a solution vector $x^\ast$ of a system whose matrix is $A$ and its second member is a Chebyshev approximation of $b$.

\noindent Motivated by Sanchez's seminal results \cite{sanchez1976resolution},  we introduce an idempotent application denoted $F$, see (\ref{eq:F}), to check if a system defined with a fixed matrix and a given vector used as second member is a consistent system. The properties of $F$ allow us to compute the greatest element of each of the sets  $\mathcal{C}_b$ and $\Lambda_b$, see (Proposition \ref{proposition:greatestcheb}) and (Proposition \ref{prop:greatestaprox}), i.e., we compute the greatest Chebyshev approximation of $b$ and the greatest approximate solution of the system from the components of the matrix $A$ and those of the vector $b$. Then, in order to give the structure of  the set $\mathcal{C}_b$, we study its minimal elements.  For this purpose, we give a first characterization of the set $\Lambda_b$ (Proposition \ref{proposition:firstlambdab}), which involves a system of $\max-\min$ inequalities. By relying on the results of  \cite{matusiewicz2013increasing}, we give a method for constructing the set $\mathcal{C}_{b,\min}$ of minimal Chebyshev approximations (Corollary \ref{corollary:ctitlde}) and we prove that it is non-empty and finite (Corollary \ref{cor:nonemptyfinite}). The complete structure of the set $\mathcal{C}_b$ follows from this results, see (Theorem \ref{th:3}). In addition, we prove a structure theorem for  the set $\Lambda_b$, see (Theorem \ref{th:2}).

 \noindent All these results let us  introduce a paradigm to learn  approximately a weight matrix relating input and output data from  training data. To our knowledge,  the $\max-\min$ learning of  a weight matrix is commonly tackled by trying to adapt the classical gradient descent method to  $\max-\min$ fuzzy neural networks \cite{blanco1994solving,blanco1995identification,blanco1995improved,ciaramella2006fuzzy,de1993neuron,hirota1982fuzzy,hirota1996solving,hirota1999specificity,ikoma1993estimation,li2017convergent,pedrycz1983numerical,pedrycz1991neurocomputations,pedrycz1995genetic,saito1991learning,stamou2000neural,teow1997effective,zhang1996min} with the aim of minimizing the learning error, which is expressed in terms of $L_2$ norm. However, the non-differentiability of the functions $\max$ and $\min$ is very challenging  for developing an efficient gradient descent method for $\max-\min$ neural networks. In our learning paradigm, we choose to express the learning error
 in terms of the $L\infty$ norm. We give an \textit{explicit formula for computing the minimal value (denoted by $\mu$) of the learning error according to the training data}, see (Definition \ref{def:mu}) and (Corollary \ref{eq:muegalmin}). The value $\mu$  is computed in terms of Chebyshev distance of the second member of systems of $\max-\min$ fuzzy relational equations associated to the training data.
We then give a method (Method \ref{method:defou}) for constructing approximate weight matrices, i.e., \textit{matrices whose learning error is equal to $\mu$}. 
Finally, we introduce analogous tools for a system of $\min-\max$
fuzzy relational equations to those already introduced for a system of  $\max-\min$ fuzzy relational equations and we show their correspondences (Table \ref{tab:correspondences}). This allows us to extend our results  in \cite{baaj2022learning}, i.e., we give a method for approximately learn rule parameters of a  possibilistic rule-based system according to multiple training data.

The article is structured as follows. In (Section \ref{sec:background}), we remind necessary and sufficient conditions for a system of $\max-\min$ fuzzy relational equations to be consistent.  We introduce the application $F$ and we give some of its useful properties. In (Section \ref{sec:chebyshev}), we give the explicit analytical formula for computing the Chebyshev distance associated to the second member of a system. In (Section \ref{sec:chebyshevapprox}), we define the set of Chebyshev approximations of the second member and compute the greatest Chebyshev approximation. In (Section \ref{sec:relating}), we describe the structure of the set of Chebyshev approximations and that of the approximation solutions set of the system. In (Section \ref{sec:learning}), we introduce our learning paradigm. In (Section \ref{sec:applications}), we show the correspondences between a system of $\min-\max$ fuzzy relational equations and a system of $\max-\min$ fuzzy relational equations and we present our method for  approximately learn rule parameters of a  possibilistic rule-based system according to multiple training data. Finally, we conclude with some perspectives.

\section{Background}
\label{sec:background}
In this section, we give the necessary background for solving  a system of $\max-\min$ fuzzy relational equations. We remind  Sanchez's necessary and sufficient condition for a system of $\max-\min$ fuzzy relational equations to be consistent. We reformulate this result as a fixed point property of a certain idempotent and increasing application, which we explicitly define. We show some of its useful properties.

\subsection{Solving of a system of \texorpdfstring{$\max-\min$}{max-min} fuzzy relational equations}
 
\noindent We use the following notation:
\begin{notation}  $[0,1]^{n\times m}$ denotes the set of matrices of size $(n,m)$ i.e., $n$ rows and $m$ columns, whose components are in $[0,1]$. In particular:
\begin{itemize}
    \item $[0,1]^{n\times 1}$ denotes the set of column vectors of $n$ components,
    \item $[0,1]^{1\times m}$ denotes  the set of row matrices of $m$ components. 
\end{itemize}

\noindent In the set $[0,1]^{n\times m}$, we use the order relation $\leq$ defined by:
\[ A \leq B \quad \text{iff we have} \quad  a_{ij} \leq b_{ij} \quad \text{ for all } \quad 1 \leq i \leq n, 1 \leq j \leq m,    \]
\noindent where $A=[a_{ij}]_{1 \leq i \leq n, 1 \leq j \leq m}$ and $B=[b_{ij}]_{1 \leq i \leq n, 1 \leq j \leq m}$. \\ 
\end{notation}

\noindent Let $A=[a_{ij}] \in [0,1]^{n\times m}$ be a matrix of size $(n,m)$ and $b=[b_{i}] \in [0,1]^{n\times 1}$ be a vector of $n$ components. The system of $\max-\min$ fuzzy relational equations associated to  $(A,b)$ is of the form:
\begin{equation}\label{eq:sys}
    (S): A \Box_{\min}^{\max} x = b,
\end{equation}
\noindent where $x = [x_j]_{1 \leq j \leq m} \in [0,1]^{m\times 1}$ is  an unknown vector of $m$ components and the operator $\Box_{\min}^{\max}$ is the matrix product that uses the t-norm $\min$ as the product and $\max$  as the addition. The system can also be written as:
\begin{equation*}
    \max_{1 \leq j \leq m} \min(a_{ij},x_j) = b_i,\, \forall i \in \{1, 2,\dots, n\}.
\end{equation*}
\noindent There are two competing notation conventions for studying systems of fuzzy relational equations: they differ in whether the unknown part and the second member are column vectors or row vectors. These two conventions are  equivalent and the transpose  map allows us to switch from one to the other.

\noindent To check if the system $(S)$ is consistent, we compute the following vector: \begin{equation}\label{eq:egretestsol}
e = A^t \Box_{\rightarrow_G}^{\min} b,\end{equation}
\noindent where $A^t$ is the transpose of $A$ and the matrix product $\Box_{\rightarrow_G}^{\min}$ uses the Gödel implication $\rightarrow_G$ as the product and $\min$ as the addition. The Gödel implication is defined by:
\begin{align}\label{eq:implication:godel}
     x \rightarrow_G y = \begin{cases}1 & \text{ if } x \leq y \\ y &\text{ if } x > y \end{cases}.
\end{align}
Thanks to Sanchez's seminal work \cite{sanchez1976resolution}, we have the following equivalence:
\begin{equation}
    (S) \text{ is consistent}\Longleftrightarrow A \Box_{\min}^{\max}  e = b. 
\end{equation}
\noindent  The set of solutions of the system $(S)$ is denoted by:
    \begin{equation}\label{eq:setofsolutionsS}
{\cal  S  } = {\cal  S  }(A , b) = \{ v \in [0 , 1]^{m \times 1} \,\mid \,  A \, \Box_{\min}^{\max} v  = b \}. 
\end{equation}
\noindent If the system $(S)$ is consistent, the vector $e$, see (\ref{eq:egretestsol}), is the greatest solution of the system $(S)$. Sanchez also showed in \cite{sanchez1977} that the system $(S)$ has many minimal solutions and he described its set $\mathcal{S}$ of solutions.\\ 
\noindent We begin our study by the following useful result:
\begin{lemma}\label{lemma:increasing}
The maps:
\begin{equation}\label{eq:map:amaxminx}
    [0,1]^{m\times 1}\rightarrow [0,1]^{n \times 1} : x \mapsto  A \Box_{\min}^{\max} x,
\end{equation}
\begin{equation}\label{eq:map:atmingc}
[0,1]^{n \times 1} \rightarrow [0,1]^{m\times 1} : c \mapsto A^t \Box_{\rightarrow_G}^{\min} c
\end{equation}
\noindent are increasing with respect to the usual order  relation  between vectors. 
\end{lemma}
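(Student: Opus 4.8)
The plan is to reduce each monotonicity statement to the componentwise behaviour of the scalar operations $\min$, $\max$, and the Gödel implication $\rightarrow_G$, since the order $\leq$ on vectors is defined componentwise. For either map it then suffices to fix an output coordinate and check that the corresponding scalar coordinate function is nondecreasing in the input vector.

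For the first map (\ref{eq:map:amaxminx}), I would write the $i$-th coordinate of $A \Box_{\min}^{\max} x$ as $\max_{1 \leq j \leq m} \min(a_{ij}, x_j)$. Assume $x \leq x'$, i.e. $x_j \leq x'_j$ for every $j$. Since $t \mapsto \min(a_{ij}, t)$ is nondecreasing for each fixed $a_{ij}$, we get $\min(a_{ij}, x_j) \leq \min(a_{ij}, x'_j)$ for every $j$; taking the maximum over $j$ on both sides, which preserves $\leq$, gives the $i$-th coordinate inequality. As this holds for every $i$, we conclude $A \Box_{\min}^{\max} x \leq A \Box_{\min}^{\max} x'$.

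For the second map (\ref{eq:map:atmingc}), the $j$-th coordinate of $A^t \Box_{\rightarrow_G}^{\min} c$ is $\min_{1 \leq i \leq n} (a_{ij} \rightarrow_G c_i)$, so the crucial observation is that each component $c_i$ occurs only as the \emph{consequent} of the implication. The essential point is therefore that $y \mapsto (x \rightarrow_G y)$ is nondecreasing for fixed $x$, which follows from the definition (\ref{eq:implication:godel}) by a short case distinction on $y \leq y'$: if $x \leq y$ (hence also $x \leq y'$) both values equal $1$; if $y' < x$ the values are $y$ and $y'$ with $y \leq y'$; and in the remaining case $y < x \leq y'$ the values are $y$ and $1$ with $y < 1$. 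Granting this, $c \leq c'$ yields $a_{ij} \rightarrow_G c_i \leq a_{ij} \rightarrow_G c'_i$ for every $i$, and taking the minimum over $i$ preserves the inequality, which proves the claim for each coordinate $j$.

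I expect the only genuine obstacle to be the monotonicity of the Gödel implication in its second argument, precisely because $\rightarrow_G$ is \emph{antitone} in its first argument; one must therefore use that, through the transpose $A^t$, the second member $c$ enters only in the consequent position and never as an antecedent. Once the scalar case analysis for $\rightarrow_G$ is settled, both monotonicity statements follow directly from the componentwise definition of the vector order.
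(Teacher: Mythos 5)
Your proof is correct and follows essentially the same route as the paper's: monotonicity of $\min$ and $\max$ for the first map, and monotonicity of $y \mapsto (x \rightarrow_G y)$ for the second, applied componentwise. The only difference is that you spell out the case analysis for the Gödel implication and the coordinatewise computations, which the paper leaves implicit.
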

\begin{proof}
The first map is increasing because the $\max$ and $\min$ functions are increasing. For the second map, one can use that for a fixed  $x \in [0 , 1]^{m \times 1}$ , the map $y \mapsto (x \rightarrow_G y)$ is increasing. 
\end{proof}
\noindent As a consequence, we have the following well-known result:
\begin{lemma}\label{lemma:appF}
Let $c,c' \in {[0,1]}^{n \times 1}$ such that $c \leq c'$ then we have: 
\begin{equation}
    \forall v \in [0,1]^{m \times 1}, A \Box_{\min}^{\max} v = c  \Longrightarrow v \leq A^t \Box_{\rightarrow_G}^{\min} c'.
\end{equation}
\end{lemma}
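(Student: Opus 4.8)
The plan is to split the claim into two independent steps and then chain them. First I would treat the special case $c' = c$, proving that any $v$ with $A \Box_{\min}^{\max} v = c$ already satisfies $v \leq A^t \Box_{\rightarrow_G}^{\min} c$; then I would upgrade $c$ to an arbitrary $c' \geq c$ purely by monotonicity. Concretely, once $v \leq A^t \Box_{\rightarrow_G}^{\min} c$ is established, the hypothesis $c \leq c'$ together with the fact that the map $c \mapsto A^t \Box_{\rightarrow_G}^{\min} c$ is increasing (Lemma~\ref{lemma:increasing}) yields $A^t \Box_{\rightarrow_G}^{\min} c \leq A^t \Box_{\rightarrow_G}^{\min} c'$, and transitivity of $\leq$ closes the argument.

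For the special case there are two natural routes. The quickest is to observe that the existence of a $v$ with $A \Box_{\min}^{\max} v = c$ means the system $A \Box_{\min}^{\max} x = c$ is consistent; by Sanchez's characterization recalled above, its greatest solution is exactly $e = A^t \Box_{\rightarrow_G}^{\min} c$ (see (\ref{eq:egretestsol})), and since $v$ is one of its solutions we get $v \leq A^t \Box_{\rightarrow_G}^{\min} c$ immediately. Alternatively, and self-containedly, I would argue componentwise through the residuation property of the Gödel implication: fixing $i$ and $j$, the equation gives $\min(a_{ij}, v_j) \leq c_i$, and a short case split on whether $a_{ij} \leq c_i$ shows $v_j \leq (a_{ij} \rightarrow_G c_i)$ in both cases. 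Taking the minimum over $i$ then produces $v_j \leq \min_i (a_{ij} \rightarrow_G c_i)$, which is precisely the $j$-th component of $A^t \Box_{\rightarrow_G}^{\min} c$.

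I do not expect a genuine obstacle here; the content is essentially the adjunction (Galois connection) between $\Box_{\min}^{\max}$ and $\Box_{\rightarrow_G}^{\min}$. The only point requiring care is the case $a_{ij} > c_i$ in the direct route: there $a_{ij} \rightarrow_G c_i = c_i$, so one must check that $\min(a_{ij}, v_j) \leq c_i$ forces $v_j \leq c_i$, which holds because $v_j > c_i$ together with $a_{ij} > c_i$ would make the minimum strictly exceed $c_i$. If one instead takes the Sanchez route, the only thing to make explicit is that a solvable right-hand side $c$ is an admissible input to the greatest-solution formula, i.e. that solvability is exactly what licenses calling $A^t \Box_{\rightarrow_G}^{\min} c$ the greatest solution.
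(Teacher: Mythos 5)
Your proposal is correct and its main route is essentially the paper's own proof: both establish $v \leq A^t \Box_{\rightarrow_G}^{\min} c$ via Sanchez's greatest-solution characterization (consistency being witnessed by $v$ itself), then use the monotonicity of $c \mapsto A^t \Box_{\rightarrow_G}^{\min} c$ from Lemma~\ref{lemma:increasing} and transitivity to pass to $c'$. The componentwise residuation argument you sketch as an alternative is a valid self-contained substitute for the Sanchez step, but it is not needed and is not what the paper does.
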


\begin{proof}
\noindent Let us remind that $e = A^t \Box_{\rightarrow_G}^{\min} c$ and $e' = A^t \Box_{\rightarrow_G}^{\min} c'$ are the potential greatest solutions of the systems $A \Box_{\min}^{\max} x = c$ and $A \Box_{\min}^{\max} x =c'$ respectively. Then,  from  (\ref{eq:map:atmingc})  we deduce $e \leq e'$.\\
Let $v \in [0,1]^{m \times 1}$ be such that $A \Box_{\min}^{\max} v = c$. Then the system $A \Box_{\min}^{\max} x = c$ is consistent and $v \leq e$. By transitivity of the relation order, we get $v \leq e'$.
\end{proof}

\noindent We illustrate the solving of the system $(S)$ by an example:

\begin{example}
Let:
\begin{equation*}
    A = \begin{bmatrix}
        0.06 & 0.87 & 0.95\\
        0.75 & 0.13 & 0.88\\
        0.82 & 0.06 & 0.19
    \end{bmatrix}\text{ and }
    b = \begin{bmatrix}
    0.4\\0.7\\ 0.7
    \end{bmatrix}.
\end{equation*}
We have: $A^t = \begin{bmatrix}0.06& 0.75& 0.82\\ 0.87 &0.13& 0.06\\
    0.95 & 0.88 & 0.19
    \end{bmatrix}$. We compute the potential greatest solution:
\begin{equation*}
    e = A^t \Box_{\rightarrow_G}^{\min} b = \begin{bmatrix}\min(1.0,0.7,0.7)\\ \min(0.4,1.0,1.0)\\ \min(0.4,0.7,1.0)\end{bmatrix}= \begin{bmatrix}
    0.7 \\
    0.4 \\
    0.4
    \end{bmatrix}.
\end{equation*}
The system $A \Box_{\min}^{\max} x = b$ is consistent because:
\[  A \Box_{\min}^{\max}  e = \begin{bmatrix}
    0.4\\0.7\\ 0.7
    \end{bmatrix}  = b.\]
\end{example}

\subsection{Reformulation of Sanchez's condition as a fixed point property}

\noindent For the system $(S)$, we introduce the following application:
\begin{equation}\label{eq:F}
    F : [0,1]^{n \times 1} \rightarrow [0,1]^{n \times 1} :
 c \mapsto F(c) = A  \, \Box_{\min}^{\max} (A^t\, \Box_{\rightarrow_G}^{\min} c).
\end{equation}

\noindent The application $F$ allows us to check if a system of fuzzy relational equations $\max-\min$ is consistent:

\begin{proposition}\label{proposition:appFreformulated}
For any  vector $c \in [0,1]^{n \times 1}$ the following conditions are equivalent:
\begin{enumerate}
    
    \item $F(c) = c$, 
    \item the system $ A  \Box_{\min}^{\max} x = c$ is consistent.
\end{enumerate}
 \end{proposition}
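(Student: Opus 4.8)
The plan is to recognize that this proposition is essentially a direct reformulation of Sanchez's consistency criterion (the equivalence recalled just after (\ref{eq:egretestsol})), once the definition (\ref{eq:F}) of $F$ is unfolded. First I would introduce the shorthand $e = A^t \Box_{\rightarrow_G}^{\min} c$, so that by definition $F(c) = A \Box_{\min}^{\max} e$; this is exactly the vector appearing on the left-hand side of Sanchez's test applied with second member $c$. The entire argument then reduces to the chain of trivial equivalences $F(c) = c \iff A \Box_{\min}^{\max} e = c \iff (S)\text{ with second member }c\text{ is consistent}$, where the second equivalence is Sanchez's criterion and the first is mere substitution. So there is no calculation to grind through; the work is purely in tracking definitions.

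Concretely I would split into the two implications. For $(2) \Rightarrow (1)$: assuming $A \Box_{\min}^{\max} x = c$ is consistent, Sanchez's theorem guarantees that the canonical candidate $e = A^t \Box_{\rightarrow_G}^{\min} c$ is itself a solution (indeed the greatest one), hence $A \Box_{\min}^{\max} e = c$; since $F(c) = A \Box_{\min}^{\max} e$ by (\ref{eq:F}), this gives $F(c) = c$. For $(1) \Rightarrow (2)$: assuming $F(c) = c$ and unfolding (\ref{eq:F}) yields $A \Box_{\min}^{\max} e = c$, which exhibits $e \in [0,1]^{m\times 1}$ as an explicit solution of $A \Box_{\min}^{\max} x = c$; the solution set is therefore nonempty and the system is consistent.

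The one spot that might appear to conceal something, but does not, is the direction $(2) \Rightarrow (1)$: consistency by itself only asserts that \emph{some} solution exists, whereas I need the \emph{specific} vector $e$ to be a solution. The nontrivial input resolving this is precisely the part of Sanchez's result stating that $e$ is the greatest solution whenever the system is solvable, so that existence of any solution forces $A \Box_{\min}^{\max} e = c$. Since that fact is assumed available from the Background, I expect no genuine obstacle; the proposition is a clean restatement of the consistency test as the fixed-point condition $F(c)=c$, and the main care needed is simply not to conflate ``$e$ is a solution'' with the weaker ``a solution exists.''
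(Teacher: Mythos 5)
Your proof is correct and takes exactly the route the paper intends: the paper's proof is simply the remark ``Reformulation of Sanchez's result,'' and your write-up spells out precisely that reformulation, unfolding $F(c) = A \Box_{\min}^{\max} e$ with $e = A^t \Box_{\rightarrow_G}^{\min} c$ and applying Sanchez's equivalence, with the right care taken in the direction where consistency must force the specific vector $e$ to be a solution.
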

\begin{proof}
    Reformulation of Sanchez's result. 
\end{proof}

\noindent The properties of idempotence, growth and right-continuity of the application $F$ justify its introduction: 
\begin{proposition}\label{prop:threestatementF}
\mbox{}
\begin{enumerate}
\item $\forall c \in [0,1]^{n \times 1}$, $F(c) \leq c$.
    \item $F$ is idempotent i.e., $\forall c \in [0,1]^{n \times 1}, F(F(c)) = F(c)$. 

    \item  $F$ is increasing and right-continuous.
\end{enumerate}
\end{proposition}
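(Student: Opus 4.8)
The plan is to handle the three statements in turn, leaning on the two increasing maps of Lemma~\ref{lemma:increasing} and on the consistency criterion of Proposition~\ref{proposition:appFreformulated}. For statement~1 I would argue entirely at the level of components. Writing $e = A^t \Box_{\rightarrow_G}^{\min} c$, so that $e_j = \min_k (a_{kj} \rightarrow_G c_k)$, the $i$-th component of $F(c)$ is $\max_j \min(a_{ij}, e_j)$. The single pointwise fact I would establish is $\min(a, a \rightarrow_G c_i) \leq c_i$, which follows by splitting on whether $a \leq c_i$ (then $a \rightarrow_G c_i = 1$ and the minimum is $a \leq c_i$) or $a > c_i$ (then $a \rightarrow_G c_i = c_i$ and the minimum is $c_i$). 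Since $e_j \leq a_{ij} \rightarrow_G c_i$ by keeping only the index $k=i$ in the defining minimum, this gives $\min(a_{ij}, e_j) \leq \min(a_{ij}, a_{ij} \rightarrow_G c_i) \leq c_i$ for every $j$; taking the maximum over $j$ yields $(F(c))_i \leq c_i$.

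For idempotence I would avoid any computation and appeal to Proposition~\ref{proposition:appFreformulated} directly. By construction $F(c) = A \Box_{\min}^{\max} e$, so the vector $e$ is itself a solution of the system $A \Box_{\min}^{\max} x = F(c)$. That system is therefore consistent, and Proposition~\ref{proposition:appFreformulated} immediately gives $F(F(c)) = F(c)$.

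The monotonicity half of statement~3 is free: $F$ is the composition of the two maps of Lemma~\ref{lemma:increasing}, each increasing, hence $F$ is increasing. For right-continuity I would reduce to the inner map $g : c \mapsto A^t \Box_{\rightarrow_G}^{\min} c$, since the outer map $x \mapsto A \Box_{\min}^{\max} x$ is a finite combination of $\max$ and $\min$ and is therefore continuous, so it preserves whatever limits $g$ does. The crux is that each scalar map $y \mapsto (x \rightarrow_G y)$, though it jumps at $y = x$, is right-continuous there: the discontinuity sends the left limit $x$ up to the value $1$, while the right limit equals $1 = (x \rightarrow_G x)$. Concretely, taking a decreasing sequence $c^{(p)} \downarrow c$ I would check termwise that $a_{kj} \rightarrow_G c^{(p)}_k \to a_{kj} \rightarrow_G c_k$: if $a_{kj} \leq c_k$ the implication equals $1$ along the whole tail (since $c^{(p)}_k \geq c_k \geq a_{kj}$), whereas if $a_{kj} > c_k$ then eventually $c^{(p)}_k < a_{kj}$, so the implication equals $c^{(p)}_k$, which tends to $c_k = a_{kj} \rightarrow_G c_k$. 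As a finite $\min$ of right-continuous functions is right-continuous, $g$ is right-continuous, and composing with the continuous outer map finishes the proof.

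The only genuinely delicate point is this last argument: one must verify that the Gödel implication's jump falls on the correct side, so that approaching $c$ from above recovers the value $F(c)$ rather than the strictly larger left limit. Everything else is either a direct componentwise bound or a one-line application of the consistency reformulation.
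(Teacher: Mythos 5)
Your proof is correct. Parts 1 and 3 follow essentially the paper's own route: part 1 is the same componentwise bound (your pointwise fact $\min(a, a \rightarrow_G c_i) \leq c_i$ is exactly the identity $\min(x, x \rightarrow_G y) = \min(x,y)$ that the paper invokes, after restricting the inner minimum to the index $k=i$), and part 3 fills in the paper's one-line justification that monotonicity comes from composing the two maps of Lemma \ref{lemma:increasing} and right-continuity from the scalar maps $y \mapsto (x \rightarrow_G y)$. One caveat on part 3: the paper's definition of right-continuity quantifies over \emph{all} sequences $c^{(k)} \geq c$ converging to $c$, not only decreasing ones; this is harmless here, because your termwise case analysis only ever uses $c^{(p)}_k \geq c_k$ and convergence, never monotonicity of the sequence, so it covers the general case despite the phrase ``decreasing sequence''. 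The genuine divergence is part 2 (idempotence). The paper argues by a sandwich: Sanchez's greatest-solution bound gives $e \leq A^t \Box_{\rightarrow_G}^{\min} F(c)$, applying the increasing map $x \mapsto A \Box_{\min}^{\max} x$ yields $F(c) \leq F(F(c))$, and part 1 applied at the vector $F(c)$ gives the reverse inequality. You instead observe that $e$ witnesses the consistency of the system $A \Box_{\min}^{\max} x = F(c)$, and invoke Proposition \ref{proposition:appFreformulated} at the vector $F(c)$ to conclude $F(F(c)) = F(c)$ in one stroke. Your route is shorter and is not circular, since Proposition \ref{proposition:appFreformulated} is established (as a reformulation of Sanchez's theorem) before the present proposition; what the paper's longer route buys is that it uses only the greatest-solution half of Sanchez's result rather than the full consistency equivalence, and it exhibits the inequality $F(c) \leq F(F(c))$ explicitly, but both arguments ultimately rest on the same foundation.
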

\noindent The application $F$ being right-continuous  at a point $c\in [0,1]^{n \times 1}$ means: for any sequence $(c^{(k)})$  in  $[0,1]^{n \times 1}$ such that  $(c^{(k)})$ converges to $c$ when  $k \rightarrow \infty$ and verifying $\forall k, c^{(k)} \geq c$, we have:  
\begin{center}
    $F(c^{(k)}) \rightarrow F(c)$ when $k \rightarrow \infty$.
\end{center}
\begin{proof}\mbox{}
\begin{enumerate}
    \item Let $i \in \{1,2,\dots,n\}$, we denote by ${F(c)}_i$ (resp. $c_i$) is the $i$-th component of the vector ${F(c)}$ (resp. $c$) and  we must prove ${F(c)}_i \leq c_i$. We have:
    \begin{align}
        {F(c)}_i &= \max_{1 \leq j \leq m} \min[ a_{ij}, \min_{1 \leq k \leq n} a_{kj} \rightarrow_G c_k ] \nonumber\\
        & \leq \max_{1 \leq j \leq m} \min[ a_{ij},  a_{ij} \rightarrow_G c_i ]
        \nonumber\\
        &= \max_{1 \leq j \leq m} \min[ a_{ij},  c_i ] \quad (\text{because } \min(x,x\rightarrow_G y) = \min(x,y)) \nonumber\\
        &\leq c_i.\nonumber
    \end{align}
    \item Consider the system $A \Box_{\min}^{\max} x = F(c)$. \\ By definition of the application $F$, we have $F(c) = A  \, \Box_{\min}^{\max} e$ with $e = A^t \Box_{\rightarrow_G}^{\min} c$. By Sanchez's result, we have: $$e \leq A^t  \Box_{\rightarrow_G}^{\min} F(c).$$
    From (\ref{eq:map:amaxminx}) we get:
    \begin{equation*}
        F(c) =  A  \Box_{\min}^{\max} e \leq A \Box_{\min}^{\max} (A^t  \Box_{\rightarrow_G}^{\min} F(c)) =F(F(c)).
    \end{equation*}
    \noindent But from the first statement of (Proposition \ref{prop:threestatementF}), we know that $F(F(c)) \leq F(c)$. Therefore, we have $F(F(c))=F(c)$. 
    \item 
    This follows from the fact that for a fixed $x \in [0 , 1]$, the map $y \mapsto (x \rightarrow_G y)$ is right-continuous.
\end{enumerate}
\end{proof}
\noindent We illustrate the use of the application $F$:
\begin{example}
(continued) 
Based on the computations in the previous example, we check that $F(b) = b$. 
Let $c = \begin{bmatrix}0.36\\ 0.57\\ 0.24\end{bmatrix}$. The potential greatest solution of the system $A \Box_{\min}^{\max} x = c$ is $\begin{bmatrix}0.24\\ 0.36\\ 0.36\end{bmatrix}$. We have $F(c) = \begin{bmatrix}0.36\\ 0.36\\ 0.24\end{bmatrix} \neq c$, so the system $A \Box_{\min}^{\max} x = c$  is not consistent. 
\end{example}

\section{Chebyshev distance associated to the second member of the system \texorpdfstring{$(S)$}{(S)}}
\label{sec:chebyshev}
In this section, we give an analytical method for computing the Chebyshev distance associated to the second member of the system $(S)$, see (\ref{eq:sys}).  For this purpose, we begin by giving some notations and studying two inequalities involved in the computation of this Chebyshev distance. We relate the fundamental result (Theorem 1 of \cite{cuninghame1995residuation}) to the properties of the studied inequalities. 
 This allows us to give an explicit formula for computing the Chebyshev distance associated to the second member $b$ of the system $(S)$. 

\subsection{Notations}
For $x,y,z,u,\delta \in [0,1]$, we use the following notations:
\begin{itemize}
    \item $x^+ = \max(x,0)$,
    \item $\overline{z}(\delta) = \min(z+\delta,1)$, 
    \item $\underline{z}(\delta) = \max(z-\delta,0) = (z-\delta)^+$.
\end{itemize}

\noindent We remark that we have the following equivalence in $[0,1]$: 
\begin{equation}\label{ineq:xyxbarybar}
    \mid x - y \mid \leq \delta \Longleftrightarrow \underline{x}(\delta) \leq y \leq \overline{x}(\delta).
\end{equation}

\noindent For our work, to the second member $b = [b_i]_{1 \leq  i \leq n}$ of the system $(S)$ and a number $\delta \in [0,1]$, we associate two vectors:
    \begin{equation}\label{def:bstarhautbas} 
  \underline{b}(\delta) = [(b_i  - \delta)^+]_{1 \leq  i \leq n} \quad \text{and} \quad \overline{b}(\delta) = [\min(b_i  + \delta , 1)]_{1 \leq  i \leq n}.\end{equation}
\noindent These vectors $\underline{b}(\delta)$ and $\overline{b}(\delta)$ were already introduced e.g.,  in  \cite{cuninghame1995residuation} (with others notations) and in \cite{li2010chebyshev}. \\

\noindent Then, from (\ref{ineq:xyxbarybar}), we deduce for any $c = [c_i]_{1\leq i \leq n} \in [0,1]^{n \times 1}$: 
\begin{equation}\label{ineq:bbbar}
    \Vert b - c \Vert \leq \delta \Longleftrightarrow \underline{b}(\delta) \leq c \leq \overline{b}(\delta).
\end{equation}
\noindent where $ \Vert b - c \Vert = \max_{1 \leq i \leq n}\mid b_i-c_i\mid$.

\subsection{Preliminaries}
\noindent  Let $x,y,z,u \in [0,1]$ be fixed. We study how to obtain the smallest value of $\delta \in [0,1]$ so that the following inequality is true:
\begin{equation*}
  (x-\delta)^+ \leq y.
\end{equation*}

\begin{proposition}\label{proposition:ineq1}
For any $\delta\in [0, 1]$, we have: \begin{equation}\label{ineq:gen:first}(x-\delta)^+ \leq y\Longleftrightarrow (x - y)^+ \leq \delta.\end{equation}
\end{proposition}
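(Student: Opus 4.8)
The plan is to observe that each side of the claimed equivalence is nothing more than a rewriting of the single inequality $x - \delta \le y$, once we exploit that both $y$ and $\delta$ are nonnegative. The key elementary fact I would invoke is that for any real $a$ and any nonnegative threshold $t$,
\begin{equation*}
  \max(a,0) \le t \Longleftrightarrow a \le t,
\end{equation*}
since the constraint $0 \le t$ coming from the maximum is automatically satisfied. Both $y$ and $\delta$ lie in $[0,1]$, so this fact applies to each side of \eqref{ineq:gen:first} with $t = y$ and $t = \delta$ respectively.

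First I would treat the left-hand side: by definition $(x-\delta)^+ = \max(x-\delta,0)$, and because $y \ge 0$ the displayed fact gives
\begin{equation*}
  (x-\delta)^+ \le y \Longleftrightarrow x - \delta \le y.
\end{equation*}
Next I would treat the right-hand side in exactly the same way: since $(x-y)^+ = \max(x-y,0)$ and $\delta \ge 0$, we obtain
\begin{equation*}
  (x-y)^+ \le \delta \Longleftrightarrow x - y \le \delta.
\end{equation*}

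Finally I would close the argument by noting that the two surviving inequalities $x - \delta \le y$ and $x - y \le \delta$ are trivially equivalent, each being obtained from the other by transposing $\delta$ and $y$ across the inequality. Chaining the three equivalences then yields \eqref{ineq:gen:first}. Alternatively, to keep everything symmetric, I could give a direct two-sided argument: assuming $(x-\delta)^+ \le y$, the bound $x - \delta \le (x-\delta)^+ \le y$ forces $x - y \le \delta$, and together with $0 \le \delta$ this gives $(x-y)^+ \le \delta$; the converse is identical after swapping the roles of $y$ and $\delta$. I do not anticipate any genuine obstacle here, as the statement is purely a matter of unwinding the definition of $(\cdot)^+$ and using the nonnegativity of the thresholds; the only point requiring a word of care is making explicit that the clipping at $0$ is harmless precisely because $y \ge 0$ and $\delta \ge 0$.
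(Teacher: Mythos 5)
Your proposal is correct and is essentially the paper's own argument in a slightly reorganized form: the paper's chain of equivalences (subtracting $y$, distributing the $\max$, and discarding the branch $-y\leq 0$) uses exactly your two observations that $y\geq 0$ and $\delta\geq 0$ make the clipping at $0$ harmless on each side. Nothing is missing.
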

\begin{proof}
For any $\delta\in [0, 1]$, we have:
\begin{align}
 (x-\delta)^+ \leq y & \Longleftrightarrow  \max(x - \delta, 0) - y \leq 0 \nonumber\\
 &   \Longleftrightarrow \max(x - y - \delta, - y)   \leq 0 \nonumber\\
  &   \Longleftrightarrow x - y - \delta     \leq 0 \nonumber\\
  &   \Longleftrightarrow x - y \leq \delta        \nonumber\\
  &   \Longleftrightarrow (x - y)^+ \leq \delta.        \nonumber
\end{align}    
\end{proof}
\noindent
We deduce from  (Proposition \ref{proposition:ineq1}) that we have: 
\begin{equation}\label{ineq:P1}
    (x - y)^+ = \min\{\delta\in [0, 1] \,\mid\, (x-\delta)^+ \leq y\}.
\end{equation} 
\noindent Let us study the solving of the following inequality that involves the Gödel implication $\rightarrow_{G}$, see (\ref{eq:implication:godel}): \begin{equation}
    \underline{x}(\delta) \leq y \rightarrow_{G} \overline{z}(\delta), 
\end{equation}
\noindent where: $
    y \rightarrow_{G} \overline{z}(\delta) = \begin{cases}1 & \text{ if } y - z \leq \delta \\
z+ \delta & \text{ if } y -z > \delta\end{cases}.$\\
\noindent Let:
\begin{equation}
    \sigma_G(x, y, z) = \min( \frac{(x - z)^+}{2},(y - z)^+).
\end{equation}

\noindent Then:
\begin{proposition}\label{proposition:ineq2}
For any $\delta\in [0, 1]$,  we have:
\begin{equation}
    \label{ineq:gen:tm}
    \underline{x}(\delta) \leq y \rightarrow_{G} \overline{z}(\delta)
    \Longleftrightarrow \sigma_G(x, y, z) \leq \delta.
\end{equation}
\end{proposition}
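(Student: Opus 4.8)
The plan is to reduce both sides of the claimed equivalence to elementary inequalities between real numbers and then match them by a single case split on the sign of $y-z-\delta$. First I would unfold the right-hand side. Since a minimum is $\leq \delta$ exactly when one of its two arguments is, the condition $\sigma_G(x,y,z) \leq \delta$ is equivalent to the disjunction
\[
\tfrac{(x-z)^+}{2} \leq \delta \quad \text{or} \quad (y-z)^+ \leq \delta .
\]
Because $\delta \geq 0$, each truncation $(\cdot)^+$ can be dropped here: $(y-z)^+ \leq \delta \Longleftrightarrow y-z \leq \delta$, and $\tfrac{(x-z)^+}{2}\leq \delta \Longleftrightarrow x-z \leq 2\delta$. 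Thus the right-hand side becomes simply ``$y - z \leq \delta$ \ or \ $x - z \leq 2\delta$''.

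Next I would unfold the left-hand side using the case formula for $y \rightarrow_{G} \overline{z}(\delta)$ displayed just before the statement, splitting on whether $y-z \leq \delta$. In the case $y-z \leq \delta$, one has $y \rightarrow_{G} \overline{z}(\delta) = 1$, so $\underline{x}(\delta) \leq 1$ holds trivially; on the right-hand side the disjunct $y-z \leq \delta$ is satisfied, so both sides are true and the equivalence holds. In the case $y-z > \delta$, one has $y \rightarrow_{G} \overline{z}(\delta) = z+\delta$, and the left-hand side reads $(x-\delta)^+ \leq z+\delta$. Since $z+\delta \geq 0$, the clamp in $(x-\delta)^+ = \max(x-\delta,0)$ is inactive for this inequality, which is therefore equivalent to $x-\delta \leq z+\delta$, i.e.\ $x-z \leq 2\delta$. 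Meanwhile, on the right-hand side the disjunct $y-z\leq\delta$ now fails, so the disjunction collapses to exactly $x-z \leq 2\delta$. The two sides coincide again, and combining the two cases yields the stated equivalence.

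The only delicate points are bookkeeping ones, and they constitute the main (mild) obstacle: I must verify that the truncations $(\cdot)^+$ and the cap $\min(\cdot,1)$ inside $\overline{z}(\delta)$ never spoil these reductions. The cap is handled by the derivation of the case formula itself (using $y \leq 1$ so that $y \leq \min(z+\delta,1) \Leftrightarrow y \leq z+\delta$), while dropping $(\cdot)^+$ is legitimate precisely against $\delta \geq 0$ and against $z+\delta \geq 0$. This last step is also where the factor $\tfrac{1}{2}$ appearing in $\sigma_G$ is produced, through the chain $(x-\delta)^+ \leq z+\delta \Leftrightarrow x-z \leq 2\delta \Leftrightarrow \tfrac{(x-z)^+}{2} \leq \delta$, so I would take particular care to state that equivalence explicitly.
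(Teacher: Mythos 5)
Your proof is correct, and it reaches the statement by a different decomposition than the paper's. You first normalize the right-hand side, rewriting $\sigma_G(x,y,z)\leq\delta$ as the disjunction ``$y-z\leq\delta$ or $x-z\leq 2\delta$'' (using that a minimum is $\leq\delta$ iff one of its arguments is, and that the truncations $(\cdot)^+$ can be dropped against $\delta\geq 0$), and then make a \emph{single} case split on whether $y-z\leq\delta$, checking that in each case both sides of the biconditional reduce to the same elementary condition: both trivially true when $y-z\leq\delta$, and both equivalent to $x-z\leq 2\delta$ when $y-z>\delta$ (where $z+\delta\geq 0$ discharges the clamp in $(x-\delta)^+$). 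The paper instead proves the two implications separately, each with its own chain of case eliminations: for $\Longrightarrow$ it first disposes of $y\leq z$, $x\leq z$, then $y-z\leq\delta$, and in the remaining core case computes $\max(x-z-2\delta,\,-z-\delta)\leq 0$ to extract $x-z\leq 2\delta$; for $\Longleftarrow$ it disposes of $(x-\delta)^+=0$ and $y\rightarrow_G\overline{z}(\delta)=1$, then runs essentially the same computation in reverse. The underlying facts are identical in both arguments --- the displayed case formula for $y\rightarrow_G\overline{z}(\delta)$ and the equivalence $\max(x-\delta,0)\leq z+\delta \Longleftrightarrow x-z\leq 2\delta$, which is where the factor $\tfrac12$ in $\sigma_G$ originates --- but your organization proves the equivalence in one pass and avoids duplicating overlapping case analyses across the two directions, at the price of having to keep equivalences (rather than one-way implications) intact at every step; the paper's two-directional version is more pedestrian but each step only needs to go one way.
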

\begin{proof}\mbox{}\\
\noindent $\Longrightarrow$ Let us assume    $ \underline{x}(\delta) \leq  y  \rightarrow_G \overline z(\delta)$ and  prove $\sigma_G(x,y,z) \leq \delta$.

\noindent
We remark that:
\begin{itemize}
    \item  If $y \leq z$ or $x \leq z$, then  $\sigma_G(x, y, z) = 0  \leq \delta$. It remains for us to study the case where $y > z$ and  $x > z$.
    \item If $y - z \leq  \delta $, then $\sigma_G(x, y, z)  =  \min(\frac{ x - z }{2}, y -  z) \leq y - z \leq  \delta $. \\
It remains for us to study the case where $y - z > \delta$.
\end{itemize}

\noindent
We have:
\begin{align}\underline{x}(\delta) -   (y  \rightarrow_G \overline z(\delta)) & = \max(x - \delta, 0) - z - \delta \nonumber\\
&= \max(x - \delta  - z - \delta, - z - \delta)   \nonumber\\
&= \max(x - z - 2\delta  , - z - \delta) \leq 0, \nonumber\end{align}
so  $x - z - 2\delta  \leq 0$  and $\sigma_G(x, y, z)  =  \min(\frac{ x - z }{2}, y -  z) \leq \frac{ x - z }{2} \leq \delta$.\\

\noindent $\Longleftarrow$ Let us assume  $ \sigma_G(x, y, z)  \leq \delta$ and prove $ \underline{x}(\delta) \leq  y  \rightarrow_G \overline z(\delta)$. \\
If $(x - \delta)^+ = 0$ or 
$y  \rightarrow_G \overline z(\delta) = 1$, we  trivially get the inequality $ (x - \delta)^+ \leq  y  \rightarrow_G \overline z(\delta)$. It remains for us to study the case where $(x - \delta)^+  = x - \delta > 0$ and
$y  \rightarrow_G \overline z(\delta) < 1$.

\noindent
From the inequality  $y  \rightarrow_G \overline z(\delta) < 1$, we deduce:
$$y - z > \delta \text{ and }  y  \rightarrow_G \overline z(\delta) = z + \delta.$$
As   $\sigma_G(x, y, z) =   \min(\dfrac{(x - z)^+}{2}, (y -  z)^+) =   \min(\dfrac{(x - z)^+}{2}, y -  z) \leq \delta$, we obtain: 
$$\sigma_G(x, y, z)=\dfrac{(x - z)^+}{2} \leq \delta.$$
This last inequality  is equivalent to:
$$\max(x - z  - 2\delta, -2\delta) \leq 0.$$
So $x - z  - 2\delta \leq 0$, which implies: 
$$(x - \delta)^+ =  x - \delta  \leq z + \delta  = y  \rightarrow_G \overline z(\delta).$$ 
\end{proof}
\noindent
We deduce from  (Proposition \ref{proposition:ineq2}) that we have:
\begin{equation}\label{ineq:P2}
    \sigma_G(x, y, z) =    \min\{\delta\in [0, 1] \,\mid\, \underline{x}(\delta) \leq  y  \rightarrow_G \overline z(\delta)\}.
\end{equation}
\noindent We illustrate this result:
\begin{example}
Let $x = 0.56,  y = 0.87$ and $z = 0.36$. We want  to obtain the smallest value of $\delta \in [0,1]$ so that $ \underline{x}(\delta) \leq y \rightarrow_{G} \overline{z}(\delta)$ is true. We have $y \rightarrow_G z = z$ and $x > z$.
\begin{align*}
    \delta &\!\begin{aligned}[t]
    &= \sigma_G(x, y, z) \\
     &=\min( \frac{(x - z)^+}{2},(y - z)^+)\\
     &=\min( \frac{(0.56 - 0.36)^+}{2},(0.87 - 0.36)^+)\\
     &=\min( \frac{0.20}{2}, 0.51) \\
     &= 0.10.
     \end{aligned}
\end{align*}
\noindent We have $\underline{x}(\delta)=x - 0.10 = 0.46$ and $\overline{z}(\delta)=z+0.10 = 0.46$. Therefore:
\begin{equation*}
   \underline{x}(\delta) = y \rightarrow_G \overline{z}(\delta).
\end{equation*}
\end{example}

\subsection{Analytical formula for computing the Chebyshev distance associated to the second member of the system \texorpdfstring{$(S)$}{(S)}} 
\label{sec:tool:cheb}

\noindent To the matrix $A$ and the vector $b$ of the system $(S)$, let us  associate  the set of  vectors $c = [c_i] \in [0,1]^{n \times 1}$ such that the system $A \Box_{\min}^{\max} x = c$ is consistent:
\begin{equation}\label{def:setofsecondmembersB}
    \mathcal{C} = \{ c = [c_i] \in {[0,1]}^{n \times 1} \mid  A \Box_{\min}^{\max} x = c \text{ is consistent} \}.
\end{equation}
\noindent This set allows us to define the Chebyshev distance associated to the second member $b$ of the system $(S)$.
 \begin{definition}\label{def:chebyshevdist}
The Chebyshev distance associated to the second member $b$ of the system $(S): A \Box_{\min}^{\max}x = b$ is: 
\begin{equation}\label{eq:delta}
    \Delta = \Delta(A,b) =  \inf_{c \in \mathcal{C}} \Vert b - c \Vert 
    \end{equation}

    \end{definition}
\noindent where:
\[ \Vert b - c \Vert = \max_{1 \leq i \leq n}\mid b_i - c_i\mid.\]

We have the following fundamental result,  already proven in \cite{cuninghame1995residuation}: 

\begin{equation}\label{eq:delta1}
    \Delta =        \min\{\delta\in [0, 1] \mid \underline b(\delta) \leq F(\overline b(\delta))\}.
    \end{equation}

\noindent In the following, using only (\ref{eq:delta1}), we prove that the Chebyshev distance $\Delta$ associated to  the second member $b$ of the system $(S)$ is given by the following formula:
\begin{theorem}\label{th:Deltamin}
\begin{equation} \label{eq:Deltamin}
\Delta = \max_{1 \leq i \leq n}\,\delta_i
\end{equation}
where for $i = 1, 2, \dots n$: 
\begin{equation} \label{eq:Deltamini} 
\delta_i =  \min_{1 \leq j \leq m}\,\max[ (b_i - a_{ij})^+,  \max_{1 \leq k \leq n}\,  \,\sigma_G\,(b_i, a_{kj}, b_k)].
\end{equation}
\end{theorem}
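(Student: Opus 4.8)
The plan is to start from the fundamental characterization (\ref{eq:delta1}), namely $\Delta = \min\{\delta \in [0,1] \mid \underline{b}(\delta) \leq F(\overline{b}(\delta))\}$, and to reduce the single scalar $\Delta$ to a componentwise threshold computation by exploiting that both sides of the defining inequality are monotone in $\delta$. First I would expand the vector inequality into its $n$ scalar constraints, writing out $F(\overline{b}(\delta))_i$ explicitly via (\ref{eq:F}):
\[
(b_i - \delta)^+ \leq \max_{1 \leq j \leq m} \min\Bigl[a_{ij},\, \min_{1 \leq k \leq n}(a_{kj} \rightarrow_G \overline{b_k}(\delta))\Bigr], \quad i = 1, \ldots, n,
\]
where $\overline{b_k}(\delta) = \min(b_k + \delta, 1)$. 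For each $i$ the left-hand side is nonincreasing in $\delta$, while, since $F$ is increasing (Proposition \ref{prop:threestatementF}) and $\delta \mapsto \overline{b}(\delta)$ is nondecreasing, the right-hand side is nondecreasing in $\delta$. Hence the set of $\delta$ satisfying the $i$-th constraint is up-closed, with infimum some $\delta_i$. The full vector inequality holds exactly on the intersection of these up-closed sets, whose infimum is $\max_{1 \leq i \leq n} \delta_i$; combined with (\ref{eq:delta1}) this yields $\Delta = \max_{1 \leq i \leq n} \delta_i$, and it remains to identify $\delta_i$ with the claimed formula.

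To compute $\delta_i$ I would rewrite the $\max$ and $\min$ operators as quantifiers over thresholds. Since $\mathrm{LHS} \leq \max_j(\cdots)$ holds iff there is a column $j$ with $\mathrm{LHS} \leq \min[a_{ij},\, \min_k(a_{kj} \rightarrow_G \overline{b_k}(\delta))]$, the $i$-th constraint is a disjunction over $j$ of the conjunctions
\[
(b_i - \delta)^+ \leq a_{ij} \quad \text{and} \quad (b_i - \delta)^+ \leq a_{kj} \rightarrow_G \overline{b_k}(\delta) \ \text{ for all } k.
\]
Each atomic constraint is monotone in $\delta$, so its valid set is of the form $[\tau, 1]$ with $\tau$ given by (Proposition \ref{proposition:ineq1}) for the first, namely $\tau = (b_i - a_{ij})^+$, and by (Proposition \ref{proposition:ineq2}) for each of the others, namely $\tau = \sigma_G(b_i, a_{kj}, b_k)$. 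A conjunction of threshold constraints becomes satisfiable from the maximum of the thresholds onward, and a disjunction from the minimum onward; applying these two bookkeeping rules gives
\[
\delta_i = \min_{1 \leq j \leq m} \max\Bigl[(b_i - a_{ij})^+,\, \max_{1 \leq k \leq n} \sigma_G(b_i, a_{kj}, b_k)\Bigr],
\]
which is exactly (\ref{eq:Deltamini}).

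The routine content is the two-line ``monotone constraint $\Rightarrow$ up-closed threshold interval'' observation and the quantifier rewriting of $\max$ and $\min$. The delicate point I would treat with care is the legitimacy of turning each infimum over $\delta$ into an \emph{attained} minimum and of interchanging it with the lattice operations: this is where I must invoke that the atomic thresholds of (Proposition \ref{proposition:ineq1}) and (Proposition \ref{proposition:ineq2}) are genuinely attained, as asserted by (\ref{ineq:P1}) and (\ref{ineq:P2}), and that the right-continuity of $F$ together with the continuity of $\delta \mapsto \overline{b}(\delta)$ (Proposition \ref{prop:threestatementF}) guarantees that each per-component constraint is still satisfied \emph{at} its threshold $\delta_i$. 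This ensures the intersection of the valid sets is $[\max_i \delta_i, 1]$ with the endpoint included, so that the minimum in (\ref{eq:delta1}) is attained at $\max_i \delta_i$ rather than merely approached.
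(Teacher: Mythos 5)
Your proof is correct and follows essentially the same route as the paper's: decompose the defining inequality of (\ref{eq:delta1}) componentwise, rewrite each component constraint as a disjunction over $j$ of conjunctions of atomic constraints (this is the paper's Lemma \ref{L2}), and convert each atomic constraint into an exact threshold via Propositions \ref{proposition:ineq1} and \ref{proposition:ineq2}, so that $\Delta = \max_i \min_j \max[\cdots]$. The only difference is cosmetic: your appeal to the right-continuity of $F$ for attainment is superfluous, since those propositions are stated as exact equivalences valid for every $\delta$, hence each atomic valid set is the closed interval $[\tau,1]$ and finite unions and intersections of such sets contain their thresholds automatically --- which is exactly how the paper secures that the minimum in (\ref{eq:delta1}) is attained.
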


\noindent To prove this formula, let us first introduce some notations and a lemma:
\begin{notation}
\end{notation}
\noindent  For $1\leq i,k \leq n$ and  $1 \leq j \leq m$ let:
\begin{itemize}
    \item $K_i =  \{\delta \in [0, 1] \mid  {\underline{b}(\delta)}_i \leq {F(\overline b(\delta))}_i \}$, where  ${\underline{b}(\delta)}_i$ (resp. ${F(\overline b(\delta))}_i$) is the $i$-th component of the vector ${\underline{b}(\delta)}$ (resp. ${F(\overline b(\delta))}$),
    \item $\beta_j  = \min_{1\leq   k \leq n}\, a_{kj} \rightarrow_G  {\overline{b}(\delta)}_k$ where  ${\overline{b}(\delta)}_k$  is the $k$-th component of the vector ${\overline{b}(\delta)}$,
    \item $D_{ij}^A = \{\delta \in [0, 1] \mid \underline b(\delta)_i \leq a_{ij}\}$,
    \item $D_{ij}^\beta = \{\delta \in [0, 1] \mid \underline b(\delta)_i \leq \beta_j\}$,
    \item $D_{ijk} = \{\delta \in [0, 1] \mid \underline b(\delta)_i \leq  a_{kj} \rightarrow_G \overline{b}(\delta)_k\}$. 
\end{itemize}

\begin{lemma}\label{L2}
We have: \begin{equation*}\label{eq:deltamin}K_i  =  \bigcup_{1\leq j \leq m}\, D_{ij}^A\,\cap \,D_{ij}^\beta 
\text{ and }
  D_{ij}^\beta = \bigcap_{1 \leq k \leq n}\, D_{ijk}.
\end{equation*}
\end{lemma}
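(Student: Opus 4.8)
The plan is to unfold the definition of the application $F$ componentwise and then translate the two componentwise inequalities defining $K_i$ and $D_{ij}^\beta$ into set-theoretic statements about unions and intersections, using only the elementary behaviour of $\max$ and $\min$ against a single threshold. No continuity or idempotence property of $F$ is needed; the content is purely the interplay between the lattice operations and the logical quantifiers $\exists$ and $\forall$.

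First I would recall from the proof of Proposition \ref{prop:threestatementF} the explicit expression of the $i$-th component of $F$ applied to any vector $c$, namely $F(c)_i = \max_{1 \leq j \leq m} \min[\, a_{ij},\, \min_{1 \leq k \leq n} a_{kj} \rightarrow_G c_k\,]$. Applying this to $c = \overline{b}(\delta)$ and recognizing the inner minimum as exactly $\beta_j = \min_{1 \leq k \leq n} a_{kj} \rightarrow_G \overline{b}(\delta)_k$, I obtain the compact form $F(\overline{b}(\delta))_i = \max_{1 \leq j \leq m} \min(a_{ij}, \beta_j)$. This identification is the one genuinely nontrivial step; once it is in hand, both claimed equalities reduce to manipulations of scalar inequalities.

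For the first equality, $\delta \in K_i$ means $\underline{b}(\delta)_i \leq \max_{1 \leq j \leq m} \min(a_{ij}, \beta_j)$. Since the maximum ranges over a finite set and is therefore attained, this holds if and only if there exists an index $j$ with $\underline{b}(\delta)_i \leq \min(a_{ij}, \beta_j)$, which in turn is equivalent to having both $\underline{b}(\delta)_i \leq a_{ij}$ and $\underline{b}(\delta)_i \leq \beta_j$, that is, to $\delta \in D_{ij}^A \cap D_{ij}^\beta$. The existential quantifier over $j$ becomes the union, yielding $K_i = \bigcup_{1 \leq j \leq m} D_{ij}^A \cap D_{ij}^\beta$. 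For the second equality, $\delta \in D_{ij}^\beta$ means $\underline{b}(\delta)_i \leq \beta_j = \min_{1 \leq k \leq n} a_{kj} \rightarrow_G \overline{b}(\delta)_k$. A single threshold lies below a minimum exactly when it lies below every term, so this is equivalent to requiring $\underline{b}(\delta)_i \leq a_{kj} \rightarrow_G \overline{b}(\delta)_k$ for all $k$, i.e., $\delta \in D_{ijk}$ for every $k$; the universal quantifier becomes the intersection, giving $D_{ij}^\beta = \bigcap_{1 \leq k \leq n} D_{ijk}$.

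I do not anticipate a real obstacle here. The only place where one must be slightly careful is the ``$\leq \max$'' direction in the first equality, where the equivalence with the existence of a witnessing index $j$ relies on the maximum being over a finite index set and hence attained; the symmetric ``$\leq \min$'' step in the second equality is immediate in either direction. Apart from invoking the componentwise formula for $F$, the whole argument is a routine passage between the operations $\max$, $\min$ and the connectives $\exists$, $\forall$.
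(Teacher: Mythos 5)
Your proposal is correct and follows essentially the same route as the paper: unfold $F(\overline{b}(\delta))_i$ componentwise, recognize the inner minimum as $\beta_j$, and then convert the resulting scalar inequalities against a finite $\max$ (resp.\ $\min$) into a union over the existential index $j$ (resp.\ an intersection over the universal index $k$). The paper compresses these last translations into the phrase ``this implies directly,'' whereas you spell them out, including the (correct) remark that the finiteness of the index set is what makes the $\max$ attained; there is no substantive difference.
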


\begin{proof}
By definition of the function $F$, we have: 
$$F(\overline b(\delta))_i  = 
\max_{1 \leq j \leq m}\, \min(a_{ij}, \beta_j).$$ 

\noindent This implies directly that we have: 
$$K_i = \bigcup_{1\leq j \leq m}\,  D_{ij}^A \cap D_{ij}^\beta.$$
 
\noindent  As $\beta_j = \min_{1 \leq k \leq n}\, a_{kj} \rightarrow_G \overline{b}(\delta)_k$, we also have: 
 $$ D_{ij}^\beta = \bigcap_{1 \leq k \leq n}\, D_{ijk}.
  $$
 
\end{proof}
\noindent The proof of (Theorem \ref{th:Deltamin}) is given in the following.

\begin{proof} For any $i = 1, 2, \dots, n$ and $j = 1, 2, \dots, m$, we deduce from (Proposition \ref{proposition:ineq1}) and (Proposition \ref{proposition:ineq2}) that for any 
$\delta\in [0, 1]$, we have: 
$$\delta \in D_{ij}^A  
\,\Longleftrightarrow\,
 \delta \geq (b_i - a_{ij})^+ \text{ and }
\delta \in D_{ij}^\beta  
\,\Longleftrightarrow\,
 \delta \geq \max_{1 \leq k \leq n}\,  \,\sigma_G\,(b_i, a_{kj}, b_k).$$
Using (\ref{eq:deltamin}), we get: 
$$\delta \in K_i 
\,\Longleftrightarrow\,
\exists \, j\in\{1, 2, \dots, m\}  \text{ such that } \delta \geq \max[ (b_i - a_{ij})^+,  \max_{1 \leq k \leq n}\,  \,\sigma_G\,(b_i, a_{kj}, b_k) ].
$$
So, we obtain: 
$$\delta \in K_i 
\,\Longleftrightarrow\, 
 \delta \geq \min_{1 \leq j \leq m}\, \max[ (b_i - a_{ij})^+,  \max_{1 \leq k \leq n}\,  \,\sigma_G\,(b_i, a_{kj}, b_k) ].$$ 
As by definition $\delta \in K_i    
\,\Longleftrightarrow\,
\underline b(\delta)_i \leq F(\overline b(\delta))_i$ and $\Delta =         \min\{\delta\in [0, 1] \mid \underline b(\delta) \leq F(\overline b(\delta))\}$, see (\ref{eq:delta1}), we get: 
$$\Delta = \max_{1 \leq i \leq n}\, \min_{1 \leq j \leq m}\, \max[ (b_i - a_{ij})^+,  \max_{1 \leq k \leq n}\,  \,\sigma_G\,(b_i, a_{kj}, b_k) ].$$
\end{proof}
\noindent The following example illustrates the computation of the Chebyshev distance associated to the second member of the system $(S)$:
\begin{example}\label{ex:number4}
Let: 
\begin{equation}\label{ex:4:Aandb}
    A = 
        \begin{bmatrix}
0.03&0.38&0.26\\
0.98&0.10&0.03\\
0.77&0.15&0.85\\
\end{bmatrix}
\text{ and } b = \begin{bmatrix}
0.54 \\
0.13 \\
0.87 \\
\end{bmatrix}.
\end{equation}
\noindent We apply (Theorem \ref{th:Deltamin}). 
\noindent We compute:
\begin{align*}
    \delta_1 &\!\begin{aligned}[t]
    &=  \min_{1 \leq j \leq 3}\,\max[ (b_1 - a_{1j})^+,  \max_{1 \leq k \leq 3}\,  \,\sigma_G\,(b_1, a_{kj}, b_k)].\\
     \end{aligned}
\end{align*}
We have: \begin{equation*}
    [(b_1 - a_{1j})^+]_{1 \leq j \leq 3} = \begin{bmatrix}
        0.54 - 0.03\\
        0.54 - 0.38\\
        0.54 - 0.26\\
    \end{bmatrix}= \begin{bmatrix}
        0.51\\
        0.16\\
        0.28
    \end{bmatrix},
\end{equation*}
\begin{align*}
    [\sigma_G\,(b_1, a_{kj}, b_k)]_{1 \leq k \leq 3,1 \leq j \leq 3} &=
    \begin{bmatrix}
    \sigma_G\,(b_1, a_{11}, b_1) & \sigma_G\,(b_1, a_{12}, b_1) & 
    \sigma_G\,(b_1, a_{13}, b_1) \\
    \sigma_G\,(b_1, a_{21}, b_2) & 
    \sigma_G\,(b_1, a_{22}, b_2) & 
    \sigma_G\,(b_1, a_{23}, b_2) \\
    \sigma_G\,(b_1, a_{31}, b_3) & 
    \sigma_G\,(b_1, a_{32}, b_3) &
    \sigma_G\,(b_1, a_{33}, b_3) \\
    \end{bmatrix} \\
    &=
    \begin{bmatrix}
    \sigma_G\,(0.54, 0.03, 0.54) & 
    \sigma_G\,(0.54, 0.38, 0.54) & 
    \sigma_G\,(0.54, 0.26, 0.54)\\
    \sigma_G\,(0.54, 0.98, 0.13) &
    \sigma_G\,(0.54, 0.10, 0.13) & 
    \sigma_G\,(0.54, 0.03, 0.13) \\
    \sigma_G\,(0.54, 0.77, 0.87) &
    \sigma_G\,(0.54, 0.15, 0.87) &
     \sigma_G\,(0.54, 0.85, 0.87) \\
    \end{bmatrix}\\
    &= \begin{bmatrix}
0.0&0.0&0.0\\
0.205&0.0&0.0\\
0.0&0.0&0.0\\
\end{bmatrix}.
\end{align*}
Therefore:
\begin{equation*}
    \delta_1 = \min(\max(0.51,0.205),\max(0.16,0),\max(0.28,0))= 0.16.
\end{equation*}
Similarly, we obtain: $\delta_2 = 0$ and $\delta_3 = 0.02$. 
\noindent The Chebyshev distance associated to $b$  is $\Delta = \max(\delta_1,\delta_2,\delta_3) = 0.16$.

\end{example}

\section{Chebyshev approximations  of the second member of the system \texorpdfstring{$(S)$}{(S)}}
\label{sec:chebyshevapprox}
\noindent In this section, we  study the  Chebyshev approximations of the second member $b$ of the system $(S)$, which are vectors $c \in [0,1]^{n \times 1}$ such that $\Vert b - c \Vert = \Delta$ and the system $A \Box_{\min}^{\max} x = c$ is consistent. We show that there is a greater Chebyshev approximation that we compute. We give the definition of the set of minimal Chebyshev approximations, which will be useful to determine the structure of the set of Chebyshev approximations.

We define the set of Chebyshev approximations of $b$:
\begin{definition}
   The set of Chebyshev approximations of $b$ is defined using the set $\mathcal{C}$, see (\ref{def:setofsecondmembersB}), and the Chebyshev distance associated to $b$ (Definition \ref{def:chebyshevdist}): 
\begin{equation}\label{def:chebyshevsetapproxB} 
{\cal C}_{b}  = \{c \in {\cal C} \,\mid \,  \Vert b - c\Vert = \Delta(A ,  b)\}.
\end{equation} 
\end{definition}

\noindent In the following, to prove that the set  ${\cal C}_{b}$ is non-empty,  we show that the greatest Chebyshev approximation exists, according to the usual order relation between vectors of $[0,1]^{n \times 1}$.

\noindent 
\begin{proposition}\label{proposition:greatestcheb}
\mbox{}
\begin{enumerate}
    \item $F(\overline{b}(\Delta)) \in \mathcal{C}_b$,
    \item $\forall c \in {\cal C}_{b},\,  c \leq F(\overline{b}(\Delta)).$
\end{enumerate}
So, $F(\overline{b}(\Delta))$ is the greatest Chebyshev approximation of $b$.
\end{proposition}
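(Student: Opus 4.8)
The plan is to establish the two claims separately and then read off the greatest-element conclusion. For the first claim, $F(\overline{b}(\Delta)) \in \mathcal{C}_b$, I must check two things: that $F(\overline{b}(\Delta))$ lies in $\mathcal{C}$, and that its distance to $b$ equals $\Delta$. Membership in $\mathcal{C}$ is immediate from the idempotence of $F$: statement 2 of (Proposition \ref{prop:threestatementF}) gives $F(F(\overline{b}(\Delta))) = F(\overline{b}(\Delta))$, and by (Proposition \ref{proposition:appFreformulated}) a fixed point of $F$ is exactly a vector for which the associated system is consistent. So $F(\overline{b}(\Delta)) \in \mathcal{C}$ with no further work.

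To get $\Vert b - F(\overline{b}(\Delta)) \Vert = \Delta$, I would prove the two inequalities. For $\Vert b - F(\overline{b}(\Delta)) \Vert \leq \Delta$, the equivalence (\ref{ineq:bbbar}) reduces the task to the sandwich $\underline{b}(\Delta) \leq F(\overline{b}(\Delta)) \leq \overline{b}(\Delta)$. The right-hand inequality is precisely statement 1 of (Proposition \ref{prop:threestatementF}), namely $F(c) \leq c$ applied at $c = \overline{b}(\Delta)$; the left-hand inequality is the defining property of $\Delta$ read off from (\ref{eq:delta1}) at $\delta = \Delta$, using that the minimum there is attained. The reverse inequality $\Vert b - F(\overline{b}(\Delta)) \Vert \geq \Delta$ is then automatic: since $F(\overline{b}(\Delta)) \in \mathcal{C}$ and $\Delta = \inf_{c \in \mathcal{C}} \Vert b - c \Vert$, no element of $\mathcal{C}$ can be strictly closer to $b$ than $\Delta$. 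Combining the two bounds gives equality and completes the first claim.

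For the second claim, I would take an arbitrary $c \in \mathcal{C}_b$ and argue that it is dominated by $F(\overline{b}(\Delta))$. Since $\Vert b - c \Vert = \Delta$, the equivalence (\ref{ineq:bbbar}) yields in particular $c \leq \overline{b}(\Delta)$. Applying $F$, which is increasing by statement 3 of (Proposition \ref{prop:threestatementF}), gives $F(c) \leq F(\overline{b}(\Delta))$. Because $c \in \mathcal{C}$, the fixed-point characterisation (Proposition \ref{proposition:appFreformulated}) gives $F(c) = c$, whence $c \leq F(\overline{b}(\Delta))$. This is exactly the second claim, and together with the first it shows that $F(\overline{b}(\Delta))$ is the greatest element of $\mathcal{C}_b$.

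The proof is short because all the heavy lifting is delegated to the properties of $F$ and to (\ref{eq:delta1}); the only delicate point is the \emph{attainment} of the minimum in (\ref{eq:delta1}), i.e.\ that $\underline{b}(\Delta) \leq F(\overline{b}(\Delta))$ holds at $\delta = \Delta$ and not merely for $\delta > \Delta$. This is where right-continuity of $F$ would be needed were one to reprove (\ref{eq:delta1}) from scratch; since we import (\ref{eq:delta1}) as a stated fundamental result written with a genuine $\min$, the attainment is already guaranteed, and the argument goes through cleanly using only monotonicity, idempotence, $F(c) \leq c$, and the equivalence (\ref{ineq:bbbar}).
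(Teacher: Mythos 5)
Your proof is correct and follows essentially the same route as the paper's: membership of $F(\overline{b}(\Delta))$ in $\mathcal{C}$ via idempotence and the fixed-point characterisation, the sandwich $\underline{b}(\Delta) \leq F(\overline{b}(\Delta)) \leq \overline{b}(\Delta)$ obtained from (\ref{eq:delta1}) together with $F(c)\leq c$, the equivalence (\ref{ineq:bbbar}) plus the infimum definition of $\Delta$ to conclude the first claim, and monotonicity of $F$ with $F(c)=c$ for the second. Your explicit remark that the minimum in (\ref{eq:delta1}) is attained is a point the paper uses silently, but it does not constitute a different argument.
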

\begin{proof}
We deduce from (Proposition \ref{proposition:appFreformulated}) and (Proposition \ref{prop:threestatementF})  that $F(\overline{b}(\Delta)) \in \mathcal{C}$.
From (\ref{eq:delta1}) and (Proposition \ref{prop:threestatementF}) we deduce:
\[ \underline{b}(\Delta) \leq F(\overline{b}(\Delta)) \leq \overline{b}(\Delta).\]
\noindent From relation (\ref{ineq:xyxbarybar}),  we deduce $\Vert F(\overline{b}(\Delta)) - b\Vert \leq \Delta$. But $\Delta = \inf_{c \in \mathcal{C}} \Vert b - c \Vert$ (Definition  \ref{def:chebyshevdist}), then $\Vert F(\overline{b}(\Delta)) - b\Vert \geq \Delta$.
Finally, $\Vert F(\overline{b}(\Delta)) - b\Vert = \Delta$ i.e., $F(\overline{b}(\Delta)) \in \mathcal{C}_b$.\\

\noindent Let  $c$ be a vector in $\mathcal{C}_b$. As $\Vert b -c \Vert = \Delta$, we deduce $c \leq \overline{b}(\Delta)$. Using that $F$ is increasing (Proposition \ref{prop:threestatementF}), we have  $F(c) \leq F(\overline{b}(\Delta))$. But $F(c) =c$ (Proposition \ref{proposition:appFreformulated}), so $c \leq F(\overline{b}(\Delta))$.
\end{proof}
\noindent As a consequence of the first statement in (Proposition \ref{proposition:greatestcheb}), we have: 
\begin{corollary}
    \label{corollary:deltazero}
    $$\Delta = \min_{c \in \mathcal{C}} \Vert b - c \Vert.$$
   \[
       \Delta = 0 \Longleftrightarrow\text{ the system $(S)$ is consistent.} 
   \]
\end{corollary}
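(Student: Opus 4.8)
The plan is to prove the two assertions of Corollary~\ref{corollary:deltazero} separately, both leaning directly on the first statement of (Proposition~\ref{proposition:greatestcheb}), namely that $F(\overline{b}(\Delta)) \in \mathcal{C}_b$.

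For the first assertion, that the infimum in (Definition~\ref{def:chebyshevdist}) is actually attained and hence is a minimum, I would argue as follows. By definition $\Delta = \inf_{c \in \mathcal{C}} \Vert b - c \Vert$, so $\Delta$ is a lower bound for the quantities $\Vert b - c \Vert$ with $c \in \mathcal{C}$. It therefore suffices to exhibit a single $c \in \mathcal{C}$ realizing this value. But (Proposition~\ref{proposition:greatestcheb}) tells us precisely that $c := F(\overline{b}(\Delta))$ belongs to $\mathcal{C}_b \subseteq \mathcal{C}$, and by the definition of $\mathcal{C}_b$, see (\ref{def:chebyshevsetapproxB}), this means $\Vert b - c \Vert = \Delta$. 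Thus the infimum is attained at $c$, so $\Delta = \min_{c \in \mathcal{C}} \Vert b - c \Vert$.

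For the equivalence $\Delta = 0 \Longleftrightarrow (S)$ is consistent, I would prove both implications. If the system $(S)$ is consistent, then $b \in \mathcal{C}$ by the definition of $\mathcal{C}$, see (\ref{def:setofsecondmembersB}); hence $0 \leq \Delta \leq \Vert b - b \Vert = 0$, giving $\Delta = 0$. Conversely, suppose $\Delta = 0$. Using the first statement just proved, the value $\Delta$ is attained by some $c \in \mathcal{C}$, and indeed by $c = F(\overline{b}(0)) = F(b)$; since $\Delta = 0$ we have $\Vert b - c \Vert = 0$, so $c = b$. As $c \in \mathcal{C}$, the system $A \Box_{\min}^{\max} x = b$ is consistent, which is exactly the consistency of $(S)$.

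I do not anticipate a serious obstacle here, since the heavy lifting has already been done in (Proposition~\ref{proposition:greatestcheb}): the existence of an attaining vector $F(\overline{b}(\Delta))$ is what converts the infimum into a minimum and drives both directions of the equivalence. The only point requiring a little care is the reverse implication of the equivalence, where one must justify that $\Delta = 0$ forces the attaining vector to coincide with $b$ itself; this follows immediately from the fact that $\Vert b - c \Vert = \Delta = 0$ in the $L_\infty$ norm implies $c = b$, together with the membership $c \in \mathcal{C}$ guaranteeing consistency.
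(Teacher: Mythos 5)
Your proof is correct and follows exactly the route the paper intends: the corollary is stated there as an immediate consequence of the first statement of (Proposition \ref{proposition:greatestcheb}), since the membership $F(\overline{b}(\Delta)) \in \mathcal{C}_b$ turns the infimum into an attained minimum, and the equivalence with consistency then follows from $\Vert b - c \Vert = 0 \Rightarrow c = b$ together with the definition of $\mathcal{C}$. Nothing is missing; your handling of the reverse implication (identifying the attaining vector with $b$ itself, and noting $F(b)=b$ gives consistency via Proposition \ref{proposition:appFreformulated}) is precisely the care the paper leaves implicit.
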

\noindent Therefore, $\Delta = 0$ is a  necessary and sufficient condition for the system $(S)$ to be consistent. 

\noindent It is much more difficult to obtain minimal Chebyshev approximations of $b$. In a fairly abstract way, one can prove that the set: 
\begin{equation}\label{eq:minimalChebB}
{\cal C}_{b,\min}  = \{c \in{\cal C}_{b} \mid   c \,\text{minimal in ${\cal C}_{b}$}\}
\end{equation}
\noindent is non-empty and finite. In fact, we will show in the next section how to construct elements of ${\cal C}_{b,\min}$ and prove  that this set is finite.

\noindent We illustrate the computation of the greatest Chebyshev approximation of the second member of the system $(S)$:
\begin{example}\label{ex:example5}
(continued) 
We continue with the matrix $A$ and the vector $b$, see (\ref{ex:4:Aandb}), used in (Example \ref{ex:number4}). \\We remind that the Chebyshev distance associated to the second member $b$ of the system $A \Box_{\min}^{\max} x = b$  is $\Delta = 0.16$. 

\noindent From $b = \begin{bmatrix}
0.54 \\
0.13 \\
0.87 \\
\end{bmatrix}$, we compute $\overline{b}(\Delta) = \begin{bmatrix}
        0.70\\
        0.29\\
        1.00
    \end{bmatrix}$. \\
\noindent Then, the greatest Chebyshev approximation of $b$ is:
\begin{equation*}
 F(\overline{b}(\Delta)) =  A  \, \Box_{\min}^{\max} (A^t\, \Box_{\rightarrow_G}^{\min} \overline{b}(\Delta)) =  \begin{bmatrix}
0.38 \\
0.29\\
0.85 \\
\end{bmatrix}.
\end{equation*}
We check that the distance between the greatest  Chebyshev approximation  $\begin{bmatrix}
0.38 \\
0.29\\
0.85 \\
\end{bmatrix}$  and  $b = \begin{bmatrix}
0.54 \\
0.13 \\
0.87 \\
\end{bmatrix}$ is equal to $\Delta$.
\end{example}

\section{Relating the approximate solutions set to the Chebyshev approximations set}
\label{sec:relating}

In this section, we study the  approximate solutions set of the system $(S): A \Box_{\min}^{\max} x = b$, which we denote by $\Lambda_b$ and the set $\mathcal{C}_b$ of Chebyshev approximations of the second member $b$, see (\ref{def:chebyshevsetapproxB}). By definition, an approximate solution $x^\ast \in \Lambda_b$ is a column vector such that the vector $c= A \Box_{\min}^{\max} x^\ast$ is a Chebyshev approximation of $b$ i.e., $c \in \mathcal{C}_b$. Moreover, for all $c \in \mathcal{C}_b$,   the solutions of the  system $A \Box_{\min}^{\max} x=c$ belong to $\Lambda_b$ i.e., they are approximate solutions.

This section is structured as follows. We begin by defining the set $\Lambda_b$ and we relate it to the set $\mathcal{C}_b$ (Subsection \ref{subsec:approxsolset}). We show that the set $\Lambda_b$ is non-empty (Proposition \ref{prop:nonemptylambdab}) and has a greater element (Proposition \ref{prop:greatestaprox}). In (Subsection \ref{subsec:characterizations}), we then give a characterization of $\Lambda_b$ (Proposition \ref{proposition:firstlambdab}) and we describe the structure of $\Lambda_b$ in (Theorem \ref{th:2}).  These two results allow us to study the minimal elements of $\mathcal{C}_b$ and $\Lambda_b$ (Subsection \ref{subsec:minicheb}).  Using  the results and the method of \cite{matusiewicz2013increasing} to construct minimal elements of systems of $\max-\min$ inequalities, we show that the set of minimal Chebyshev approximations $\mathcal{C}_{b,\min}$ is non-empty and finite. We also give a finite set of minimal approximate solutions noted $\Lambda_{b,\min}$ associated to $\mathcal{C}_{b,\min}$ by the following equality: $\mathcal{C}_{b,\min} = \{ A \Box_{\min}^{\max}x \mid x \in \Lambda_{b,\min} \}$. Finally, we describe the structure of the set $\mathcal{C}_b$ of Chebyshev approximations of $b$ (Theorem \ref{th:3}).

\subsection{Approximate solutions set \texorpdfstring{$\Lambda_b$}{Λb}}
\label{subsec:approxsolset}
We introduce a new notation and a new application:
\begin{notation}
    $\Lambda = {[0,1]}^{m\times 1}$.
\end{notation}
\begin{proposition}

\begin{equation}
    \theta: \Lambda  \rightarrow \mathcal{C}: x \mapsto A\Box_{\min}^{\max} x
\end{equation}
\noindent where $\mathcal{C}$ is defined in ( \ref{def:setofsecondmembersB}). The application $\theta$ is a \textit{surjective} and increasing map from $\Lambda$ onto $\mathcal{C}$.

\begin{proof}
By (Lemma \ref{lemma:increasing}), we know that the map $\theta$ is increasing.  As any $c\in {\cal C}$ gives rise to a consistent system $A \Box_{\min}^{\max} x = c$, the map $\theta$ is surjective.
\end{proof}
   
\end{proposition}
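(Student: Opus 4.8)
The plan is to verify the three assertions---that the codomain is genuinely $\mathcal{C}$ (well-definedness), that $\theta$ is increasing, and that it is surjective---each of which reduces to unwinding the definition of consistency in (\ref{def:setofsecondmembersB}).

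First I would confirm that $\theta$ actually lands in $\mathcal{C}$. For an arbitrary $x \in \Lambda = [0,1]^{m \times 1}$, put $c := A \Box_{\min}^{\max} x$. By the very definition of $c$, the vector $x$ is a solution of the system $A \Box_{\min}^{\max} x = c$, so this system is consistent; hence $c \in \mathcal{C}$ by (\ref{def:setofsecondmembersB}). This shows $\theta$ is well-defined as a map $\Lambda \to \mathcal{C}$. Monotonicity is then immediate from (Lemma \ref{lemma:increasing}): the map $x \mapsto A \Box_{\min}^{\max} x$ is increasing with respect to the componentwise order, and $\theta$ is exactly this map with its codomain restricted to $\mathcal{C}$, so restricting the codomain does not affect the order-preservation property.

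Finally, for surjectivity I would take any $c \in \mathcal{C}$. By definition of $\mathcal{C}$, the system $A \Box_{\min}^{\max} x = c$ is consistent, so there exists some $v \in [0,1]^{m \times 1} = \Lambda$ with $A \Box_{\min}^{\max} v = c$, that is, $\theta(v) = c$. Thus every element of $\mathcal{C}$ admits a preimage under $\theta$, which is precisely surjectivity.

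I do not anticipate any genuine obstacle: the statement is essentially a reformulation of the fact that $\mathcal{C}$ is, by construction, the image of the max-min product map. The only substantive observation is that ``the system $A \Box_{\min}^{\max} x = c$ is consistent'' means exactly ``$c$ lies in the image of $\theta$,'' and the sole point often left tacit---and which I would state explicitly---is the well-definedness, namely that $\theta(x)$ always belongs to $\mathcal{C}$ rather than to the full space $[0,1]^{n \times 1}$.
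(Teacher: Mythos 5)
Your proposal is correct and follows essentially the same route as the paper: monotonicity is quoted from Lemma \ref{lemma:increasing}, and surjectivity is just the definition of $\mathcal{C}$ unwound. The only difference is that you explicitly verify well-definedness (that $\theta(x)$ lands in $\mathcal{C}$, witnessed by $x$ itself), a point the paper leaves tacit and records only afterwards as the remark $\mathcal{C} = \{\theta(x) \mid x \in \Lambda\}$.
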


\noindent We remark that:
\begin{itemize}
    \item We have  $\mathcal{C} = \{ \theta(x) \mid x \in \Lambda\}$.
    \item For any $u \in [0,1]^{n \times 1}$ we have: 
\begin{equation}\label{eq:reformulationidempot} F(u) = \theta(e) \text{ where } e = A^t \Box_{\rightarrow_G}^{\min} F(u). 
\end{equation}
\noindent  This is an equivalent reformulation of the idempotence property of the application $F$, see (Proposition \ref{prop:threestatementF}).
\end{itemize}
    
\noindent In the following, we  introduce  the subset $\Lambda_b \subseteq \Lambda$ 
which is  the reciprocal image of the set 
$\mathcal{C}_{b}$ by the map $\theta$, i.e.:
\begin{definition}\label{def:approxsetLambda}
The approximate solutions set of the system $(S)$ is:
    \begin{equation}
    \Lambda_b = \theta^{-1}(\mathcal{C}_{b}) = \{x\in \Lambda \mid \theta(x) \in \mathcal{C}_{b}\}. 
\end{equation}
\end{definition}

As $\theta : \Lambda   \rightarrow \mathcal{C}$ is a \textit{surjective} map, we have:
\begin{equation}\label{eq:cbtolambdab}
    \mathcal{C}_b = \{ \theta(x) \mid x \in \Lambda_b \}.
\end{equation}
\noindent We define:  
\begin{definition}\label{def:approxsol}
A vector  $x \in \Lambda_b$ is called an \textit{approximate solution} of the system  $(S)$. \\By definition of the set $\Lambda_b$, we have for all $x \in \Lambda$:
\begin{equation}\label{eq:approxsol}
    x \in \Lambda_b \Longleftrightarrow \Vert A \Box_{\min}^{\max} x - b \Vert = \Delta(A,b). 
\end{equation}
\noindent where $\Delta(A,b)$ is the Chebyshev distance associated to the second member $b$ of the system $(S)$, see (Theorem \ref{th:Deltamin}). 
\end{definition}
(see other definitions of approximate solutions using another choice of norms in \cite{WEN2022,wu2022analytical}). \\
We have:
\begin{proposition}\label{prop:nonemptylambdab}
    The approximate solutions set $\Lambda_b$ is non-empty.
\end{proposition}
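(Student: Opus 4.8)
The statement to prove is Proposition~\ref{prop:nonemptylambdab}: the approximate solutions set $\Lambda_b$ is non-empty. The plan is to exhibit an explicit element of $\Lambda_b$ by leveraging the greatest Chebyshev approximation constructed in Proposition~\ref{proposition:greatestcheb}. Recall that $\Lambda_b = \theta^{-1}(\mathcal{C}_b)$, so to show $\Lambda_b \neq \emptyset$ it suffices to find a single vector $x \in \Lambda$ with $\theta(x) = A \Box_{\min}^{\max} x \in \mathcal{C}_b$.

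First I would invoke Proposition~\ref{proposition:greatestcheb}, which tells us that $F(\overline{b}(\Delta)) \in \mathcal{C}_b$. In particular $\mathcal{C}_b$ is non-empty. Now I would use the surjectivity of $\theta$ recorded in equation~(\ref{eq:cbtolambdab}): since $F(\overline{b}(\Delta)) \in \mathcal{C}_b \subseteq \mathcal{C}$ and $\theta : \Lambda \rightarrow \mathcal{C}$ is surjective, there exists at least one $x \in \Lambda$ with $\theta(x) = F(\overline{b}(\Delta))$, whence $\theta(x) \in \mathcal{C}_b$ and therefore $x \in \Lambda_b$.

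More concretely, I would produce the witness by hand rather than appealing abstractly to surjectivity, since the construction is already implicit in the definition of $F$. Set $c = F(\overline{b}(\Delta))$ and let $x^\ast = A^t \Box_{\rightarrow_G}^{\min} c$ be the candidate greatest solution of the system $A \Box_{\min}^{\max} x = c$. By Proposition~\ref{proposition:appFreformulated}, since $c \in \mathcal{C}$ we have $F(c) = c$, i.e.\ the system is consistent; equivalently, by the reformulation in~(\ref{eq:reformulationidempot}), $\theta(x^\ast) = A \Box_{\min}^{\max} x^\ast = c$. Thus $\theta(x^\ast) = F(\overline{b}(\Delta)) \in \mathcal{C}_b$, and by Definition~\ref{def:approxsetLambda} we conclude $x^\ast \in \Lambda_b$, so $\Lambda_b \neq \emptyset$.

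I do not expect any serious obstacle here: the real work has already been done in Proposition~\ref{proposition:greatestcheb}, which established both that $F(\overline{b}(\Delta))$ lies in $\mathcal{C}$ (via consistency) and that it achieves distance exactly $\Delta$ from $b$ (hence lies in $\mathcal{C}_b$). The only point requiring a little care is to correctly tie together the three facts---$c \in \mathcal{C}_b$, the explicit preimage $x^\ast = A^t \Box_{\rightarrow_G}^{\min} c$, and the surjectivity/idempotence identity---so that the membership $x^\ast \in \Lambda_b$ follows cleanly from the definitions rather than by a vague appeal. Everything else is a direct unwinding of Definition~\ref{def:approxsetLambda} and equation~(\ref{eq:cbtolambdab}).
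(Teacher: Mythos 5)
Your proof is correct and follows essentially the same route as the paper: the paper's own proof is exactly the two-line argument that $\mathcal{C}_b \neq \emptyset$ by Proposition~\ref{proposition:greatestcheb} together with the surjectivity identity~(\ref{eq:cbtolambdab}). Your additional explicit witness $x^\ast = A^t \Box_{\rightarrow_G}^{\min} F(\overline{b}(\Delta))$ is precisely the vector $\eta$ that the paper introduces immediately afterwards in Proposition~\ref{prop:greatestaprox}, so it is a valid (and slightly more constructive) packaging of the same argument.
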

\begin{proof}
     As we know   by (Proposition \ref{proposition:greatestcheb}) that the set $\mathcal{C}_b$ is non-empty, we conclude by (\ref{eq:cbtolambdab}) that the set $\Lambda_b$ is also non-empty.  
\end{proof}

In fact, we have a particular (and important) element in $\Lambda_b$:
\begin{notation}
$\eta := A^t \Box_{\rightarrow_G}^{\min} F(\overline{b}(\Delta))$.
\end{notation}
 \begin{proposition}\label{prop:greatestaprox}
      The column vector  $\eta$ satisfies the equality $\theta(\eta)=F(\overline{b}(\Delta))$, therefore $\eta \in \Lambda_b$ and $\eta$  is the greatest approximate solution, i.e., the greatest element of $\Lambda_b$. 
 \end{proposition}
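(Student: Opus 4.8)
The plan is to prove the proposition in three stages: first verify the announced identity $\theta(\eta) = F(\overline{b}(\Delta))$, which will immediately place $\eta$ inside $\Lambda_b$; then invoke the already-established maximality of $F(\overline{b}(\Delta))$ in $\mathcal{C}_b$ together with Lemma \ref{lemma:appF} to show that every approximate solution is dominated by $\eta$. The whole argument is essentially an assembly of the machinery built up earlier, so I expect no deep obstacle, only the need to chain the pieces together correctly.

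For the identity, I would unfold the definitions. By definition of $\theta$ and of $\eta$ we have $\theta(\eta) = A \Box_{\min}^{\max} \eta = A \Box_{\min}^{\max}(A^t \Box_{\rightarrow_G}^{\min} F(\overline{b}(\Delta)))$, and the key observation is that, by the defining formula (\ref{eq:F}) of $F$, this last expression is exactly $F(F(\overline{b}(\Delta)))$. Applying the idempotence of $F$ (second statement of Proposition \ref{prop:threestatementF}) collapses it to $F(\overline{b}(\Delta))$, giving $\theta(\eta) = F(\overline{b}(\Delta))$. Since $F(\overline{b}(\Delta)) \in \mathcal{C}_b$ by the first statement of Proposition \ref{proposition:greatestcheb}, and $\Lambda_b = \theta^{-1}(\mathcal{C}_b)$ by Definition \ref{def:approxsetLambda}, membership $\eta \in \Lambda_b$ follows at once.

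For maximality, I would take an arbitrary $x \in \Lambda_b$ and set $c := \theta(x) = A \Box_{\min}^{\max} x$, which lies in $\mathcal{C}_b$ by definition of $\Lambda_b$. The second statement of Proposition \ref{proposition:greatestcheb} then gives $c \leq F(\overline{b}(\Delta))$. Now I would apply Lemma \ref{lemma:appF} with $c' := F(\overline{b}(\Delta))$: the hypothesis $A \Box_{\min}^{\max} x = c$ with $c \leq c'$ is exactly what the lemma requires, so its conclusion yields $x \leq A^t \Box_{\rightarrow_G}^{\min} c' = A^t \Box_{\rightarrow_G}^{\min} F(\overline{b}(\Delta)) = \eta$. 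Since $x$ was arbitrary, $\eta$ is the greatest element of $\Lambda_b$.

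The only delicate point, and thus the part I would state most carefully, is the correct invocation of Lemma \ref{lemma:appF}: one must match its hypothesis ``$A \Box_{\min}^{\max} v = c$ with $c \leq c'$'' to the situation at hand and recognize that $\eta$ is defined precisely as $A^t \Box_{\rightarrow_G}^{\min} c'$, so that the lemma's conclusion reads literally as $x \leq \eta$. Beyond this, the proof is a direct consequence of idempotence of $F$ and of the greatest-Chebyshev-approximation result, with no genuine computational difficulty.
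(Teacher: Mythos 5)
Your proof is correct and follows essentially the same route as the paper's own: the identity $\theta(\eta)=F(\overline{b}(\Delta))$ via idempotence of $F$, membership of $\eta$ in $\Lambda_b$ via Proposition \ref{proposition:greatestcheb} and Definition \ref{def:approxsetLambda}, and maximality by applying Lemma \ref{lemma:appF} with $c=\theta(x)$ and $c'=F(\overline{b}(\Delta))$. The only difference is cosmetic: you make explicit the unfolding $\theta(\eta)=F(F(\overline{b}(\Delta)))$ that the paper leaves implicit when it says the identity ``follows from the idempotence property of $F$.''
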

\begin{proof}
 \noindent The equality $\theta(\eta) = F(\overline{b}(\Delta))$  follows from  the idempotence property of the application $F$ (Proposition \ref{prop:threestatementF}). As  by (Proposition \ref{proposition:greatestcheb}), $F(\overline{b}(\Delta))\in {\cal C}_b$, we obtain that $\eta\in \Lambda_b$, see (Definition \ref{def:approxsetLambda}).

 \noindent Let us show $x \in \Lambda_b \Longrightarrow x \leq \eta$.  \\
Set $c = \theta(x)$ and $c' = F(\overline{b}(\Delta))$. As $c = \theta(x) \in \mathcal{C}_b$, we have $c \leq F(\overline{b}(\Delta)) =c'$ (Proposition \ref{proposition:greatestcheb}). We apply (Lemma \ref{lemma:appF}) to obtain $x \leq A^t \Box_{\rightarrow_G}^{\min} F(\overline{b}(\Delta)) = \eta$.\\
\end{proof}

\noindent
In what follows, we shall look  for a finite non-empty set denoted $\Lambda_{b,\min}$  of minimal approximate solutions, which satisfies: 
\begin{equation}\label{eq:rellambdabmincbmin}
    \Lambda_{b,\min} \subseteq \Lambda_b  \text{ and } \mathcal{C}_{b,\min} = \{ \theta(x) \mid x \in \Lambda_{b,\min} \}.
\end{equation}
The existence of such a set  $\Lambda_{b,\min}$,  which implies that the set  $\mathcal{C}_{b,\min}$ is also non-empty and finite, will be deduced from a    characterization  (Proposition \ref{proposition:firstlambdab})  of the set $\Lambda_b$ and a sharp result of \cite{matusiewicz2013increasing}  on  the solving of a system of inequalities.

\subsection{Characterizing  the approximate solutions set \texorpdfstring{$\Lambda_b$}{Λb}}
\label{subsec:characterizations}

\noindent We give the following first characterization of $\Lambda_b$:
\begin{proposition}\label{proposition:firstlambdab}
For any $x\in \Lambda$, we have:
    \begin{equation}\label{rel:lambdachevB}
 x \text{ is an approximate solution i.e., } x\in  \Lambda_{b}
\Longleftrightarrow 
\underline{b}(\Delta) \leq \theta(x) \text{ and } x \leq \eta.
 \end{equation} 
\end{proposition}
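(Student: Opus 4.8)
The plan is to unwind the definition of $\Lambda_b$ and reduce the claimed equivalence to the single norm condition $\Vert \theta(x) - b\Vert = \Delta$, then establish each direction using results already proven. Since $\theta$ maps $\Lambda$ into $\mathcal{C}$, every $x \in \Lambda$ satisfies $\theta(x) \in \mathcal{C}$ automatically; hence by (Definition \ref{def:approxsetLambda}) and (Definition \ref{def:approxsol}) the membership $x \in \Lambda_b$ is equivalent to $\Vert \theta(x) - b\Vert = \Delta$. The equivalence I will lean on throughout is (\ref{ineq:bbbar}), which translates $\Vert \theta(x) - b\Vert \leq \Delta$ into the double inequality $\underline{b}(\Delta) \leq \theta(x) \leq \overline{b}(\Delta)$.

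For the forward implication, I would assume $x \in \Lambda_b$, so $\Vert \theta(x) - b\Vert = \Delta$. Applying (\ref{ineq:bbbar}) immediately yields $\underline{b}(\Delta) \leq \theta(x)$ (it also gives $\theta(x) \leq \overline{b}(\Delta)$, but only the lower bound is needed here). The second condition $x \leq \eta$ is exactly the content of (Proposition \ref{prop:greatestaprox}), which identifies $\eta$ as the greatest element of $\Lambda_b$.

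For the backward implication, I would assume $\underline{b}(\Delta) \leq \theta(x)$ and $x \leq \eta$, and aim to recover $\Vert \theta(x) - b\Vert = \Delta$. From $x \leq \eta$ and the monotonicity of $\theta$ (Lemma \ref{lemma:increasing}), I get $\theta(x) \leq \theta(\eta) = F(\overline{b}(\Delta))$, the equality being (Proposition \ref{prop:greatestaprox}). Since $F(\overline{b}(\Delta)) \leq \overline{b}(\Delta)$ by the first statement of (Proposition \ref{prop:threestatementF}), this gives $\theta(x) \leq \overline{b}(\Delta)$. Combined with the hypothesis $\underline{b}(\Delta) \leq \theta(x)$, the double inequality $\underline{b}(\Delta) \leq \theta(x) \leq \overline{b}(\Delta)$ holds, so (\ref{ineq:bbbar}) gives $\Vert \theta(x) - b\Vert \leq \Delta$. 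The reverse inequality $\Vert \theta(x) - b\Vert \geq \Delta$ comes for free: $\theta(x) \in \mathcal{C}$ and $\Delta = \inf_{c \in \mathcal{C}} \Vert b - c\Vert$ by (Definition \ref{def:chebyshevdist}). Hence equality holds and $x \in \Lambda_b$.

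The only mild subtlety — not really an obstacle — is the asymmetry of the characterization: membership is pinned by a \emph{lower} bound on the image $\theta(x)$ but an \emph{upper} bound on $x$ itself. The point is that the constraint $x \leq \eta$ does double duty: through the increasing map $\theta$ and the inequality $F(\overline{b}(\Delta)) \leq \overline{b}(\Delta)$ it automatically enforces the upper bound $\theta(x) \leq \overline{b}(\Delta)$, so only the lower bound $\underline{b}(\Delta) \leq \theta(x)$ must be imposed separately. I would take care to state explicitly why $\theta(x) \in \mathcal{C}$ is automatic (it is the codomain of $\theta$), since that is precisely what makes the bound $\Vert \theta(x) - b\Vert \geq \Delta$ available in the backward direction.
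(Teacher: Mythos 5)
Your proof is correct and takes essentially the same route as the paper's: the forward direction invokes (Proposition \ref{prop:greatestaprox}) for $x \leq \eta$ and the norm equality for $\underline{b}(\Delta) \leq \theta(x)$, while the backward direction uses $\theta(x) \leq \theta(\eta) = F(\overline{b}(\Delta))$ together with the fact that $\theta(x) \in \mathcal{C}$ forces $\Vert b - \theta(x)\Vert \geq \Delta$. The only cosmetic difference is that you derive the upper bound $\theta(x) \leq \overline{b}(\Delta)$ from $F(\overline{b}(\Delta)) \leq \overline{b}(\Delta)$ (first statement of Proposition \ref{prop:threestatementF}), whereas the paper gets the same estimate $-\Delta \leq b_i - \theta(x)_i$ from the membership $F(\overline{b}(\Delta)) \in \mathcal{C}_b$ (Proposition \ref{proposition:greatestcheb}); the two justifications are interchangeable one-liners.
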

\begin{proof}\mbox{}\\
\noindent $\Longrightarrow$ 

\noindent $\bullet$ We know from  (Proposition \ref{prop:greatestaprox}) that $x \in \Lambda_b \Longrightarrow x \leq \eta$.

\noindent $\bullet$  Let us show $x \in \Lambda_b \Longrightarrow \underline{b}(\Delta) \leq \theta(x)$.\\
If $x \in \Lambda_b$, then $\Vert b-\theta(x) \Vert = \Delta$ which implies that for any $i \in \{1,2,\dots,n\}$, $b_i - {\theta(x)}_i \leq \Delta$  rewritten as $b_i - \Delta \leq {\theta(x)}_i$. As ${\theta(x)}_i \geq 0$, we deduce that:
\[ \forall i \in \{1,2,\dots,n\},  (b_i - \Delta)^+ \leq {\theta(x)}_i.\]

\noindent $\Longleftarrow$ \\We suppose $\underline{b}(\Delta) \leq \theta(x) \text{ and } x \leq \eta$ and we must prove $\Vert  b-\theta(x) \Vert = \Delta$. 
As $x \leq \eta$ and $\theta$ is increasing, we have $\theta(x) \leq \theta(\eta) = F(\overline{b}(\Delta))$. \\
\noindent As $F(\overline{b}(\Delta)) \in \mathcal{C}_b$ (Proposition \ref{proposition:greatestcheb}), then, for any $i \in \{1,2,\dots,n\}$, we have:
\[ -\Delta \leq b_i - {\theta(\eta)}_i \leq b_i - {\theta(x)}_i.  \]
\noindent On the other hand, $\underline{b}(\Delta) \leq \theta(x)$ implies that for any $i \in \{1,2,\dots,n\}$:
\[  b_i - {\theta(x)}_i \leq \Delta.  \]
\noindent In conclusion, we have for $i \in \{1,2,\dots,n\}$:
\[ -\Delta \leq b_i - {\theta(x)}_i \leq \Delta, \]
\noindent which is equivalent to $\Vert b - \theta(x) \Vert \leq \Delta$. But, $\theta(x) \in \mathcal{C}$, so $\Vert b - \theta(x) \Vert \geq \Delta$. Therefore, $\Vert b - \theta(x) \Vert = \Delta$ i.e., $x \in \Lambda_b$.
\end{proof}

\noindent To introduce a sharp characterization of $\Lambda_b$, which describes completely its structure, we will first give some notations and a lemma.

\begin{notation}\mbox{}
\begin{itemize}

    \item For $j = 1 , 2 , \dots ,m$, let:  
\[ H_j = \{i \in \{1,2,\dots,n\} \mid a_{ij} < b_i - \Delta\},
\]
\noindent

\item For any $T \subseteq \{1 , \dots , m\}$, we denote by $T^c$  the complement of $T$.
\item For $T \subseteq \{1 , \dots , m\}$, we put:
\begin{equation*}
    I_T= \bigcap_{j\in T} H_j \text{ and } \xi_T = \max_{i \in I_T}(b_i - \Delta)^+,
\end{equation*}   
\noindent with the convention $\max_\emptyset = 0$.
    \end{itemize}
\end{notation}
\noindent The map $T \mapsto \xi_T$ has the following properties:
\begin{lemma}\label{lemma:txit}\mbox{}
    \begin{enumerate}
        \item For $T = \emptyset$,  we have $\xi_\emptyset = \max_{i \in \{1,2,\dots,n\}}(b_i - \Delta)^+$.
        \item The map $T \mapsto \xi_T$ is decreasing i.e., $T \subseteq T' \Longrightarrow \xi_{T'} \leq \xi_{T}$.
        \item $\xi_{\{1,2,\dots,m\}}=0$.
    \end{enumerate}
\end{lemma}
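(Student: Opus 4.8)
The plan is to treat the three statements in increasing order of difficulty: the first two follow immediately from the definitions together with the standard empty-intersection and empty-maximum conventions, while the third is the only point where a genuine property of the Chebyshev distance $\Delta$ must be invoked.

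For the first statement, I would note that $T = \emptyset$ gives an empty intersection $I_\emptyset = \bigcap_{j \in \emptyset} H_j$, which by the usual convention is the whole universe $\{1,2,\dots,n\}$. Substituting this into the definition $\xi_T = \max_{i \in I_T}(b_i - \Delta)^+$ yields $\xi_\emptyset = \max_{i \in \{1,2,\dots,n\}}(b_i - \Delta)^+$ at once. For the second statement, I would use that $T \subseteq T'$ forces $\bigcap_{j \in T'} H_j \subseteq \bigcap_{j \in T} H_j$, i.e. $I_{T'} \subseteq I_T$: intersecting over more indices can only shrink the set. Since every term $(b_i - \Delta)^+$ is nonnegative, the maximum over the smaller index set $I_{T'}$ cannot exceed the maximum over $I_T$; and if $I_{T'} = \emptyset$ the convention $\max_\emptyset = 0$ still gives $\xi_{T'} = 0 \leq \xi_T$. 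Hence $\xi_{T'} \leq \xi_T$ in all cases.

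The third statement is the main obstacle, and I would prove it by showing $I_{\{1,\dots,m\}} = \emptyset$, so that $\xi_{\{1,\dots,m\}} = 0$ by convention. Suppose for contradiction that some $i$ lies in $\bigcap_{j=1}^m H_j$; then $a_{ij} < b_i - \Delta$ for every $j$, which in particular forces $b_i - \Delta > 0$ and hence $(b_i - \Delta)^+ = b_i - \Delta > \max_{1 \leq j \leq m} a_{ij}$. To derive a contradiction, I would invoke the inequality $\underline{b}(\Delta) \leq F(\overline{b}(\Delta))$, recorded in the proof of (Proposition \ref{proposition:greatestcheb}) as a consequence of (\ref{eq:delta1}). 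Reading off the $i$-th component, the definition of $F$ gives $F(\overline{b}(\Delta))_i = \max_{1 \leq j \leq m}\min(a_{ij}, \beta_j)$ for suitable $\beta_j$, and since $\min(a_{ij},\beta_j) \leq a_{ij}$ this yields $(b_i - \Delta)^+ = \underline{b}(\Delta)_i \leq F(\overline{b}(\Delta))_i \leq \max_{1 \leq j \leq m} a_{ij}$, contradicting the strict inequality above. Thus no such $i$ exists and $I_{\{1,\dots,m\}} = \emptyset$.

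I expect the $F$-based route to be the cleanest, as it reuses an inequality already established in the excerpt. As an alternative I would record that the same contradiction follows directly from the explicit formula of (Theorem \ref{th:Deltamin}): if $a_{ij} < b_i - \Delta$ for all $j$, then $(b_i - a_{ij})^+ > \Delta$ for all $j$, so $\delta_i = \min_{1 \leq j \leq m}\max[(b_i - a_{ij})^+, \max_{1 \leq k \leq n}\sigma_G(b_i, a_{kj}, b_k)] > \Delta$, which is impossible since $\Delta = \max_i \delta_i \geq \delta_i$. Either argument closes the proof, the only nontrivial ingredient being the link between membership in $\bigcap_j H_j$ and a violation of the minimality property defining $\Delta$.
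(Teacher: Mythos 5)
Your proof is correct, and parts 1 and 2 coincide with the paper's proof verbatim (empty-intersection convention; $I_{T'} \subseteq I_T$ plus monotonicity of the max). For part 3, your main route and the paper's rest on the same inequality read componentwise: the paper deduces $\underline{b}(\Delta) \leq \theta(\eta)$ from (Proposition \ref{prop:greatestaprox}) and (Proposition \ref{proposition:firstlambdab}) and extracts, for each $i$, a witness index $j$ with $(b_i-\Delta)^+ \leq \min(a_{ij},\eta_j) \leq a_{ij}$, hence $i \notin H_j$; you use the identical inequality $\underline{b}(\Delta) \leq F(\overline{b}(\Delta)) = \theta(\eta)$ and the identical bound $F(\overline{b}(\Delta))_i \leq \max_{1\leq j\leq m} a_{ij}$, merely phrased as a contradiction, which is the contrapositive of the paper's step. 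The one real difference is sourcing: the paper routes through the $\Lambda_b$ machinery, whereas you take the inequality directly from (\ref{eq:delta1}) as recorded in the proof of (Proposition \ref{proposition:greatestcheb}), which makes the lemma independent of (Proposition \ref{proposition:firstlambdab}) and slightly cleaner in its dependency structure. Your alternative argument via (Theorem \ref{th:Deltamin}) — if $a_{ij} < b_i - \Delta$ for all $j$ then $\delta_i \geq \min_{1\leq j\leq m}(b_i - a_{ij})^+ > \Delta$, contradicting $\Delta = \max_{1\leq i\leq n}\delta_i$ — is a genuinely different route that the paper does not take: it bypasses $F$ and the sets $\mathcal{C}_b$, $\Lambda_b$ entirely and uses only the explicit formula for $\Delta$, making it the most elementary of the three arguments.
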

\begin{proof}\mbox{}
\begin{enumerate}
    \item  This is true because $I_\emptyset = \bigcap\limits_{j \in \emptyset} H_j = \{1,2,\dots,n\}$. 
    \item If $T \subseteq T'$, then $I_{T'} \subseteq I_T$ and $\xi_{T'} = \max_{i \in I_{T'}}(b_i - \Delta)^+ \leq \xi_T = \max_{i \in I_T}(b_i - \Delta)^+$.
    \item From  (Proposition \ref{prop:greatestaprox}) and (Proposition \ref{proposition:firstlambdab}), we deduce:

    \[\underline{b}(\Delta) \leq \theta(\eta). \]
    Let us show $I_{\{1,2,\dots,m\}} = \emptyset$. 
    
    \noindent
In fact, for any $1 \leq i \leq n$, the inequality ${\underline{b}(\Delta)}_i \leq {\theta(\eta)}_i$ implies that there exists $1 \leq j \leq m$ such that: $${\underline{b}(\Delta)}_i = (b_i - \Delta)^+ \leq \min(a_{ij}, \eta_j) \leq a_{ij}.$$
    \noindent Then, $i \notin H_j$, so $I_{\{1,2,\dots,m\}}=\bigcap\limits_{j \in \{1,2,\dots,m\}} H_j=\emptyset$ and  by the convention   $\max_\emptyset = 0$,    we have   $\xi_{\{1,2,\dots,m\}}=0$.
\end{enumerate}
\end{proof}

\noindent The main characterization of the approximate solutions set $\Lambda_b$ is:
\begin{theorem}\label{th:2}
    For any  $x =\begin{bmatrix}{x}_j\end{bmatrix}_{1\leq j \leq m}  \in [0,1]^{m \times 1}$, we have:
    \begin{equation}\label{rel:lambdachevB2}
 x\in  \Lambda_{b}
\,\Longleftrightarrow \forall T \subseteq \{1,2,\dots,m\},\quad \xi_T \leq \max_{j \in T^c} x_j \text{ and } x \leq \eta.
 \end{equation} 
\end{theorem}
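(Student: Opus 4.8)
The plan is to build on the first characterization of Proposition \ref{proposition:firstlambdab}, which already states that $x \in \Lambda_b$ if and only if $\underline{b}(\Delta) \leq \theta(x)$ and $x \leq \eta$. Since the constraint $x \leq \eta$ appears verbatim on both sides of the claimed equivalence, the whole theorem reduces to proving that the single vector inequality $\underline{b}(\Delta) \leq \theta(x)$ is equivalent to the family of scalar inequalities $\xi_T \leq \max_{j \in T^c} x_j$ indexed by $T \subseteq \{1,\dots,m\}$. I would expand both sides componentwise, using $\underline{b}(\Delta)_i = (b_i - \Delta)^+$ and $\theta(x)_i = \max_{1 \leq j \leq m} \min(a_{ij}, x_j)$, and I would record the elementary reformulation that $a_{ij} \geq b_i - \Delta$ is exactly the statement $i \notin H_j$.

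For the forward implication, I would fix an arbitrary $T$ and establish $\xi_T \leq \max_{j \in T^c} x_j$. The cases $I_T = \emptyset$ and $\xi_T = 0$ are immediate from the convention $\max_\emptyset = 0$ together with Lemma \ref{lemma:txit} (which in particular handles $T = \{1,\dots,m\}$, i.e.\ $T^c = \emptyset$). Otherwise I pick a row $i_0 \in I_T$ attaining $\xi_T = (b_{i_0}-\Delta)^+ > 0$. The hypothesis $\underline{b}(\Delta)_{i_0} \leq \theta(x)_{i_0}$ yields a witness column $j_0$ with $\min(a_{i_0 j_0}, x_{j_0}) \geq b_{i_0} - \Delta$, hence simultaneously $a_{i_0 j_0} \geq b_{i_0} - \Delta$ and $x_{j_0} \geq \xi_T$. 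The crucial observation is that $j_0 \in T^c$: if $j_0$ belonged to $T$, then $i_0 \in I_T = \bigcap_{j \in T} H_j$ would force $i_0 \in H_{j_0}$, i.e.\ $a_{i_0 j_0} < b_{i_0} - \Delta$, contradicting the witness. Thus $\max_{j \in T^c} x_j \geq x_{j_0} \geq \xi_T$.

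For the reverse implication, the key device is, for each row $i$ with $(b_i - \Delta)^+ > 0$, the tailored choice $T_i := \{\, j \mid i \in H_j \,\} = \{\, j \mid a_{ij} < b_i - \Delta \,\}$, so that $T_i^c$ is precisely the set of columns that can serve row $i$. Because $i \in H_j$ for every $j \in T_i$, I get $i \in I_{T_i}$ and therefore $\xi_{T_i} \geq (b_i - \Delta)^+ > 0$. Applying the hypothesis to $T = T_i$ produces a column $j^\ast \in T_i^c$ with $x_{j^\ast} \geq \xi_{T_i} \geq (b_i-\Delta)^+$; and since $j^\ast \in T_i^c$ means $a_{i j^\ast} \geq b_i - \Delta = (b_i-\Delta)^+$, I conclude $\min(a_{ij^\ast}, x_{j^\ast}) \geq (b_i-\Delta)^+$, whence $\theta(x)_i \geq (b_i-\Delta)^+ = \underline{b}(\Delta)_i$. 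Rows with $(b_i-\Delta)^+ = 0$ are trivial.

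The step I expect to be the real crux is the reverse direction, specifically identifying the correct index set $T_i$ and verifying the two membership facts $i \in I_{T_i}$ and $j^\ast \in T_i^c$ that translate a single satisfied row-inequality into the quantified family; the forward direction is more routine once the contradiction argument pinning $j_0 \in T^c$ is in place. Throughout, I would lean only on the componentwise formulas for $\theta$ and $\underline{b}(\Delta)$, the definitions of $H_j$, $I_T$ and $\xi_T$, and Lemma \ref{lemma:txit}, so that no appeal to the machinery of \cite{matusiewicz2013increasing} is needed for this particular equivalence.
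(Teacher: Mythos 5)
Your proposal is correct and follows essentially the same route as the paper: both directions reduce via Proposition \ref{proposition:firstlambdab} to the equivalence between $\underline{b}(\Delta) \leq \theta(x)$ and the family $\{\xi_T \leq \max_{j \in T^c} x_j\}$, with the same witness-column argument (your contradiction pinning $j_0 \in T^c$ is exactly the paper's use of Proposition \ref{proposition:bcij}) and the same tailored set $T_i = \{j \mid i \in H_j\}$ in the reverse direction. The only cosmetic differences are that you inline the componentwise reformulation that the paper isolates as Proposition \ref{proposition:bcij}, and you dispose of the degenerate case by restricting to rows with $(b_i-\Delta)^+ > 0$ rather than splitting on $T^c = \emptyset$.
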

\noindent For the proof of (Theorem \ref{th:2}), we need first to establish for any  $x =\begin{bmatrix}{x}_j\end{bmatrix}_{1\leq j \leq m}  \in [0,1]^{m \times 1}$:
\begin{proposition}\label{proposition:bcij}
    $$\underline{b}(\Delta) \leq \theta(x) \Longleftrightarrow \forall i \in \{1,2,\dots,n\}, \exists j \in \{1,2,\dots,m\}, \text{ such that } i\in {H_j}^c \text{ and } (b_i - \Delta)^+ \leq x_j.$$
\end{proposition}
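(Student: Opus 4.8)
The plan is to unfold both sides in terms of the componentwise definitions and to exploit that a maximum over a finite index set is attained. Recall that $\theta(x)_i = \max_{1 \leq j \leq m} \min(a_{ij}, x_j)$ and $\underline{b}(\Delta)_i = (b_i - \Delta)^+$, so the inequality $\underline{b}(\Delta) \leq \theta(x)$ reads, for each $i$, as $(b_i - \Delta)^+ \leq \max_{1 \leq j \leq m} \min(a_{ij}, x_j)$. The statement is an equivalence that I would prove separately for each fixed row index $i$. The single elementary fact I would use throughout is that, since every entry $a_{ij}$ lies in $[0,1]$ and hence is nonnegative, one has $a_{ij} \geq b_i - \Delta$ if and only if $a_{ij} \geq (b_i - \Delta)^+$; that is, $i \in {H_j}^c \Longleftrightarrow a_{ij} \geq (b_i - \Delta)^+$.

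For the forward direction, I would fix $i$ and use that the maximum defining $\theta(x)_i$ is attained at some column $j_0$. Then $(b_i - \Delta)^+ \leq \min(a_{i j_0}, x_{j_0})$, which yields simultaneously $(b_i - \Delta)^+ \leq a_{i j_0}$ and $(b_i - \Delta)^+ \leq x_{j_0}$. The first inequality, via the elementary fact above, gives $i \in {H_{j_0}}^c$, while the second is exactly the required bound on $x_{j_0}$; thus the chosen $j_0$ witnesses the right-hand side.

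For the converse, I would fix $i$ and take the column $j$ supplied by the hypothesis, so that $i \in {H_j}^c$ and $(b_i - \Delta)^+ \leq x_j$. From $i \in {H_j}^c$ together with the elementary fact I obtain $a_{ij} \geq (b_i - \Delta)^+$, hence $\min(a_{ij}, x_j) \geq (b_i - \Delta)^+$, and a fortiori $\theta(x)_i = \max_{1 \leq j' \leq m} \min(a_{i j'}, x_{j'}) \geq (b_i - \Delta)^+ = \underline{b}(\Delta)_i$. Since $i$ is arbitrary, this is precisely $\underline{b}(\Delta) \leq \theta(x)$.

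I do not expect a genuine obstacle here: the argument is a direct translation between the $\max$-$\min$ expression for $\theta(x)$ and the existential clause on the right, the nontrivial content being carried entirely by the attainment of the finite maximum. The only point requiring a little care is the bookkeeping with the positive part $(\cdot)^+$ — specifically reconciling the strict inequality in the definition of $H_j$ with the truncated quantity $(b_i - \Delta)^+$ — and this is settled once and for all by the nonnegativity observation that makes $a_{ij} \geq b_i - \Delta$ and $a_{ij} \geq (b_i - \Delta)^+$ equivalent.
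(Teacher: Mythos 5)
Your proof is correct and follows essentially the same route as the paper's: both reduce the vector inequality componentwise, split $(b_i-\Delta)^+ \leq \min(a_{ij},x_j)$ into the two conditions $(b_i-\Delta)^+ \leq a_{ij}$ and $(b_i-\Delta)^+ \leq x_j$, and identify the former with $i \in {H_j}^c$ via the nonnegativity of $a_{ij}$. The only cosmetic difference is that the paper phrases this as a single chain of equivalences, whereas you argue the two implications separately and make the attainment of the finite maximum and the $(\cdot)^+$ bookkeeping explicit.
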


\begin{proof}
  \mbox{}\\
Let $i \in \{1,2,\dots,n\}$. We have:
\begin{align*}
    {(b_i - \Delta)^+} \leq {\theta(x)}_i  &\Longleftrightarrow  \exists j \in \{1,2,\dots,m\} \text{ such that } (b_i - \Delta)^+ \leq \min(a_{ij}, x_j)\\
    &\Longleftrightarrow \exists j \in \{1,2,\dots,m\} \text{ such that } (b_i - \Delta)^+ \leq a_{ij} \text{ and } (b_i - \Delta)^+  \leq  x_j\\
    &\Longleftrightarrow \exists j \in \{1,2,\dots,m\}, \text{ such that } i\in {H_j}^c \text{ and } (b_i - \Delta)^+ \leq x_j.
\end{align*}

\end{proof}
\noindent The proof of (Theorem \ref{th:2}) is given in the following. 
\begin{proof}
    \mbox{}\\
\noindent $\Longrightarrow$\\ 
We know by (Proposition \ref{proposition:firstlambdab}) that $x \leq \eta$.
Let $T \subseteq \{1,2,\dots,m\}$ and we must show $\xi_T \leq \max_{j \in T^c} x_j$.
\begin{itemize}
    \item If $I_T = \emptyset$, we have $\xi_T = 0\leq \max_{j \in T^c} x_j$.
    \item If $I_T \neq \emptyset$, then take  $i \in I_T$  such that $\xi_T = (b_i - \Delta)^+$. Using (Proposition \ref{proposition:bcij}), we have $j \in \{1,2,\dots,m\}$, such that $i\in {H_j}^c$ (which means that $(b_i - \Delta)^+ \leq a_{ij}$) and $(b_i - \Delta)^+ \leq x_j$.
    We conclude that $j \notin T$ i.e., $j \in T^c$ and:
    $$ \xi_T = (b_i - \Delta)^+ \leq x_j \leq  \max_{l \in T^c} x_l. $$
\end{itemize}

\noindent $\Longleftarrow$\\
To prove that $x\in\Lambda_b$, by (Proposition \ref{proposition:firstlambdab}), it is  sufficient to have $\underline{b}(\Delta) \leq \theta(x)$. Let $i \in \{1,2,\dots,n\}$,   we must show that $(b_i - \Delta)^+ \leq {\theta(x)}_i$.\\ Take $T =  \{ j \in \{1,2,\dots,m\} \mid i \in {H_j} \}$. Clearly, $i \in I_T$ and then: $$(b_i - \Delta)^+ \leq \xi_T \leq \max_{l \in T^c} x_l.$$
We distinguish two cases:
\begin{itemize}
    \item $T^c = \emptyset$, then $\max_{l \in T^c} x_l = 0$, and $(b_i - \Delta)^+ = 0 \leq {\theta(x)}_i$. 
    \item $T^c \neq \emptyset$, and let $l' \in T^c$ such that $x_{l'} = \max_{l \in T^c} x_{l}$. We have:
    $$  (b_i - \Delta)^+ \leq \xi_T  \leq  x_{l'}. $$
    \noindent But, $l' \in T^c$ means that $i \in {H_{l'}}^c$ i.e., $(b_i - \Delta)^+ \leq a_{il'}$. Finally, $$(b_i - \Delta)^+ \leq \min(a_{il'},x_{l'}) \leq {\theta(x)}_i.$$
\end{itemize}
 
\end{proof}

\subsection{Obtaining minimal Chebyshev approximations from minimal approximate solutions}
\label{subsec:minicheb}
\noindent From a practical point of view, one can obtain all the minimal Chebyshev approximations of the second member $b$ of the system $(S)$. For this purpose, we use (Proposition \ref{proposition:firstlambdab}) and the results of \cite{matusiewicz2013increasing}, where the authors showed that a system of $\max-\min$ relational inequalities  has a finite non-empty set of solutions, and they gave an algorithm to obtain the minimal solutions of such a system that are lower than a given solution. 

In the following, we use this result of \cite{matusiewicz2013increasing}:
\begin{notation}\label{notations:minimalsol} We denote by $\{ v^{(1)},v^{(2)},\dots,v^{(h)} \}$   the set of minimal solutions of the system of inequalities $\underline{b}(\Delta) \leq A  \Box_{\min}^{\max} x$ (obtained using the algorithm of \cite{matusiewicz2013increasing}) such that $\forall i \in \{1,2,\dots,h\}, v^{(i)} \leq \eta$.
\end{notation}
We have: 
\begin{proposition}\label{proposition:minimalsol}
   \mbox{}
\begin{enumerate}
\item $\{v^{(1)},v^{(2)},\dots,v^{(h)}\}\subseteq \Lambda_b\text{ and } \{ \theta(v^{(1)}), \theta(v^{(2)}), \dots, \theta(v^{(h)}) \} \subseteq \theta(\Lambda_b) = \mathcal{C}_{b}$,
 \item   $\forall x \in \Lambda_{b}, \exists i \in \{1 , 2 ,\dots , h\}, \text{ such that }  v^{(i)}  \leq x$, 
  \item  $\forall  c \in \mathcal{C}_{b,\min}, \exists i \in \{1, 2, \dots , h\}, \text{ such that } c = \theta(v^{(i)})$.
 \end{enumerate}  
\end{proposition}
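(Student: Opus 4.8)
The plan is to prove the three statements of Proposition \ref{proposition:minimalsol} in order, leveraging the characterization of $\Lambda_b$ given in (Proposition \ref{proposition:firstlambdab}), namely that $x \in \Lambda_b \Longleftrightarrow \underline{b}(\Delta) \leq \theta(x)$ and $x \leq \eta$. The key observation is that the set $\{v^{(1)}, \dots, v^{(h)}\}$ from (Notation \ref{notations:minimalsol}) consists exactly of the minimal solutions of the system of inequalities $\underline{b}(\Delta) \leq A \Box_{\min}^{\max} x$ that lie below $\eta$. Since $\theta(x) = A \Box_{\min}^{\max} x$, each $v^{(i)}$ satisfies both conditions of (Proposition \ref{proposition:firstlambdab}) by construction, which immediately gives the first inclusion of statement (1); the second inclusion follows by applying $\theta$ and using the surjectivity-based equality $\mathcal{C}_b = \{\theta(x) \mid x \in \Lambda_b\}$, see (\ref{eq:cbtolambdab}).

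For statement (2), I would take an arbitrary $x \in \Lambda_b$ and show it dominates some $v^{(i)}$. By (Proposition \ref{proposition:firstlambdab}), such an $x$ satisfies $\underline{b}(\Delta) \leq \theta(x)$, so $x$ is a solution of the system of inequalities $\underline{b}(\Delta) \leq A \Box_{\min}^{\max} x$, and moreover $x \leq \eta$. The crux is to invoke the result of \cite{matusiewicz2013increasing}: their algorithm produces, below any given solution, all the minimal solutions of the system, and every solution dominates at least one minimal solution lying below it. Since $x$ is itself a solution below $\eta$, there exists a minimal solution $v$ of the inequality system with $v \leq x \leq \eta$; by the defining property of the set $\{v^{(1)}, \dots, v^{(h)}\}$ (the minimal solutions below $\eta$), we have $v = v^{(i)}$ for some $i$, giving $v^{(i)} \leq x$.

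For statement (3), I would start from an arbitrary minimal Chebyshev approximation $c \in \mathcal{C}_{b,\min}$. By (\ref{eq:cbtolambdab}), write $c = \theta(x)$ for some $x \in \Lambda_b$. By statement (2) just proved, there is an index $i$ with $v^{(i)} \leq x$, and applying the increasing map $\theta$ yields $\theta(v^{(i)}) \leq \theta(x) = c$. Now $\theta(v^{(i)}) \in \mathcal{C}_b$ by statement (1), and since $c$ is minimal in $\mathcal{C}_b$, the inequality $\theta(v^{(i)}) \leq c$ forces equality $c = \theta(v^{(i)})$. This completes the three statements and, as a byproduct, shows that $\mathcal{C}_{b,\min}$ is contained in the finite set $\{\theta(v^{(1)}), \dots, \theta(v^{(h)})\}$, hence is itself finite.

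The main obstacle I anticipate is the careful application of the \cite{matusiewicz2013increasing} result in statement (2): one must verify precisely that their algorithm characterizes the set $\{v^{(1)}, \dots, v^{(h)}\}$ as all minimal solutions below $\eta$, and that every solution below $\eta$ dominates one of them (rather than merely some minimal solution not constrained by $\eta$). The monotonicity of $\theta$ (from Lemma \ref{lemma:increasing}) and the equality $\mathcal{C}_b = \theta(\Lambda_b)$ are the routine glue, but the transfer of minimality from the inequality-system solutions to the Chebyshev approximations in statement (3) requires the increasing property to be threaded correctly so that minimality in $\mathcal{C}_b$ is genuinely pinned down.
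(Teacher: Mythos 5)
Your proposal is correct and follows essentially the same route as the paper's own proof: statement (1) from the characterization in (Proposition \ref{proposition:firstlambdab}) together with (\ref{eq:cbtolambdab}), statement (2) by invoking the result of \cite{matusiewicz2013increasing} to obtain a minimal solution $v \leq x \leq \eta$ which must then be one of the $v^{(i)}$, and statement (3) by combining (2) with the monotonicity of $\theta$ and the minimality of $c$ in $\mathcal{C}_b$. No gaps; the steps you flag as delicate are handled exactly as in the paper.
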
 
\begin{proof} For the proof of the first statement, we observe that $\{v^{(1)},v^{(2)},\dots,v^{(h)}\}\subseteq \Lambda_b$ is a consequence of   (Proposition~\ref{proposition:firstlambdab}) and $\{ \theta(v^{(1)}), \theta(v^{(2)}), \dots, \theta(v^{(h)}) \} \subseteq \theta(\Lambda_b) = \mathcal{C}_{b}$ is a consequence of  (\ref{eq:cbtolambdab}).

\noindent
To prove the second statement, let $x_0 \in \Lambda_{b}$.  From (Proposition \ref{proposition:firstlambdab}), we deduce:
$$ \underline{b}(\Delta) \leq A  \Box_{\min}^{\max} x_0 = \theta(x_0)\quad \text{and} \quad x_0 \leq\eta.$$ 
By the algorithm of \cite{matusiewicz2013increasing},   there is a minimal solution $v$ of the system of inequalities $\underline{b}(\Delta) \leq A  \Box_{\min}^{\max} x$ such that 
$v  \leq x_0$.
As we have $x_0 \leq\eta$,   we also have $v\leq \eta$, so there is an index $i\in\{1 , 2 , \dots , h\}$ such that $v= v^{(i)}  \leq x_0$.

\noindent
To prove the last statement, let $c\in \mathcal{C}_{b,\min}$. From (\ref{eq:cbtolambdab}), there is an element $x_0\in \Lambda_{b}$ such that  
$c = \theta(x_0)$ and 
from the second statement (of (Proposition \ref{proposition:minimalsol})), there is an index $i\in\{1 , 2 , \dots , h\}$ such that $  v^{(i)}  \leq x_0$.
 
\noindent
From the increasing  of 
$\theta$   and $\theta(v^{(i)}) \in \mathcal{C}_{b}$,  we deduce:
 $$\theta(v^{(i)}) \leq \theta(x_0) = c.$$ 
By minimality of $c$, we conclude that $c = \theta(v^{(i)})$.  
\end{proof}
\noindent The following corollary allows us to efficiently obtain the minimal Chebyshev approximations in practice.

\begin{corollary}\label{corollary:ctitlde}Using (Notation \ref{notations:minimalsol}),    we  put:
\begin{equation}\label{eq:ctilde}
    \widetilde{\mathcal{C}} = \{ \theta(v^{(1)}),\theta(v^{(2)}),\dots,\theta(v^{(h)}) \} 
\end{equation}
\noindent and 
\begin{equation}\label{eq:ctildemin}
    (\widetilde{\mathcal{C}})_{\min} = \{ c \in \widetilde{\mathcal{C}} \mid c \text{ is minimal in }\widetilde{\mathcal{C}}\}.
\end{equation}
 
\noindent Then, we have: \[\widetilde{\mathcal{C}} \subseteq \mathcal{C}_{b} \text{ and } \mathcal{C}_{b,\min} = (\widetilde{\mathcal{C}})_{\min}. \]
\begin{proof}
By the first statement of (Proposition \ref{proposition:minimalsol}), we have $ \widetilde{\mathcal{C}} \subseteq  \mathcal{C}_b $.

\noindent
By the third statement of (Proposition \ref{proposition:minimalsol}), we have $\, \mathcal{C}_{b,\min} \subseteq  \widetilde{\mathcal{C}}$. As
$ \widetilde{\mathcal{C}} \subseteq  \mathcal{C}_b $, we deduce   
$ \mathcal{C}_{b,\min} \subseteq (\widetilde{\mathcal{C}})_{\min}$. 

\noindent
Let $c\in (\widetilde{\mathcal{C}})_{\min}$. To prove that $c\in  \mathcal{C}_{b,\min} $, let 
$c'\in \mathcal{C}_{b}$ such that $c' \leq c$. We must prove that $c' =c$.

\noindent
By (\ref{eq:cbtolambdab}), there is an element $x_0\in \Lambda_b$ such that 
 $c' = \theta(x_0)$.

\noindent
Using the second statement of (Proposition \ref{proposition:minimalsol}), we obtain  an  index  $i \in \{1 , 2 ,\dots , h\}, \text{ such that }  v^{(i)}  \leq x_0$. Then we have $\theta(v^{(i)}) \in \widetilde{\mathcal{C} }$ and by the increasing of $\theta$, we get: 
$$\theta(v^{(i)}) \leq \theta(x_0) = c' \leq c.$$
By minimality of $c$ in  $\widetilde{\mathcal{C}}$, we obtain $\theta(v^{(i)})  = c$ , so $c' =c$.
\end{proof}
\end{corollary}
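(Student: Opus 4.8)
The plan is to derive the corollary entirely from (Proposition \ref{proposition:minimalsol}), together with the surjectivity of $\theta$ recorded in (\ref{eq:cbtolambdab}) and the fact that $\theta$ is increasing. I would split the claim into the inclusion $\widetilde{\mathcal{C}} \subseteq \mathcal{C}_b$ and the equality $\mathcal{C}_{b,\min} = (\widetilde{\mathcal{C}})_{\min}$, proving the equality by two opposite inclusions. The inclusion $\widetilde{\mathcal{C}} \subseteq \mathcal{C}_b$ is immediate: by definition $\widetilde{\mathcal{C}} = \{\theta(v^{(1)}), \dots, \theta(v^{(h)})\}$, and the first statement of (Proposition \ref{proposition:minimalsol}) already asserts that this set is contained in $\mathcal{C}_b$.

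For $\mathcal{C}_{b,\min} \subseteq (\widetilde{\mathcal{C}})_{\min}$, the third statement of (Proposition \ref{proposition:minimalsol}) tells me that every $c \in \mathcal{C}_{b,\min}$ equals some $\theta(v^{(i)})$, hence lies in $\widetilde{\mathcal{C}}$. Since $\widetilde{\mathcal{C}} \subseteq \mathcal{C}_b$, an element that is minimal in the larger set $\mathcal{C}_b$ is \emph{a fortiori} minimal in the subset $\widetilde{\mathcal{C}}$, so $c \in (\widetilde{\mathcal{C}})_{\min}$.

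The delicate direction, $(\widetilde{\mathcal{C}})_{\min} \subseteq \mathcal{C}_{b,\min}$, is the one I expect to be the crux, because it requires upgrading minimality inside the finite set $\widetilde{\mathcal{C}}$ to minimality inside the whole (\emph{a priori} much larger) set $\mathcal{C}_b$. Given $c \in (\widetilde{\mathcal{C}})_{\min}$, I would take an arbitrary competitor $c' \in \mathcal{C}_b$ with $c' \leq c$ and aim to force $c' = c$. Using the surjectivity (\ref{eq:cbtolambdab}), I write $c' = \theta(x_0)$ with $x_0 \in \Lambda_b$; then I invoke the second (``covering'') statement of (Proposition \ref{proposition:minimalsol}) to find an index $i$ with $v^{(i)} \leq x_0$. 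Monotonicity of $\theta$ yields $\theta(v^{(i)}) \leq \theta(x_0) = c' \leq c$, and $\theta(v^{(i)}) \in \widetilde{\mathcal{C}}$, so minimality of $c$ in $\widetilde{\mathcal{C}}$ collapses the chain to $\theta(v^{(i)}) = c$ and hence $c' = c$.

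The only real obstacle is precisely this lifting step: one cannot compare $c'$ directly to the $v^{(i)}$, which live in $\Lambda$, so the argument must pass through a preimage $x_0 \in \Lambda_b$ and exploit the fact that the minimal solutions $v^{(i)}$ dominate every approximate solution from below. That domination property is exactly what (Proposition \ref{proposition:minimalsol}) supplies, which is why I would treat the present corollary as a purely formal consequence of that proposition rather than re-entering the combinatorics of \cite{matusiewicz2013increasing}.
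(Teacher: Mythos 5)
Your proof is correct and follows essentially the same route as the paper's: the first and third statements of (Proposition \ref{proposition:minimalsol}) give $\widetilde{\mathcal{C}} \subseteq \mathcal{C}_b$ and $\mathcal{C}_{b,\min} \subseteq (\widetilde{\mathcal{C}})_{\min}$, and the reverse inclusion is obtained exactly as in the paper by lifting a competitor $c' \leq c$ to a preimage $x_0 \in \Lambda_b$, invoking the covering statement to get $v^{(i)} \leq x_0$, and using monotonicity of $\theta$ plus minimality of $c$ in $\widetilde{\mathcal{C}}$. No gaps; the argument matches the paper's proof step for step.
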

\noindent We have:
\begin{corollary}\label{cor:nonemptyfinite}
  The set  $\mathcal{C}_{b,\min}$ is non-empty and finite.
\end{corollary}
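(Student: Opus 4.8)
The plan is to derive Corollary~\ref{cor:nonemptyfinite} directly from the machinery already assembled in (Corollary~\ref{corollary:ctitlde}) and (Proposition~\ref{proposition:minimalsol}). The key equality to exploit is $\mathcal{C}_{b,\min} = (\widetilde{\mathcal{C}})_{\min}$, where $\widetilde{\mathcal{C}} = \{ \theta(v^{(1)}),\theta(v^{(2)}),\dots,\theta(v^{(h)}) \}$. Since this displays $\mathcal{C}_{b,\min}$ as the set of minimal elements of an \emph{explicitly finite} set (it has at most $h$ elements, being the image under $\theta$ of the finite family $\{v^{(1)},\dots,v^{(h)}\}$), finiteness of $\mathcal{C}_{b,\min}$ is immediate: a subset of a finite set is finite.

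For non-emptiness, I would argue as follows. First recall that the index set $\{v^{(1)},\dots,v^{(h)}\}$ is itself non-empty: by (Proposition~\ref{prop:nonemptylambdab}) the set $\Lambda_b$ is non-empty, so fix any $x_0 \in \Lambda_b$; by the second statement of (Proposition~\ref{proposition:minimalsol}) there is an index $i$ with $v^{(i)} \leq x_0$, which forces $h \geq 1$. Hence $\widetilde{\mathcal{C}}$ is a non-empty finite set, and any non-empty finite subset of a partially ordered set possesses at least one minimal element. Therefore $(\widetilde{\mathcal{C}})_{\min}$ is non-empty, and by (Corollary~\ref{corollary:ctitlde}) so is $\mathcal{C}_{b,\min}$.

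There is essentially no hard obstacle here; the corollary is a bookkeeping consequence of the work already done. The only point requiring a moment's care is ensuring that $h \geq 1$, i.e.\ that the minimal-solution family produced by the algorithm of \cite{matusiewicz2013increasing} is not vacuous. This is guaranteed precisely because the system of inequalities $\underline{b}(\Delta) \leq A \Box_{\min}^{\max} x$ is solvable under the constraint $x \leq \eta$: the greatest approximate solution $\eta$ itself satisfies $\underline{b}(\Delta) \leq \theta(\eta)$ (as established in the proof of item~3 of (Lemma~\ref{lemma:txit})), so $\eta$ is a witness lying below the threshold $\eta$, and the algorithm returns at least one minimal solution dominated by it. With $h \geq 1$ secured, both finiteness and non-emptiness follow in a single line each from (Corollary~\ref{corollary:ctitlde}).
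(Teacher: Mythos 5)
Your proposal is correct and follows essentially the same route as the paper: both rest on the identity $\mathcal{C}_{b,\min} = (\widetilde{\mathcal{C}})_{\min}$ from (Corollary~\ref{corollary:ctitlde}) together with the observation that the minimal elements of a finite non-empty ordered set form a finite non-empty set. The only difference is that you explicitly justify $h \geq 1$ (via (Proposition~\ref{prop:nonemptylambdab}) and the second statement of (Proposition~\ref{proposition:minimalsol}), or alternatively via $\eta$ itself solving the inequality system), a point the paper's one-line proof asserts without spelling out --- a worthwhile addition, not a divergence.
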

\begin{proof}
As    $\widetilde{\mathcal{C}}$ is a finite   non-empty ordered set, the set  
${(\widetilde{\mathcal{C}})}_{\min} = \mathcal{C}_{b,\min} $ is also finite  and  non-empty. 
\end{proof}
\noindent We are able to define a set of minimal approximation solutions $\Lambda_{b,\min}$, see (\ref{eq:rellambdabmincbmin}):
\begin{definition}
\begin{equation}
    \Lambda_{b,\min} = \{ x \in \{ v^{(1)}, v^{(2)},\dots, v^{(h)} \} \mid \theta(x) \in \mathcal{C}_{b,\min} \}.
\end{equation}
\end{definition}
It follows from the first and the last statements of (Proposition \ref{proposition:minimalsol}) that we have:
\begin{equation*}
    \Lambda_{b,\min} \subseteq \Lambda_b  \text{ and } \mathcal{C}_{b,\min} = \{ \theta(x) \mid x \in \Lambda_{b,\min} \}.
\end{equation*}
\noindent {\it Therefore, the set $\Lambda_{b,\min}$ is  non-empty and finite}.\\

\noindent
The structure of the set ${\cal C}_b$ is  described by the following result:
\begin{theorem}\label{th:3} For all $c\in [0 , 1]^{n \times 1}$, we have: 
\begin{equation}c \text{ is a Chebyshev approximation of } b \text{ i.e., } c\in {\cal C}_b \,\Longleftrightarrow \, F(c)  = c  \,\,\text{and}\,\,\exists \, c'\in {\cal C}_{b,\min} \,\,
\text{s.t.}\,\, c' \leq c \leq F(\overline{b}(\Delta)).\end{equation}
\end{theorem}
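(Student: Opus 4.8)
The plan is to prove the equivalence by its two implications, relying on Proposition~\ref{proposition:appFreformulated} to read $F(c)=c$ as membership of $c$ in $\mathcal{C}$ (consistency), and on the sandwiching equivalence~(\ref{ineq:bbbar}) to translate the bounds $\underline{b}(\Delta)\leq c\leq\overline{b}(\Delta)$ into the norm estimate $\Vert b-c\Vert\leq\Delta$.

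I would start with the reverse implication, which is the routine one. Assume $F(c)=c$ and $c'\leq c\leq F(\overline{b}(\Delta))$ for some $c'\in\mathcal{C}_{b,\min}$. From $F(c)=c$ and Proposition~\ref{proposition:appFreformulated} we get $c\in\mathcal{C}$. Since $c'\in\mathcal{C}_{b,\min}\subseteq\mathcal{C}_b$ satisfies $\Vert b-c'\Vert=\Delta$, relation~(\ref{ineq:bbbar}) gives $\underline{b}(\Delta)\leq c'$, hence $\underline{b}(\Delta)\leq c$; and $c\leq F(\overline{b}(\Delta))\leq\overline{b}(\Delta)$ using the first statement of Proposition~\ref{prop:threestatementF}. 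Applying~(\ref{ineq:bbbar}) once more yields $\Vert b-c\Vert\leq\Delta$, while $c\in\mathcal{C}$ forces $\Vert b-c\Vert\geq\Delta$ by Corollary~\ref{corollary:deltazero}; together these give $c\in\mathcal{C}_b$.

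For the forward implication, take $c\in\mathcal{C}_b$. Then $c\in\mathcal{C}$ gives $F(c)=c$ (Proposition~\ref{proposition:appFreformulated}), and the upper bound $c\leq F(\overline{b}(\Delta))$ is exactly the greatest-element property of Proposition~\ref{proposition:greatestcheb}. The remaining task---to exhibit $c'\in\mathcal{C}_{b,\min}$ with $c'\leq c$---is where I expect the real work. The idea is to pass through the finite set $\widetilde{\mathcal{C}}$: write $c=\theta(x_0)$ for some $x_0\in\Lambda_b$ using~(\ref{eq:cbtolambdab}), obtain an index $i$ with $v^{(i)}\leq x_0$ from the second statement of Proposition~\ref{proposition:minimalsol}, and deduce $\theta(v^{(i)})\leq\theta(x_0)=c$ with $\theta(v^{(i)})\in\widetilde{\mathcal{C}}$. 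Since $\widetilde{\mathcal{C}}$ is finite by its definition in Corollary~\ref{corollary:ctitlde}, it contains a minimal element $c'\leq\theta(v^{(i)})$, and $c'\in(\widetilde{\mathcal{C}})_{\min}=\mathcal{C}_{b,\min}$ again by Corollary~\ref{corollary:ctitlde}; then $c'\leq\theta(v^{(i)})\leq c$ closes the sandwich.

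The hard part will be this last lower bound, since an arbitrary Chebyshev approximation need not dominate a minimal one in a general poset. The detour through $\widetilde{\mathcal{C}}$ is essential precisely because $\mathcal{C}_b$ is a continuum whereas $\widetilde{\mathcal{C}}$ is finite: finiteness guarantees a minimal element below any given one, and the reduction $\mathcal{C}_{b,\min}=(\widetilde{\mathcal{C}})_{\min}$ from Corollary~\ref{corollary:ctitlde} identifies it as a genuine minimal Chebyshev approximation.
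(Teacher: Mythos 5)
Your proposal is correct and follows essentially the same route as the paper: both directions use Proposition~\ref{proposition:appFreformulated} for consistency, Proposition~\ref{proposition:greatestcheb} for the upper bound, and the chain $c=\theta(x_0)$, $v^{(i)}\leq x_0$, $\mathcal{C}_{b,\min}=(\widetilde{\mathcal{C}})_{\min}$ from (\ref{eq:cbtolambdab}), Proposition~\ref{proposition:minimalsol} and Corollary~\ref{corollary:ctitlde} to produce the minimal element below $c$. The only cosmetic difference is in the reverse direction, where you sandwich $c$ between $\underline{b}(\Delta)$ and $\overline{b}(\Delta)$ via (\ref{ineq:bbbar}) while the paper writes the equivalent componentwise inequalities $-\Delta\leq b_i-F(\overline{b}(\Delta))_i\leq b_i-c_i\leq b_i-c'_i\leq\Delta$.
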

\begin{proof}\mbox{}\\
$\Longrightarrow$

\noindent
Let $c\in{\cal C}_b$. As ${\cal C}_b \subseteq {\cal C}$, we   know from (Proposition \ref{proposition:appFreformulated}) that $F(c) = c$. From  (\ref{eq:cbtolambdab}), we have an approximate solution $x_0 \in \Lambda_b$ such that $c = \theta(x_0)$. Then, by the  second statement of (Proposition \ref{proposition:minimalsol}), there is an index $i \in \{1 , 2 , \dots , h\}$ such that $v^{(i)} \leq  x_0$. Set $c_1 = \theta(v^{(i)})$. Then, by (Corollary \ref{corollary:ctitlde}), we have $c_1 \in \widetilde{\cal C}$ and there exist an element $c'\in \widetilde{\cal C}_{\min} = {\cal C}_{b , \min}$ such that $c' \leq c_1$. As $\theta$ is increasing and using (Proposition \ref{proposition:greatestcheb}), we have:
$$c' \leq c_1 = \theta(v^{(i)}) \leq \theta(x_0) = c  \leq  F(\overline{b}(\Delta)).$$
$\Longleftarrow$

\noindent
As $F(c) = c$, by (Proposition \ref{proposition:appFreformulated}), we have $c\in{\cal C}$. It remains us to prove that $\Vert b - c \Vert = \Delta$.

\noindent
Let $c' \in{\cal C}_{b,\min}$ such that $c' \leq c$. As we have: 
$$\Vert b - c' \Vert = \Vert b - F(\overline{b}(\Delta)) \Vert = \Delta \quad \text{and}\quad c' \leq c \leq F(\overline{b}(\Delta)),$$
we deduce for all $i\in \{1  , 2 , \dots , n\}:$ 
$$- \Delta \leq b_i  -  F(\overline{b}(\Delta))_i  \leq b_i - c_i \leq b_i - c'_i \leq \Delta,$$
so $\Vert b - c \Vert \leq \Delta$. As $c\in {\cal C}$, we have also $\Vert b - c \Vert \geq  \Delta$. Finally, we conclude $\Vert b - c \Vert = \Delta$, so $c$ is a Chebyshev approximation of $b$, i.e., $c \in \mathcal{C}_b$.
\end{proof}

We illustrate our method for obtaining the minimal Chebyshev approximations of $b$.

\begin{example}(continued) We continue with the results in (Example \ref{ex:example5}). \\We remind that the Chebyshev distance associated to the second member $b$ of the system $A \Box_{\min}^{\max} x = b$ is $\Delta = 0.16$.\\
We compute:
\begin{equation*}\label{eq:ex7:eta}
\underline{b}(\Delta) = \begin{bmatrix}
    0.38 \\
    0.00 \\
    0.71
    \end{bmatrix}, 
    \overline{b}(\Delta) = \begin{bmatrix}
0.70\\
0.29\\
1.00 \\
\end{bmatrix}\text{ and } \eta =  A^t \Box_{\rightarrow_G}^{\min} F(\overline{b}(\Delta)) =\begin{bmatrix}
      0.29\\
      1\\
      1
   \end{bmatrix}.
\end{equation*}    
The vector $\theta(\eta) = \begin{bmatrix}
0.38 \\
0.29\\
0.85 \\
\end{bmatrix}$ is the greatest Chebyshev approximation of the second member $b$ of the system.

\noindent The system of inequalities   $\underline{b}(\Delta) \leq A  \Box_{\min}^{\max} x$ is:
$
    \begin{bmatrix}
        0.38\\
        0.00\\
        0.71
    \end{bmatrix} \leq 
        \begin{bmatrix}
0.03&0.38&0.26\\
0.98&0.10&0.03\\
0.77&0.15&0.85\\
\end{bmatrix}\Box_{\min}^{\max} \begin{bmatrix}
    x_1\\
    x_2\\
    x_3\\
\end{bmatrix}$.\\
\noindent Using the approach of \cite{matusiewicz2013increasing}, we obtain two minimal solutions: $
   v=\begin{bmatrix}
   0.00 \\
   0.38 \\
   0.71 
   \end{bmatrix} \text{ and } 
   v'=\begin{bmatrix}
       0.71\\ 0.38\\ 0.00
   \end{bmatrix}$ of the system of inequalities. Among these minimal solutions, only $v$ is lower than $\eta$. \\The set $\widetilde{\mathcal{C}}$, see (\ref{eq:ctilde}), contains one element, which is $A \Box_{\min}^{\max} v   = \begin{bmatrix}0.38\\ 0.10\\ 0.71\end{bmatrix}$ and we have $\widetilde{\mathcal{C}} = (\widetilde{\mathcal{C}})_{\min}$. Therefore, from (Corollary \ref{corollary:ctitlde}),  
    the unique minimal Chebyshev approximation of $b$ is $\check{b} = \begin{bmatrix}0.38\\ 0.10\\ 0.71\end{bmatrix}$.\\

    \noindent Some approximate solutions of the system $(S)$ are  the solutions of the system $\theta(\eta) = A \Box_{\min}^{\max} x$  and the  solutions of the system $\check{b} = A \Box_{\min}^{\max} x$.  
\end{example}

\section{Learning approximate weight matrices according to training data}
\label{sec:learning}
\noindent Numerous approaches have been proposed for learning a weight matrix relating input data to output data by $\max-\min$ composition  \cite{blanco1994solving,blanco1995identification,blanco1995improved,ciaramella2006fuzzy,de1993neuron,hirota1982fuzzy,hirota1996solving,hirota1999specificity,ikoma1993estimation,li2017convergent,pedrycz1983numerical,pedrycz1991neurocomputations,pedrycz1995genetic,saito1991learning,stamou2000neural,teow1997effective,zhang1996min}. One of the pioneering works is that of Pedrycz \cite{pedrycz1991neurocomputations}. He highlighted that we can represent a system of $\max-\min$ fuzzy relational equations $W \Box_{\min}^{\max} x = y$ by a neural network, where $W = \begin{bmatrix}
    w_{ij}
\end{bmatrix}_{1 \leq i \leq n, 1 \leq j \leq m} \in [0,1]^{n \times m}$ is called a weight matrix and  $x = \begin{bmatrix}
    x_j
\end{bmatrix}_{1\leq j \leq m} \in [0,1]^{m \times 1}$  and $y = \begin{bmatrix}
    y_i
\end{bmatrix}_{1 \leq i \leq n} \in [0,1]^{n \times 1}$ are column vectors. 
The neural network (Figure \ref{fig:nnNew}) has $m$ input nodes corresponding to the components $x_1, x_2,\cdots, x_m$ of $x$, $n$ output nodes corresponding to the components $y_1, y_2, \cdots, y_n$ of $y$ and $n\cdot m$ edges such that each of the edges is weighted by the component  $w_{ij}$ of  $W$, and connects the input node  $x_j$ to the output node  $y_{i}$. For $1 \leq i \leq n$, the value of the output node  $y_i$ is given by  $y_i = \max_{1 \leq j \leq m} \min(w_{ij},x_j)$. \\

\begin{figure}[H]
\centering

\scriptsize 
    	\begin{neuralnetwork} [nodespacing=13mm, layerspacing=25mm,
			maintitleheight=0em, layertitleheight=0em,
			height=0, toprow=false, nodesize=20pt, style={},
			title={}, titlestyle={}]
			
	 \newcommand{\nodetextinput}[2]{
	    \ifnum1=#2 $i_{1}$   \fi 
		\ifnum2=#2 $i_{2}$ \fi 
		\ifnum3=#2 $\cdots$ \fi 
		\ifnum4=#2 $i_{m}$ \fi 
	    }
	    
	     \newcommand{\nodetextinputvar}[2]{
	    \ifnum1=#2 $x_{1}$   \fi 
		\ifnum2=#2 $x_{2}$ \fi 
		\ifnum3=#2 $\cdots$ \fi 
		\ifnum4=#2 $x_{m}$ \fi 
	    }
	    
	      \newcommand{\nodetextoutputvar}[2]{
	    \ifnum1=#2 $y_{1}$   \fi 
		\ifnum2=#2 $y_{2}$ \fi 
		\ifnum3=#2 $\cdots$ \fi 
		\ifnum4=#2 $y_{n}$ \fi 
	    }

		\newcommand{\nodetextoutput}[2]{
	    \ifnum1=#2 $o_{1}$   \fi 
		\ifnum2=#2 $o_{2}$ \fi 
		\ifnum3=#2 $\cdots$ \fi 
		\ifnum4=#2 $o_{n}$ \fi 
	    }

		\newcommand{\linklabelsEdgeO}[4]{
		\ifnum1=#2 $w_{11}$   \fi 
		\ifnum2=#2 $w_{12}$ \fi 
		\ifnum4=#2 $w_{1m}$ \fi
		}

		\tikzstyle{inputoutput}=[neuron, fill=none];
		
\layer[count=4, bias=false,nodeclass=inputoutput,text=\nodetextinputvar]
	    
\inputlayer[count=4,bias=false,text=\nodetextinput]	
\link[from layer=0, to layer=1, from node=1, to node=1]
\link[from layer=0, to layer=1, from node=2, to node=2]
\link[from layer=0, to layer=1, from node=3, to node=3]
\link[from layer=0, to layer=1, from node=4, to node=4]

\outputlayer[count=4,bias=false, text=\nodetextoutput]
	\link[from layer=1, to layer=2, from node=1, to node=1, label=\linklabelsEdgeO,style=black!90]
  \link[from layer=1, to layer=2, from node=2, to node=1, label=\linklabelsEdgeO,style=black!90]
  \link[from layer=1, to layer=2, from node=3, to node=1, label=\linklabelsEdgeO,style=black!90]
   \link[from layer=1, to layer=2, from node=4, to node=1, label=\linklabelsEdgeO,style=black!90]
  
  \link[from layer=1, to layer=2, from node=1, to node=2]
  \link[from layer=1, to layer=2, from node=2, to node=2]
  \link[from layer=1, to layer=2, from node=3, to node=2]
  \link[from layer=1, to layer=2, from node=4, to node=2]
  
  \link[from layer=1, to layer=2, from node=1, to node=3]
  \link[from layer=1, to layer=2, from node=2, to node=3]
  \link[from layer=1, to layer=2, from node=3, to node=3]
  \link[from layer=1, to layer=2, from node=4, to node=3]
  
   \link[from layer=1, to layer=2, from node=1, to node=4]
  \link[from layer=1, to layer=2, from node=2, to node=4]
  \link[from layer=1, to layer=2, from node=3, to node=4]
  \link[from layer=1, to layer=2, from node=4, to node=4]
 
 \layer[count=4, bias=false,nodeclass=inputoutput,text=\nodetextoutputvar]
 
 \link[from layer=2, to layer=3, from node=1, to node=1]
\link[from layer=2, to layer=3, from node=2, to node=2]
\link[from layer=2, to layer=3, from node=3, to node=3]
\link[from layer=2, to layer=3, from node=4, to node=4]
 
\end{neuralnetwork}
		\caption{A system of $\max-\min$ fuzzy relational equations represented by a $\max-\min$ neural network. Green nodes are input nodes and red nodes are output nodes.}
		\label{fig:nnNew} 
\end{figure}
\noindent To learn the weight matrix $W$ according to training data, most of the approaches try to adapt the classical  gradient descent method to such a $\max-\min$ fuzzy neural network in order to minimize  the   learning error $E(W)$ expressed in the   $L_2$ norm.  However,  since  the functions $\max$ and $\min$ are not fully differentiable, it is rather   difficult to adapt the classical gradient descent to this framework. This issue was recently again encountered in  \cite{van2022analyzing}.
\noindent In these approaches, it seems that the choice of the $L_2$ norm is motivated by its adequacy to the differentiable calculus, while being equivalent to the $L_\infty$ norm (two norms on the vector space $\mathbb{R}^n$ are equivalent).

\noindent In this section, based on our results, we introduce a paradigm to approximately learn a weight matrix relating input and output data from the following training data:
\begin{equation}\label{def:trainingdataIntro}
(x^{(i)})_{1\leq i \leq N},  x^{(i)} \in [0 , 1]^{m \times 1} \quad ; \quad 
(y^{(i)})_{1\leq i \leq N},  y^{(i)} \in [0 , 1]^{n \times 1}. \end{equation}
\noindent For $i=1,2,\dots,N$, each pair $(x^{(i)},y^{(i)})$ is a training datum, where $x^{(i)}$ is the input data vector and $y^{(i)}$ is the targeted output data vector. Our choice of norm to express the learning error is the $L_\infty$ norm:
\begin{equation}\label{eq:ew}
    E(W) = \max_{1 \leq i \leq N} \Vert  y^{(i)} - W \Box_{\min}^{\max} x^{(i)} \Vert   
\end{equation}
\noindent \textit{where the norm of a vector $z$ of $n$ components is $\Vert z \Vert =   \max_{1 \leq k \leq n} \mid z_k \mid$}. \\
\noindent The first main result of this section is that we can compute by an analytical formula a positive constant $\mu$, which depends only on the training data, such that the following equality holds: 
\begin{equation}\label{eq:muequalminW}
    \mu = \min_{W \in {[0,1]^{n \times m}}} E(W).
\end{equation}
\noindent In other words, our positive constant $\mu$ minimizes the learning error. 
 Whatever if $\mu=0$ or $\mu > 0$, we give a method to get a weight matrix $W^\ast$ such that $E(W^\ast) = \mu$. If $\mu = 0$, this method is based on the solving of  $n$ consistent systems of $\max-\min$ fuzzy relational equations constructed from the training data that we will introduce. Otherwise, if $\mu > 0$, we get an approximate weight matrix $W^\ast$ by gathering  approximate solutions (Definition \ref{def:approxsol}) of these same systems using (Section \ref{sec:relating}).

\noindent This section is structured as follows. Considering a training data where the outputs are scalar (one value), we begin by relating the problem of learning a weight matrix connecting input data to output data  to the solving of a system canonically associated to this training data (Subsection \ref{subsec:relatingchelou}). Then, we tackle the general problem  (Subsection \ref{subsec:generalcase}). After giving some notations (Subsection \ref{subsec:notationslearning}) and defining the positive constant $\mu$ (Definition \ref{def:mu}) of  (Subsection \ref{subsec:defest}), we prove (\ref{eq:muequalminW}) and  give a method ((Method \ref{method:defou}) of (Subsection \ref{subsec:method})) for constructing approximate weight matrices  i.e., matrices $W$ such that $E(W) = \mu$. Finally, in (Subsection \ref{subsec:example}), we illustrate our results with two examples.

\subsection{Relating the problem of learning a weight matrix connecting input data to output data to the solving of a system canonically associated to these data}
\label{subsec:relatingchelou}

\noindent Assume a training data composed of $N$ piece of data as follows:
\begin{equation}\label{def:trainingdatasingleoutput}
(x^{(i)})_{1\leq i \leq N},  x^{(i)} \in [0 , 1]^{m \times 1} \quad ; \quad 
(y^{(i)})_{1\leq i \leq N},  y^{(i)} \in [0 , 1]. \end{equation}
\noindent For $i=1,2,\dots,N$, each pair $(x^{(i)},y^{(i)})$ is a training datum, where $x^{(i)}$ is an input data vector and $y^{(i)}$ is the targeted output data value in $[0,1]$.

\noindent We want to learn a weight matrix $V \in [0,1]^{1 \times m}$ such that:
\begin{equation}\label{eq:problemxiyi}
    \forall i \in \{1,2,\dots,N\},\,\, V \Box_{\min}^{\max} x^{(i)} = y^{(i)}.
\end{equation}

\noindent To tackle this problem, the idea is to introduce the following  system which is canonically associated to the training data: 
\begin{equation}\label{eq:slub}
    (S): L \Box_{\min}^{\max} u = b,
\end{equation}
\noindent where:
\begin{equation}\label{eq:see38L}
    L = \begin{bmatrix}
    x_j^{(i)}
\end{bmatrix}_{1 \leq i \leq N, 1 \leq j \leq m}=\begin{bmatrix}
    x_{1}^{(1)} & x_{2}^{(1)} & \cdots & x_{m}^{(1)} \\
    x_{1}^{(2)} & x_{2}^{(2)} & \cdots & x_{m}^{(2)} \\
    \vdots & \vdots & \vdots & \vdots \\
     x_{1}^{(N)} & x_{2}^{(N)} & \cdots & x_{m}^{(N)} 
    \end{bmatrix}\quad \text{and} \quad b = [y^{(i)}]_{1 \leq i \leq N} = \begin{bmatrix}
        y^{(1)}\\
        y^{(2)}\\
        \vdots\\
        y^{(N)}
    \end{bmatrix}.
\end{equation} 
    So the rows of $L$ are the transpose of the input data column vectors  $x^{(1)}, x^{(2)}, \dots,x^{(N)}$ and the components of $b$ are the targeted output values $y^{(1)},y^{(2)},\dots,y^{(N)}$.

\noindent To relate the problem formulated in (\ref{eq:problemxiyi}) to the system $(S)$, we will use the following lemma: 
\begin{lemma}\label{lemma:v:rowmatrix}
Let $v = \begin{bmatrix}
v_1\\ v_2 \\ \vdots \\ v_m \end{bmatrix} \in[0 , 1]^{m \times 1}$ be a column-vector and $V =     \begin{bmatrix}
v_1& v_2& \cdots & v_m  \end{bmatrix}\in[0 , 1]^{1 \times m}$ is the row matrix which is the transpose of $v$. 
We put  $v' = \begin{bmatrix}v'_i\end{bmatrix}_{1\leq i \leq N} =  L    \Box_{\min}^{\max} v$. Then, we have: 
\begin{enumerate} 
\item $\forall i \in \{1 , 2 , \dots ,N\}\,,\,v'_i = V \Box_{\min}^{\max} x^{(i)}\in [0 , 1]$,
\item $\Vert  b - v' \Vert = \max_{1 \leq i \leq N} \mid   y^{(i)} - V \Box_{\min}^{\max} x^{(i)} \mid$.
\end{enumerate}
The second statement implies that $V$  is a weight matrix of the  training data $((x^{(i)})_{1\leq i \leq N}, (y^{(i)})_{1 \leq i \leq N})$, see (\ref{eq:problemxiyi}),  if and only if $v$ is a solution of the system $(S)$.
 \end{lemma}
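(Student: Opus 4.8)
The plan is to prove the two numbered statements directly from the definition of the matrix product $\Box_{\min}^{\max}$, and then observe that the final claim follows immediately from the second statement combined with relation (\ref{ineq:bbbar}). The whole lemma is essentially a bookkeeping argument about how the rows of $L$ encode the input vectors $x^{(i)}$, so I do not expect any real obstacle; the only care needed is keeping the row/column conventions straight.

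First I would unfold the definition of $v' = L \Box_{\min}^{\max} v$ componentwise. For each $i \in \{1,2,\dots,N\}$, the $i$-th component is $v'_i = \max_{1 \leq j \leq m} \min(L_{ij}, v_j)$, and by definition (\ref{eq:see38L}) we have $L_{ij} = x_j^{(i)}$, so $v'_i = \max_{1 \leq j \leq m} \min(x_j^{(i)}, v_j)$. On the other side, since $V$ is the transpose of $v$, i.e.\ $V = [v_1\ v_2\ \cdots\ v_m]$, the product $V \Box_{\min}^{\max} x^{(i)}$ is the single scalar $\max_{1 \leq j \leq m} \min(v_j, x_j^{(i)})$. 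These two expressions coincide, which proves the first statement; moreover each is a $\max$ of values in $[0,1]$, hence lies in $[0,1]$.

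For the second statement, I would start from the definition of the $L_\infty$ norm, $\Vert b - v' \Vert = \max_{1 \leq i \leq N} \mid b_i - v'_i \mid$, substitute $b_i = y^{(i)}$ from (\ref{eq:see38L}), and replace $v'_i$ by $V \Box_{\min}^{\max} x^{(i)}$ using the first statement. This yields $\Vert b - v' \Vert = \max_{1 \leq i \leq N} \mid y^{(i)} - V \Box_{\min}^{\max} x^{(i)} \mid$ directly.

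Finally, for the concluding equivalence, I would argue as follows. Saying that $V$ is a weight matrix of the training data means, by (\ref{eq:problemxiyi}), that $V \Box_{\min}^{\max} x^{(i)} = y^{(i)}$ for all $i$, which by the first statement is exactly $v'_i = b_i$ for all $i$, i.e.\ $v' = b$. But $v' = L \Box_{\min}^{\max} v$, so this is precisely the statement that $v$ is a solution of the system $(S): L \Box_{\min}^{\max} u = b$, see (\ref{eq:slub}) and (\ref{eq:setofsolutionsS}). This completes the proof.
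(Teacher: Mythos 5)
Your proposal is correct and follows essentially the same route as the paper: unfold $v' = L \Box_{\min}^{\max} v$ componentwise, use $l_{ij} = x_j^{(i)}$ to identify $v'_i$ with $V \Box_{\min}^{\max} x^{(i)}$, and then substitute into the $L_\infty$ norm for the second statement. The only (harmless) difference is that you derive the concluding weight-matrix equivalence explicitly from the first statement, whereas the paper leaves it as an immediate consequence of the second.
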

\begin{proof}  We have:
\begin{align} \forall i \in \{1 , 2 , \dots ,N\}\,,\,v'_i & = \max_{1\leq j \leq m}\, \min(l_{ij} , v_j) \nonumber\\
& =    \max_{1\leq j \leq m}\, \min(x^{(i)}_j , v_j) \nonumber\\
& = V \Box_{\min}^{\max} x^{(i)}.\nonumber
\end{align}
From these computations, we deduce the second statement: 
\begin{equation*}
    \Vert b - v'   \Vert  =   \max_{ 1 \leq i  \leq N}\, \,\mid    y^{(i)}-v'_i\mid= \max_{ 1 \leq i  \leq N}\,\mid y^{(i)} - V \Box_{\min}^{\max} x^{(i)}    \mid. 
\end{equation*}
\end{proof} 
\noindent The  problem formulated in (\ref{eq:problemxiyi}) is related to the system $(S)$ by: 
\begin{proposition}
Let $v = \begin{bmatrix}
v_1\\ v_2 \\ \vdots \\ v_m \end{bmatrix} \in[0 , 1]^{m \times 1}$ be a column-vector and $V =     \begin{bmatrix}
v_1& v_2& \cdots & v_m  \end{bmatrix}\in[0 , 1]^{1 \times m}$ is the row matrix which is the transpose of $v$. We have:
 \[ v \,\,\text{is a solution of the system $(S)$}\, \Longleftrightarrow\, 
\forall i \in \{1 , 2 , \dots ,N\}\, V \Box_{\min}^{\max} x^{(i)} =  y^{(i)}.\]
\end{proposition}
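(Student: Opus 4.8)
The plan is to observe that this proposition is an immediate consequence of Lemma \ref{lemma:v:rowmatrix}, which has already established the core computational identity. The entire content of the argument reduces to the first statement of that lemma, namely that for the column vector $v$ and its transpose row matrix $V$, we have $v'_i = V \Box_{\min}^{\max} x^{(i)}$ for each $i$, where $v' = L \Box_{\min}^{\max} v$.

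First I would recall that, by definition (\ref{eq:setofsolutionsS}) applied to the canonically associated system $(S): L \Box_{\min}^{\max} u = b$ of (\ref{eq:slub}), the vector $v$ is a solution of $(S)$ if and only if $L \Box_{\min}^{\max} v = b$, i.e., $v' = b$. Since equality of vectors is equality componentwise, this is equivalent to $v'_i = b_i$ for all $i \in \{1, 2, \dots, N\}$. Then I would substitute the two identities supplied by the data: on the left, $v'_i = V \Box_{\min}^{\max} x^{(i)}$ by the first statement of Lemma \ref{lemma:v:rowmatrix}; on the right, $b_i = y^{(i)}$ by the definition of $b$ in (\ref{eq:see38L}). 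Chaining these equivalences yields
\begin{equation*}
v \text{ solves } (S) \Longleftrightarrow \forall i,\ v'_i = b_i \Longleftrightarrow \forall i,\ V \Box_{\min}^{\max} x^{(i)} = y^{(i)},
\end{equation*}
which is exactly the claimed equivalence.

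There is essentially no obstacle here: the proposition is a restatement of the second half of Lemma \ref{lemma:v:rowmatrix} (the sentence following its two numbered statements already asserts this equivalence), so the proof is purely a matter of unpacking the definition of what it means for $v$ to be a solution of $(S)$ and applying the lemma componentwise. The only point requiring the slightest care is to note that the componentwise reading of the vector equation $L \Box_{\min}^{\max} v = b$ is justified by the order and equality conventions fixed in the opening \emph{Notation}, and that the correspondence between $v$ and $V$ via transposition is exactly the hypothesis shared with the lemma, so no new verification of the $\max$--$\min$ product is needed.
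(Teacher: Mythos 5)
Your proposal is correct and takes essentially the same route as the paper: the paper's proof is a one-line reduction to Lemma \ref{lemma:v:rowmatrix}, exactly as yours is. The only cosmetic difference is that the paper cites the lemma's \emph{second} statement (the norm identity $\Vert b - v'\Vert = \max_{1\leq i \leq N}\,\mid y^{(i)} - V\Box_{\min}^{\max}x^{(i)}\mid$, which vanishes iff all components agree), whereas you chain the \emph{first} statement's componentwise identity $v'_i = V\Box_{\min}^{\max}x^{(i)}$ with $b_i = y^{(i)}$; the two are interchangeable since the second statement is itself derived from the first.
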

\begin{proof} 
The proof of this proposition follows directly from the second statement of (Lemma \ref{lemma:v:rowmatrix}).  
\end{proof} 
\noindent We have:
\begin{remark}The transpose map $[0,1]^{m \times 1} \rightarrow [0,1]^{1 \times m}: v \mapsto V = v^t$ defines a bijective correspondence between solutions of the system $(S)$ and weight matrices associated to the training data. 
\end{remark}

\noindent In the case where the system $(S): L \Box_{\min}^{\max} u = b$ is inconsistent,  we will show that the transpose map still defines a bijective correspondence between  approximate solutions of the system $(S)$ (Definition \ref{def:approxsol}) and approximate weight matrices $V$ i.e., matrices satisfying the following equality:
\begin{equation}\label{eq:defapproxV}
     \max_{1 \leq i \leq N}\, \mid y^{(i)} - V\Box_{\min}^{\max} x^{(i)}\mid  = \Delta(L,b),
\end{equation}
\noindent where $\Delta(L,b)$ is the Chebyshev distance associated to the second member $b$ of the system $(S)$, see (Definition \ref{def:chebyshevdist}). 

\noindent The definition (\ref{eq:defapproxV}) of an approximate weight matrix $V$ is justified by:
\begin{enumerate}
\item  For any approximate solution $v \in [0,1]^{m \times 1}$ of the system $(S)$, see (\ref{eq:slub}), we have  $\Vert b - L \Box_{\min}^{\max} v   \Vert = \Delta(L,b)$ (Definition \ref{def:approxsol}). 
    \item  It follows from the second statement of (Lemma \ref{lemma:v:rowmatrix})  and (Definition \ref{def:chebyshevdist}) that for any $V \in [0,1]^{1 \times m}$,  we have: 
    \begin{equation}\label{eq:onarrivealamaminimi}
        \max_{1 \leq i \leq N}\, \mid y^{(i)} - V\Box_{\min}^{\max} x^{(i)}\mid = \Vert b - L\Box_{\min}^{\max} v \Vert \geq \Delta(L,b).
    \end{equation}

\end{enumerate}
\noindent This leads to the  definition of the positive constant $\mu$:
\begin{definition}
    The positive constant $\mu$ minimizing the learning error $E(V)=\max_{1 \leq i \leq N} \mid  y^{(i)} - V \Box_{\min}^{\max} x^{(i)} \mid$, see (\ref{eq:ew}), according to the training data, is the Chebyshev distance associated to the second member $b$ of the system $(S)$:
    \begin{equation}
        \mu = \Delta(L,b). 
    \end{equation}
\end{definition}
\noindent This definition is justified by (\ref{eq:onarrivealamaminimi}), which we rewrite as \begin{equation}
    \forall V \in [0,1]^{1 \times m}, \, E(V) \geq \mu.
\end{equation}\\ 
To get the equality $
\mu = \min_{V \in {[0,1]^{1 \times m}}} E(V)$, see (\ref{eq:muequalminW}), we establish the following result:
\begin{proposition}
    Let $v = \begin{bmatrix}
v_1\\ v_2 \\ \vdots \\ v_m \end{bmatrix} \in[0 , 1]^{m \times 1}$ be a column-vector and $V =     \begin{bmatrix}
v_1& v_2& \cdots & v_m  \end{bmatrix}\in[0 , 1]^{1 \times m}$ is the row matrix which is the transpose of $v$. We have:
 \[ v \,\,\text{is an approximate solution of the system $(S)$}\, \Longleftrightarrow\, 
 \max_{1 \leq i \leq N}\, \mid y^{(i)} - V\Box_{\min}^{\max} x^{(i)}\mid  =  \Delta(L,b)=\mu.\]
\end{proposition}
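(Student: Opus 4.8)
The plan is to reduce this statement entirely to the second item of (Lemma \ref{lemma:v:rowmatrix}), which already supplies the crucial bridge between the row-matrix action $V \Box_{\min}^{\max} x^{(i)}$ and the column-vector action $L \Box_{\min}^{\max} v$. First I would recall that by that lemma, putting $v' = L \Box_{\min}^{\max} v$, we have the identity
\begin{equation*}
    \Vert b - v' \Vert = \max_{1 \leq i \leq N} \mid y^{(i)} - V \Box_{\min}^{\max} x^{(i)} \mid,
\end{equation*}
which holds for \emph{every} pair $(v, V)$ of a column vector and its transpose. This is the only substantive fact needed; the rest is a matter of unwinding definitions.

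Next I would invoke (Definition \ref{def:approxsol}), specifically the equivalence (\ref{eq:approxsol}) applied to the system $(S): L \Box_{\min}^{\max} u = b$: a column vector $v$ is an approximate solution of $(S)$ if and only if $\Vert L \Box_{\min}^{\max} v - b \Vert = \Delta(L,b)$, that is, $\Vert b - v' \Vert = \Delta(L,b)$. Combining this with the displayed identity above immediately yields
\begin{equation*}
    v \in \Lambda_b \Longleftrightarrow \max_{1 \leq i \leq N} \mid y^{(i)} - V \Box_{\min}^{\max} x^{(i)} \mid = \Delta(L,b),
\end{equation*}
and the final equality $\Delta(L,b) = \mu$ is just (Definition \ref{def:mu}), the definition of the constant $\mu$. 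Chaining these three equivalences/identities gives exactly the claimed statement.

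I do not anticipate any genuine obstacle here, since the proposition is essentially a restatement of (Lemma \ref{lemma:v:rowmatrix}) under the transpose correspondence together with two definitional unfoldings. The only point requiring minor care is to make sure the two occurrences of the $L_\infty$ norm are the same object: the norm appearing in the definition of an approximate solution of $(S)$ is $\Vert b - v' \Vert = \max_{1 \leq i \leq N} \mid b_i - v'_i \mid$ over the $N$ components indexed by the training data, and this coincides with the maximum over $i$ of $\mid y^{(i)} - V \Box_{\min}^{\max} x^{(i)} \mid$ precisely because $b_i = y^{(i)}$ and $v'_i = V \Box_{\min}^{\max} x^{(i)}$ by the first statement of the same lemma. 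Once this identification is noted, the proof is a one-line chain of equivalences, so the write-up should simply cite (Lemma \ref{lemma:v:rowmatrix}), (Definition \ref{def:approxsol}) and the definition of $\mu$.
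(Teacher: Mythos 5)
Your proposal is correct and takes essentially the same route as the paper, whose proof likewise deduces the equivalence from the second statement of (Lemma \ref{lemma:v:rowmatrix}) combined with the equivalence (\ref{eq:approxsol}), the definition of $\mu$ in this scalar-output setting giving $\mu = \Delta(L,b)$. One tiny citation slip: that equality comes from the unnumbered definition in this subsection rather than from (Definition \ref{def:mu}), which is the later vector-output version $\mu = \max_{1\leq k\leq n}\Delta(L,b^{(k)})$; the two coincide here, so the argument is unaffected.
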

\begin{proof}
   This equivalence is deduced from the second statement of (Lemma \ref{lemma:v:rowmatrix}) and the equivalence (\ref{eq:approxsol}).
\end{proof}
\noindent We deduce:
\begin{corollary}The equality 
    $
\mu = \min_{V \in {[0,1]^{1 \times m}}} E(V)$ holds.
\end{corollary}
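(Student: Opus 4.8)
The plan is to combine the universal lower bound already in hand with an explicit witness that attains it. From inequality (\ref{eq:onarrivealamaminimi}) we already know that $E(V) \geq \mu$ for every row matrix $V \in [0,1]^{1 \times m}$, so the only thing left to establish is that the value $\mu$ is actually reached by some weight matrix $V^\ast$. Once attainability is secured, the infimum becomes a minimum and equals $\mu$, which is exactly the claimed equality (\ref{eq:muequalminW}).

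For the attainability, I would invoke the non-emptiness of the approximate solutions set: by (Proposition \ref{prop:nonemptylambdab}) the set $\Lambda_b$ associated to the canonical system $(S): L \Box_{\min}^{\max} u = b$ is non-empty. I would therefore pick any approximate solution $v \in \Lambda_b$ (the greatest one $\eta$ of (Proposition \ref{prop:greatestaprox}) being a convenient concrete choice) and set $V^\ast = v^t$, its transpose. Since $v$ is an approximate solution, the immediately preceding proposition gives $E(V^\ast) = \max_{1 \leq i \leq N} \mid y^{(i)} - V^\ast \Box_{\min}^{\max} x^{(i)} \mid = \Delta(L,b) = \mu$.

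Putting the two facts together, $E(V) \geq \mu$ for all $V$ while $E(V^\ast) = \mu$, so the infimum of $E$ over $[0,1]^{1 \times m}$ is attained at $V^\ast$ and equals $\mu$. This yields $\min_{V \in [0,1]^{1 \times m}} E(V) = \mu$, as asserted.

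There is no genuine obstacle here: the corollary merely packages two results proved earlier, namely the lower bound (\ref{eq:onarrivealamaminimi}) and the existence of at least one approximate solution. The only point requiring care is to confirm that the quantity being minimized, $E(V)$, coincides exactly with the left-hand side characterized in the preceding proposition; this identification is precisely the content of the second statement of (Lemma \ref{lemma:v:rowmatrix}), which equates $\Vert b - L \Box_{\min}^{\max} v \Vert$ with $\max_{1 \leq i \leq N} \mid y^{(i)} - V \Box_{\min}^{\max} x^{(i)} \mid$.
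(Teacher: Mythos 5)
Your proof is correct and follows essentially the same route as the paper: the paper's one-line proof invokes the non-emptiness of $\Lambda_b$ (Proposition \ref{prop:nonemptylambdab}), implicitly combining it with the lower bound $E(V) \geq \mu$ and the preceding proposition exactly as you do explicitly. You have simply spelled out the details that the paper leaves to the reader.
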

\begin{proof}
    This result is a consequence of the fact that the approximate solution set $\Lambda_b$  is non-empty, see (Proposition~\ref{prop:nonemptylambdab}). 
\end{proof}

We illustrate this construction. 
\begin{example}
\noindent Let us consider the following training data:
\begin{table}[H]
    \centering
    \begin{tabular}{c|c}
       $x^{(1)} =(0.7, 0.4, 0.4)^t$  & $y^{(1)} =0.7$  \\
       $x^{(2)} =(1.0, 0.2, 0.5)^t$  & $y^{(2)} =1.0$  \\ 
       $x^{(3)} =(0.2, 0.3, 0.8)^t$  & $y^{(3)} =0.3$  
    \end{tabular}
    \caption{Training data. We have $N=3, m = 3$.}
    \label{tab:learning:ex22}
\end{table}
We construct the system $(S): L \Box_{\min}^{\max} u= b$ where $L= \begin{bmatrix}0.7 & 0.4 & 0.4  \\ 1.0 & 0.2 & 0.5  \\ 0.2 & 0.3 & 0.8  \end{bmatrix}$ and $b = \begin{bmatrix}0.7 \\ 1.0 \\ 0.3 \end{bmatrix}$. The system is consistent because the Chebyshev distance associated to $b$ is equal to zero:  $\Delta(L,b)=0$, so $\mu= \Delta(L,b)=0$. The greatest solution of $(S)$ is $\begin{bmatrix}
    1.0\\ 1.0\\ 0.3
\end{bmatrix}$ and there are two  minimal solutions $\begin{bmatrix}1.0\\ 0.3\\ 0.0\end{bmatrix}$ and $\begin{bmatrix}1.0\\ 0.0\\ 0.3\end{bmatrix}$ computed using the algorithm of \cite{matusiewicz2013increasing}. Let us use the solution $v = \begin{bmatrix}
    1.0\\ 0.7\\ 0.3
\end{bmatrix}$ of the system $(S)$ and we put $V = v^t=\begin{bmatrix}
    1.0&  0.7& 0.3
\end{bmatrix}$.  The weight matrix $V$ relates input and output data of the training data:
\begin{gather*}
    V\Box_{\min}^{\max} x^{(1)} = y^{(1)}, \\
    V \Box_{\min}^{\max} x^{(2)} = y^{(2)},\\ 
    V \Box_{\min}^{\max} x^{(3)} = y^{(3)}.
\end{gather*}
\end{example}

\subsection{Learning approximate weight matrices  in the general case}
\label{subsec:generalcase}
\noindent We shall extend the above results in the case where the outputs of the training data are column vectors of $n$ components  in $[0,1]$.  Let us consider $N$ training datum as follows:
\begin{equation}\label{def:trainingdata}
(x^{(i)})_{1\leq i \leq N},  x^{(i)} \in [0 , 1]^{m \times 1} \quad ; \quad 
(y^{(i)})_{1\leq i \leq N},  y^{(i)} \in [0 , 1]^{n \times 1}. \end{equation}
\noindent For $i=1,2,\dots,N$, each pair $(x^{(i)},y^{(i)})$ is a training datum, where $x^{(i)}$ is the input data vector and $y^{(i)}$ is the targeted output data vector.

\noindent We study the following problems: 
\begin{enumerate}
\item Is there a weight matrix $W$ of size $(n,m)$ such that:
 \[\forall i \in\{1 , 2 , \cdots , N\},\,\, W \Box_{\min}^{\max} x^{(i)} = y^{(i)}.\]
\item If this not the case, how to define and get a suitable approximate weight matrix $W$ ?
\end{enumerate}

\noindent We will prove the following results:
\begin{enumerate}

\item There is a positive constant denoted $\mu$ which can be computed by an analytical formula according to the training data and which satisfies:
\begin{equation}\label{Dict43}
\forall W \in [0 , 1]^{n \times m}, \max_{1 \leq i \leq N}\, \Vert y^{(i)} -  W \Box_{\min}^{\max} x^{(i)}\Vert \geq \mu.
\end{equation}

\noindent This positive constant minimizes the learning error $E(W)$, see (\ref{eq:ew}), and is expressed in terms of Chebyshev distances associated to the second member of systems of $\max-\min$ fuzzy relational equations  that we will introduce.
\item We will show the following equivalence: having a weight matrix that perfectly relates the input data to the output data is equivalent to having $\mu=0$ i.e., 
\begin{equation}\label{Dict45}
\exists W \in [0 , 1]^{n \times m} , \text{ s.t. }\forall i \in\{1 , 2 , \cdots , N\}, W \Box_{\min}^{\max} x^{(i)} = y^{(i)}
\,\Longleftrightarrow \,
\mu = 0.
\end{equation}
\item We will show that the set of approximate weight matrices:
\begin{equation}\label{eq:learning:setA}
    {\cal A} = \bigg\{ W \in [0,1]^{n \times m} \, \bigl\vert \,\, \max_{1 \leq i \leq N}\, \Vert y^{(i)} - W \Box_{\min}^{\max} x^{(i)}\Vert = \mu \bigg\} 
\end{equation}
\noindent is non-empty. This implies that $\mu = \min_{W \in {[0,1]^{n \times m}}} E(W)$, see (\ref{eq:muequalminW}).

 \end{enumerate}

\noindent In the following, we begin by giving some notations, then we define the positive constant $\mu$ and introduce our method for constructing an approximate  weight matrix $W$ according to training data.

\subsection{Notations}
\label{subsec:notationslearning}
\noindent
We reuse the matrix $L= [l_{ij}]_{1 \leq i \leq N, 1 \leq j \leq m} = [x^{(i)}_j]_{1 \leq i \leq N, 1 \leq j \leq m}$ of size $(N,m)$, see (\ref{eq:see38L}), which is defined by the transpose of the input data column  vectors $x^{(1)}, x^{(2)}, \dots,x^{(N)}$.

\noindent To extend to the case where the output data are vectors of $n$ components, we associate to the training data $n$ systems of $\max-\min$ fuzzy relational equations denoted by $(S_1),(S_2),\dots,(S_n)$, which all use the same matrix $L$ and whose second members are $b^{(1)}, b^{(2)}, \dots, b^{(n)}$. For $1 \leq k \leq n$, the system $(S_k)$ is of the form:
\begin{equation}
    (S_k) : L \Box_{\min}^{\max} u= b^{(k)},
\end{equation}
\noindent
where the unknown part is a column vector $u \in [0,1]^{m \times 1}$ and for  $k=1,2,\dots,n$, the components of the column vector $b^{(k)}=[b_i^{(k)}]_{1 \leq i \leq N}$ are  defined by:
\begin{equation}\label{Dict462}
  b_i^{(k)} = y^{(i)}_k \text{ ; } 1 \leq i \leq N.
 \end{equation}
\noindent We remark that for $k=1,2,\dots,n$ and $i=1,2,\dots,N$, each component $b_i^{(k)}$  of the second member $b^{(k)}$ of the system $(S_k)$ is equal to the component   $y_k^{(i)}$ of the targeted output data vector $y^{(i)}$:
\begin{equation}\label{eq:yi}
    b^{(k)}= \begin{bmatrix}
    y_{k}^{(1)} \\ y_{k}^{(2)} \\ \vdots \\ y_{k}^{(N)}
    \end{bmatrix}.
\end{equation}

\noindent
To any matrix $W = [w_{kj}]_{1 \leq k \leq n , 1 \leq j \leq m}$, we associate the $n$ - tuple of  column-vectors 
$(u^{(1)} ,  u^{(2)} , \dots , u^{(n)})$ where for all $1\leq k\leq n$, the column vector $u^{(k)}= \begin{bmatrix}
    u^{(k)}_j
\end{bmatrix}_{1 \leq j \leq m}$ is the transpose of the $k$-th row of the matrix $W$: 
\begin{equation}\label{Dict47}
u^{(k)}_j  = w_{kj} \text{ ; }  1 \leq k \leq n, 1 \leq j \leq m.
\end{equation}
\noindent  This defines the following bijective map between the sets $[0 , 1]^{n \times m} \,$ and $ \, ([0 , 1]^{m \times 1})^n$: 
\begin{equation}\label{Dict477} [0 , 1]^{n \times m} \rightarrow ([0 , 1]^{m \times 1})^n : W \mapsto (u^{(1)} ,  u^{(2)} , \dots , u^{(n)})\end{equation} 
Every $n$-tuple $(u^{(1)} ,  u^{(2)} , \dots , u^{(n)})$ of  column-vectors in $[0 , 1]^{m \times 1}$ is  the image by the above map  of a unique matrix $W \in [0  , 1 ]^{n \times m}$. Graphically, if $(u^{(1)} ,  u^{(2)} , \dots , u^{(n)})$ is the image of $W$ by the above map, we have:
\begin{equation*}
    W = \begin{bmatrix}
    w_{11} & w_{12} & \cdots & w_{1m} \\[8pt]
    w_{21} & w_{22} & \cdots & w_{2m} \\[8pt]
    \vdots & \vdots & \vdots & \vdots \\[8pt]
     w_{n1} & w_{n2} & \cdots & w_{nm} 
    \end{bmatrix} = \begin{bmatrix}
        {u^{(1)}}^t\\[5pt]
        {u^{(2)}}^t\\[5pt]
        \vdots\\[5pt]
        {u^{(n)}}^t
    \end{bmatrix}.
\end{equation*}

\subsection{Definition of  the positive constant \texorpdfstring{$\mu$}{mu(x,y)} minimizing the learning error \texorpdfstring{$E(W)$}{E(W)}}
\label{subsec:defest}

\noindent We relate the systems $(S_1): L \Box_{\min}^{\max} u = b^{(1)},  (S_2): L \Box_{\min}^{\max} u=b^{(2)}, \dots, (S_n): L \Box_{\min}^{\max}u= b^{(n)}$, associated to the training data to the learning error $E(W)$, see (\ref{eq:ew}) by the following useful result:
\begin{lemma}\label{lemma:7egalitechelou}
 For all matrices $W$ of size $(n , m)$, we have: 
\begin{equation}\label{rel:wtos1s2sn}
E(W) = \max_{1 \leq i \leq N}\, \Vert y^{(i)} - W \Box_{\min}^{\max} x^{(i)}\Vert = 
\max_{1 \leq k \leq n}\, \Vert b^{(k)}  - L \Box_{\min}^{\max} u^{(k)}\Vert,
\end{equation}
where $u^{(k)}$ is the column vector corresponding to the transpose of the $k$-th row of the matrix $W$.
\end{lemma}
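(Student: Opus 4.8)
The plan is to prove the identity \eqref{rel:wtos1s2sn} by unwinding the two norms componentwise and showing that the same set of scalar quantities appears on both sides, merely indexed differently. The first equality in the statement is just the definition of $E(W)$, see \eqref{eq:ew}, so the real content is the second equality
\[
\max_{1 \leq i \leq N}\, \Vert y^{(i)} - W \Box_{\min}^{\max} x^{(i)}\Vert = \max_{1 \leq k \leq n}\, \Vert b^{(k)}  - L \Box_{\min}^{\max} u^{(k)}\Vert.
\]
The key observation is that both sides, after expanding the $L_\infty$ norm $\Vert z \Vert = \max_k \mid z_k \mid$, become a double maximum over a grid of indices $(i,k)$ with $1 \leq i \leq N$ and $1 \leq k \leq n$, taken of absolute differences of individual scalars. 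The argument reduces to identifying, for each pair $(i,k)$, the $k$-th component of $y^{(i)} - W \Box_{\min}^{\max} x^{(i)}$ with the $i$-th component of $b^{(k)} - L \Box_{\min}^{\max} u^{(k)}$.

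First I would fix indices $1 \leq i \leq N$ and $1 \leq k \leq n$ and compute the two relevant scalars. On the left, the $k$-th component of $W \Box_{\min}^{\max} x^{(i)}$ is $\max_{1 \leq j \leq m} \min(w_{kj}, x^{(i)}_j)$, so the relevant term is $\mid y^{(i)}_k - \max_{1 \leq j \leq m} \min(w_{kj}, x^{(i)}_j) \mid$. On the right, the $i$-th component of $L \Box_{\min}^{\max} u^{(k)}$ is $\max_{1 \leq j \leq m} \min(l_{ij}, u^{(k)}_j)$, so the relevant term is $\mid b^{(k)}_i - \max_{1 \leq j \leq m} \min(l_{ij}, u^{(k)}_j) \mid$. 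Now I invoke the three definitions connecting the two pictures: $l_{ij} = x^{(i)}_j$ from \eqref{eq:see38L}, $u^{(k)}_j = w_{kj}$ from \eqref{Dict47}, and $b^{(k)}_i = y^{(i)}_k$ from \eqref{Dict462}. Substituting these shows the two scalars are literally equal, since $\min(w_{kj}, x^{(i)}_j) = \min(l_{ij}, u^{(k)}_j)$ for every $j$ and the outer $y^{(i)}_k = b^{(k)}_i$ matches.

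Having established the pointwise equality of the scalar terms indexed by $(i,k)$, the conclusion follows by expanding each norm and commuting the maxima. On the left, $\max_{1 \leq i \leq N}\, \Vert y^{(i)} - W \Box_{\min}^{\max} x^{(i)}\Vert = \max_{1 \leq i \leq N}\, \max_{1 \leq k \leq n}\, T_{ik}$, where $T_{ik}$ denotes the common scalar term above; on the right, $\max_{1 \leq k \leq n}\, \Vert b^{(k)} - L \Box_{\min}^{\max} u^{(k)}\Vert = \max_{1 \leq k \leq n}\, \max_{1 \leq i \leq N}\, T_{ik}$. Since the maximum over a finite index set is associative and commutative, the double maxima $\max_i \max_k$ and $\max_k \max_i$ coincide, giving the desired identity. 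I do not anticipate a genuine obstacle here — the statement is essentially a bookkeeping lemma asserting that transposing the roles of ``training datum index'' and ``output coordinate index'' leaves the global $L_\infty$ error invariant. The only point requiring a little care is to write the substitutions $l_{ij}=x^{(i)}_j$, $u^{(k)}_j=w_{kj}$, $b^{(k)}_i=y^{(i)}_k$ cleanly so that the reader sees the term-by-term identification rather than just the final commutation of maxima.
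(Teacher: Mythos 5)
Your proposal is correct and matches the paper's own proof essentially verbatim: both expand the two $L_\infty$ norms into double maxima of scalar terms, identify those terms via the definitional substitutions $l_{ij}=x^{(i)}_j$, $u^{(k)}_j=w_{kj}$, $b^{(k)}_i=y^{(i)}_k$, and conclude by commuting the finite maxima $\max_i\max_k = \max_k\max_i$. No gap; nothing further to add.
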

\begin{proof}For all   $1 \leq i \leq N$, we have:
\begin{align}
   \Vert y^{(i)} - W \Box_{\min}^{\max} x^{(i)}\Vert  &  =  \max_{1 \leq k \leq n}\,\mid   y^{(i)}_k - 
   \max_{1 \leq j \leq m}\min(w_{kj} , x^{(i)}_j)\mid   \nonumber\\
   & = \max_{1 \leq k \leq n}\,\mid   b^{(k)}_i  - 
   \max_{1 \leq j \leq m}\min(u^{(k)}_j , l_{ij})\mid.   \nonumber\end{align} 
For all   $1 \leq k \leq n$, we have:
\begin{align}
   \Vert b^{(k)}  - L \Box_{\min}^{\max} u^{(k)}\Vert &  =  \max_{1 \leq i \leq N}\,\mid   b^{(k)}_i  -
    \max_{1 \leq j \leq m}\min(l_{ij} , u^{(k)}_j)\mid.
      \nonumber \end{align}
Finally, we get:
\begin{align}
 \max_{1 \leq i \leq N}\,  \Vert y^{(i)} - W \Box_{\min}^{\max} x^{(i)}\Vert  & =  \max_{1 \leq i \leq N}\, 
\max_{1 \leq k \leq n}\,\mid   b^{(k)}_i  - 
   \max_{1 \leq j \leq m}\min(u^{(k)}_j , l_{ij})\mid   \nonumber\\
   &= \max_{1 \leq k \leq n}\, \max_{1 \leq i \leq N}\, 
 \mid   b^{(k)}_i  - 
   \max_{1 \leq j \leq m}\min(l_{ij} , u^{(k)}_j)\mid   \nonumber\\ 
   & =    \max_{1 \leq k \leq n}\,\Vert b^{(k)}  - L \Box_{\min}^{\max} u^{(k)}\Vert.\nonumber\end{align}
\end{proof}

We remark that for $1 \leq k \leq n$, we have:
\begin{itemize}
    \item If the system $(S_k)$ is consistent,   the Chebyshev distance associated to its second member $b^{(k)}$ ,  see (Definition~\ref{def:chebyshevdist}), is equal to zero i.e., $\Delta(L, b^{(k)}) = 0$, so obviously, we have:
$$\Vert b^{(k)}  - L \Box_{\min}^{\max} u^{(k)}\Vert \geq \Delta(L, b^{(k)}) = 0,$$
\noindent where $u^{(k)}$ is the transpose of the $k$-th row of $W$.
\item If the system $(S_k)$ is inconsistent, we note that the system 
formed by the matrix $L$ and the vector $L \Box_{\min}^{\max} u^{(k)}$ as second member is consistent (one of its solution is $u^{(k)}$). By definition of the  Chebyshev distance $\Delta(L, b^{(k)})$,  we have:
 $$\Vert b^{(k)}  - L \Box_{\min}^{\max} u^{(k)}\Vert \geq \Delta(L, b^{(k)}) >  0, \quad \text{ see (Definition  \ref{def:chebyshevdist}).}$$
\end{itemize}

\noindent
These remarks justify the introduction of the following definition: 
\begin{definition}\label{def:mu}
The positive constant $\mu$ minimizing the learning error $E(W)$, see (\ref{eq:muequalminW}), according to the training data is:
\begin{equation}\label{def:mu:error}
\mu := \max_{1 \leq k \leq n}\, \Delta(L, b^{(k)}).
\end{equation}
\end{definition}

\noindent
From (\ref{rel:wtos1s2sn}),  (\ref{def:mu:error}) and the above remarks, we immediately justify that $\mu$ minimizes the learning error $E(W)$:
\begin{proposition}\label{prop:ewsupmu}
For all matrix $W$   of size $(n , m)$, we have:
\begin{equation}\label{Dict49}
E(W)= \max_{1 \leq i \leq N}\, \Vert y^{(i)} - W \Box_{\min}^{\max} x^{(i)}\Vert  \geq \mu.
 \end{equation}
\end{proposition}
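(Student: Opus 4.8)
The plan is to reduce the claim to the elementary lower bounds recorded just before (Definition \ref{def:mu}), using (Lemma \ref{lemma:7egalitechelou}) as the bridge between the learning error and the $n$ systems $(S_k)$. First I would fix an arbitrary matrix $W \in [0,1]^{n \times m}$ and let $(u^{(1)}, u^{(2)}, \dots, u^{(n)})$ be the $n$-tuple of column vectors associated to $W$ by (\ref{Dict477}), i.e.\ $u^{(k)}$ is the transpose of the $k$-th row of $W$. By (Lemma \ref{lemma:7egalitechelou}), the learning error rewrites as
\begin{equation*}
E(W) = \max_{1 \leq k \leq n}\, \Vert b^{(k)} - L \Box_{\min}^{\max} u^{(k)}\Vert,
\end{equation*}
so it suffices to bound each term of this maximum from below by $\Delta(L, b^{(k)})$.

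The key observation for each fixed $k$ is that the vector $c^{(k)} := L \Box_{\min}^{\max} u^{(k)}$ is, by construction, a second member for which the system $L \Box_{\min}^{\max} u = c^{(k)}$ is consistent, since $u^{(k)}$ itself is a solution. Hence $c^{(k)}$ belongs to the set $\mathcal{C}$ of consistent second members associated with $L$, see (\ref{def:setofsecondmembersB}) now read with $L$ in place of $A$. Therefore, by the definition of the Chebyshev distance as an infimum over $\mathcal{C}$ (Definition \ref{def:chebyshevdist}), I obtain $\Vert b^{(k)} - c^{(k)}\Vert \geq \Delta(L, b^{(k)})$. This is exactly the two-case remark (according to whether $(S_k)$ is consistent or inconsistent) stated above.

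Taking the maximum over $k$ and invoking (Definition \ref{def:mu}) then yields
\begin{equation*}
E(W) = \max_{1 \leq k \leq n}\, \Vert b^{(k)} - c^{(k)}\Vert \geq \max_{1 \leq k \leq n}\, \Delta(L, b^{(k)}) = \mu,
\end{equation*}
and since $W$ was arbitrary this establishes (\ref{Dict49}). I do not expect any genuine obstacle here: all the substantive work, namely the reindexing that turns the $N$ training datum into the $n$ systems $(S_k)$ and interchanges the two outer maxima, is already carried out in (Lemma \ref{lemma:7egalitechelou}); the remaining inequality is merely the characterization of $\Delta(L, b^{(k)})$ as the minimal distance from $b^{(k)}$ to a consistent second member, which is immediate once one notes that every output $L \Box_{\min}^{\max} u^{(k)}$ lies in $\mathcal{C}$.
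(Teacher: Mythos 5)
Your proposal is correct and follows the same route as the paper: the paper justifies this proposition "immediately" from (Lemma \ref{lemma:7egalitechelou}), the definition of $\mu$, and the preceding remark that $\Vert b^{(k)} - L \Box_{\min}^{\max} u^{(k)}\Vert \geq \Delta(L, b^{(k)})$ because $L \Box_{\min}^{\max} u^{(k)}$ is a consistent second member. Your only (harmless) deviation is that you treat the consistent and inconsistent cases uniformly via the single observation $L \Box_{\min}^{\max} u^{(k)} \in \mathcal{C}$, whereas the paper's remark spells out the two cases separately.
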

\noindent We observe that having a weight matrix $W$ that perfectly relates the input data to the output data i.e. $E(W)=0$, implies having $\mu=0$. In fact, we have:
\begin{proposition}\label{prop:wequivmuzero}
\begin{equation}\label{Dict415}
\exists W \in [0 , 1]^{n \times m} , \text{ s.t. }\forall i \in\{1 , 2 , \cdots , N\}, W \Box_{\min}^{\max} x^{(i)} = y^{(i)}
\,\Longleftrightarrow \,
\mu = 0.
\end{equation}
\end{proposition}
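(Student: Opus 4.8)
The plan is to reduce the statement to the consistency of the $n$ auxiliary systems $(S_1),\dots,(S_n)$ and then invoke (Corollary \ref{corollary:deltazero}), which already characterizes consistency by the vanishing of the Chebyshev distance. The crucial structural observation is that (Lemma \ref{lemma:7egalitechelou}) decouples the learning error into these $n$ systems: writing $u^{(k)}$ for the transpose of the $k$-th row of $W$, one has $E(W) = \max_{1 \leq k \leq n} \Vert b^{(k)} - L \Box_{\min}^{\max} u^{(k)} \Vert$. Since the correspondence $W \mapsto (u^{(1)},\dots,u^{(n)})$ of (\ref{Dict477}) is a bijection between $[0,1]^{n \times m}$ and $([0,1]^{m \times 1})^n$, the single learning problem splits into $n$ independent problems, one per row, and each row-problem is exactly the question of solving $(S_k)$.

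For the forward implication, I would suppose a weight matrix $W$ satisfies $W \Box_{\min}^{\max} x^{(i)} = y^{(i)}$ for every $i$. Then every term $\Vert y^{(i)} - W \Box_{\min}^{\max} x^{(i)} \Vert$ vanishes, so $E(W) = 0$. By (Proposition \ref{prop:ewsupmu}) we have $E(W) \geq \mu$, and since $\mu = \max_{1 \leq k \leq n} \Delta(L, b^{(k)}) \geq 0$ is a maximum of nonnegative Chebyshev distances, I conclude $0 = E(W) \geq \mu \geq 0$, hence $\mu = 0$.

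For the reverse implication, I would suppose $\mu = 0$. By (Definition \ref{def:mu}), $\mu = \max_{1 \leq k \leq n} \Delta(L, b^{(k)})$, so each $\Delta(L, b^{(k)}) = 0$. By (Corollary \ref{corollary:deltazero}) every system $(S_k): L \Box_{\min}^{\max} u = b^{(k)}$ is then consistent, so for each $k$ I can pick a solution $u^{(k)} \in [0,1]^{m \times 1}$ with $L \Box_{\min}^{\max} u^{(k)} = b^{(k)}$. Assembling these row-vectors through the bijection (\ref{Dict477}) yields a matrix $W \in [0,1]^{n \times m}$ (membership in $[0,1]$ being automatic since each $u^{(k)}$ has entries in $[0,1]$). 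Applying (Lemma \ref{lemma:7egalitechelou}) gives $E(W) = \max_{1 \leq k \leq n} \Vert b^{(k)} - L \Box_{\min}^{\max} u^{(k)} \Vert = 0$, which by the definition of $E$ means $W \Box_{\min}^{\max} x^{(i)} = y^{(i)}$ for all $i$, as desired.

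I do not expect a genuine analytical obstacle: once the error is decoupled row-by-row, both directions follow immediately from results already established. The only point requiring care will be the bookkeeping of the bijective correspondence between $W$ and the tuple $(u^{(1)},\dots,u^{(n)})$, together with checking that the assembled matrix lies in $[0,1]^{n \times m}$ — both guaranteed by (\ref{Dict477}) and the fact that solutions of the $(S_k)$ live in $[0,1]^{m \times 1}$.
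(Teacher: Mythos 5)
Your proof is correct and follows essentially the same route as the paper's: the forward direction via $E(W)\geq\mu$ (Proposition \ref{prop:ewsupmu}), and the reverse direction by using $\mu=0$ to get consistency of each $(S_k)$, picking a solution $u^{(k)}$ of each, assembling $W$ via the bijection (\ref{Dict477}), and concluding with (Lemma \ref{lemma:7egalitechelou}). The only cosmetic difference is that you cite (Corollary \ref{corollary:deltazero}) explicitly where the paper leaves that step implicit.
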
\label{eq:equivalencewtrainingmuzero}
\begin{proof}\mbox{}\\
$\Longrightarrow\,$  follows  from (\ref{Dict49}).

\noindent
$\Longleftarrow \,$ If $\mu =  \max_{1 \leq k \leq n}\, \Delta(L, b^{(k)}) = 0$, then  all the systems $(S_1),(S_2),\dots,(S_n)$ are consistent. 

\noindent
For all $1 \leq k \leq n$, let $u^{(k)}\in [0 , 1]^{m \times 1}$ be a solution of the system $(S_k)$. Denote by $W 
\in   [0 , 1]^{n \times m}$ the matrix whose rows are the transpose of the chosen column-vectors $(u^{(1)}, u^{(2)}  \dots , u^{(n)})$ (see (\ref{Dict477})). By (Lemma \ref{lemma:7egalitechelou}), we have:
$$E(W) = \max_{1 \leq i \leq N}\, \Vert y^{(i)} - W \Box_{\min}^{\max} x^{(i)}\Vert = 
\max_{1 \leq k \leq n}\, \Vert b^{(k)}  - L \Box_{\min}^{\max} u^{(k)}\Vert = 0 = \mu,$$
i.e., $\forall i \in\{1 , 2 , \cdots , N\}, W \Box_{\min}^{\max} x^{(i)} = y^{(i)}$.
 \end{proof}

\subsection{Method for learning approximate weight matrices}
\label{subsec:method}
\noindent
In the proof of (Proposition \ref{prop:wequivmuzero}), when    $ \mu = 0$, we have shown how to construct a weight matrix of the  training data. We extend this construction to the general case where $\mu \geq 0$, i.e., we   give a method for constructing an approximate weight  matrix $W$ in the following sense:
\begin{equation}\label{eq:muwbonapprox}
    \max_{1 \leq i \leq N}\, \Vert y^{(i)} - W \Box_{\min}^{\max} x^{(i)}\Vert  = \mu.
\end{equation}

By extending the method developed in (Subsection \ref{subsec:relatingchelou}) to the case of $n$ systems $(S_1),(S_2),\dots,(S_n)$, we  construct such a matrix $W$.

\begin{method}\label{method:defou}
Let $W$ be a matrix defined row by row, which satisfies the following conditions:
\begin{itemize}
    \item If the system $(S_k)$ is consistent, we define the $k$-th row of $W$  as the transpose of a solution $u^{(k)}$ of the system  $(S_k)$. For instance, its greatest solution $L^t \Box_{\rightarrow_G}^{\min} b^{(k)}$. With this choice, we have:
$$\Vert b^{(k)}  - L \Box_{\min}^{\max} u^{(k)}\Vert = 0 =  \Delta(L, b^{(k)}).$$

\item If the system $(S_k)$ is inconsistent, we take a Chebyshev approximation $b^{(k),\ast}$ of $b^{(k)}$ (an element of the non-empty set $\mathcal{C}_{b^{(k)}}$, see (\ref{def:chebyshevsetapproxB})). With this choice, we define the $k$-th row of $W$   as the transpose of a   solution  $u^{(k)}$ of the system $L \Box^{\max}_{\min} u = b^{(k),\ast}$, for instance the greatest solution $L^t \Box_{\rightarrow_G}^{\min} b^{(k),\ast}$. With this choice, we have:
$$\Vert b^{(k)}  - L \Box_{\min}^{\max} u^{(k)}\Vert  =   \Delta(L, b^{(k)}).$$
\end{itemize}

\noindent Thus, any    matrix $W$ constructed row by row with the above procedure will satisfy  (\ref{eq:muwbonapprox}).
\end{method}

We remind that the set ${\cal A}$, see  (\ref{eq:learning:setA}), is the set formed by the matrices verifying (\ref{eq:muwbonapprox}). From (Method \ref{method:defou}), we have:
\begin{proposition}\label{eq:anonvide}
  The set  $\mathcal{A}$ is non-empty. 
\end{proposition}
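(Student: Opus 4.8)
The plan is to prove non-emptiness constructively by exhibiting an explicit element of $\mathcal{A}$, relying on (Method \ref{method:defou}). First I would build a candidate matrix $W$ row by row: for each index $k \in \{1, 2, \dots, n\}$, I select a column vector $u^{(k)} \in [0,1]^{m \times 1}$ according to the consistency status of the system $(S_k)$, and set the $k$-th row of $W$ equal to $(u^{(k)})^t$, following the bijective map (\ref{Dict477}).

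Next I would check that each such row satisfies the local identity $\Vert b^{(k)} - L \Box_{\min}^{\max} u^{(k)}\Vert = \Delta(L, b^{(k)})$, splitting into two cases. If $(S_k)$ is consistent, then $\Delta(L, b^{(k)}) = 0$ by (Corollary \ref{corollary:deltazero}), and taking $u^{(k)}$ to be any solution of $(S_k)$ gives $L \Box_{\min}^{\max} u^{(k)} = b^{(k)}$, so the norm is zero. If $(S_k)$ is inconsistent, I would use that the set $\mathcal{C}_{b^{(k)}}$ of Chebyshev approximations is non-empty by (Proposition \ref{proposition:greatestcheb}); I pick $b^{(k),\ast} \in \mathcal{C}_{b^{(k)}}$ and let $u^{(k)}$ be a solution of the (consistent) system $L \Box_{\min}^{\max} u = b^{(k),\ast}$, which yields $\Vert b^{(k)} - L \Box_{\min}^{\max} u^{(k)}\Vert = \Vert b^{(k)} - b^{(k),\ast}\Vert = \Delta(L, b^{(k)})$ by the defining property of a Chebyshev approximation.

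Finally I would assemble these local identities via (Lemma \ref{lemma:7egalitechelou}), which gives $E(W) = \max_{1 \leq k \leq n} \Vert b^{(k)} - L \Box_{\min}^{\max} u^{(k)}\Vert$. Substituting the per-row values and using (Definition \ref{def:mu}) gives $E(W) = \max_{1 \leq k \leq n} \Delta(L, b^{(k)}) = \mu$. Hence $W$ satisfies (\ref{eq:muwbonapprox}) and belongs to $\mathcal{A}$, so $\mathcal{A}$ is non-empty.

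I do not expect a genuine obstacle: the result is essentially a corollary of (Method \ref{method:defou}), since the substantial work — the existence of a greatest Chebyshev approximation, the closed form for $\mu$, and the reduction of the global error $E(W)$ to the per-system Chebyshev distances — has already been carried out. The only point requiring attention is that the row-by-row construction is well-defined in every case, i.e., that a suitable $u^{(k)}$ always exists; this holds because a consistent system has at least one solution and, in the inconsistent case, the non-emptiness of $\mathcal{C}_{b^{(k)}}$ produces a consistent surrogate system $L \Box_{\min}^{\max} u = b^{(k),\ast}$ that again admits a solution.
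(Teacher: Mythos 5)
Your proposal is correct and follows essentially the same route as the paper: both construct $W$ row by row from solutions (consistent case) or approximate solutions obtained via Chebyshev approximations (inconsistent case), exactly as in (Method \ref{method:defou}), and then conclude $E(W)=\mu$ through (Lemma \ref{lemma:7egalitechelou}) and (Definition \ref{def:mu}). The only cosmetic difference is that the paper compresses your two-case analysis into the single phrase ``choose an approximate solution (or solution) $u^{(k)}$ of $(S_k)$'', invoking (Definition \ref{def:approxsol}) directly.
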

\begin{proof}
\noindent For $1 \leq k \leq n$, we choose an approximate solution (or solution) $u^{(k)}$ of the system $(S_k)$. \\Let $W \in [0,1]^{n\times m}$ be the matrix defined by:
\[W =   \begin{bmatrix}
        {u^{(1)}}^t\\[5pt]
        {u^{(2)}}^t\\[5pt]
        \vdots\\[5pt]
        {u^{(n)}}^t
    \end{bmatrix}.\]
\noindent From (Lemma \ref{lemma:7egalitechelou}),   (Definition \ref{def:approxsol}) and (Definition \ref{def:mu}), we deduce:
\begin{equation*}
E(W)=    \max_{1 \leq i \leq N}\, \Vert y^{(i)} - W \Box_{\min}^{\max} x^{(i)}\Vert = 
\max_{1 \leq k \leq n}\, \Vert b^{(k)}  - L \Box_{\min}^{\max} u^{(k)}  \Vert = \max_{1 \leq k \leq n}\, \Delta(L, b^{(k)})=\mu. 
\end{equation*}
\noindent Thus $W \in \mathcal{A}$.

\end{proof}
We deduce our main result, i.e., the equality (\ref{eq:muequalminW}) holds:
\begin{corollary}\label{eq:muegalmin}
\[ \mu = \min_{W \in {[0,1]^{n \times m}}} E(W).\]
\end{corollary}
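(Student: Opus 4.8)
The plan is to read the equality directly off the two propositions that immediately precede it, which together form a sandwich argument. First I would invoke (Proposition \ref{prop:ewsupmu}), which asserts that $E(W) \geq \mu$ for every $W \in [0,1]^{n \times m}$. This says precisely that $\mu$ is a lower bound of the set $\{E(W) \mid W \in [0,1]^{n \times m}\}$, and therefore
\[ \mu \leq \inf_{W \in [0,1]^{n \times m}} E(W). \]

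Second I would use (Proposition \ref{eq:anonvide}), which states that the set $\mathcal{A}$ of approximate weight matrices is non-empty. By the very definition of $\mathcal{A}$, see (\ref{eq:learning:setA}), any element $W^\ast \in \mathcal{A}$ satisfies $E(W^\ast) = \mu$. Such a $W^\ast$ is thus an explicit witness that the lower bound $\mu$ is attained, so the infimum is in fact a minimum and
\[ \mu = E(W^\ast) = \min_{W \in [0,1]^{n \times m}} E(W). \]

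There is no substantive obstacle here, since the whole content of the statement has already been discharged into (Proposition \ref{prop:ewsupmu}) and (Proposition \ref{eq:anonvide}). The only conceptual point worth flagging is the passage from \emph{infimum} to \emph{minimum}: this requires the existence of an actual minimizer, which is exactly what the non-emptiness of $\mathcal{A}$ furnishes. Without that existence result one would be entitled only to write $\inf$ in place of $\min$, so in a sense the real work of the corollary lies in (Proposition \ref{eq:anonvide}), itself a consequence of (Method \ref{method:defou}) together with (Lemma \ref{lemma:7egalitechelou}).
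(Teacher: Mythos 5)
Your proposal is correct and matches the paper's own proof, which likewise derives the corollary directly from (Proposition \ref{prop:ewsupmu}) and (Proposition \ref{eq:anonvide}). Your additional remark about the passage from infimum to minimum being the real content is a faithful elaboration of exactly the same argument, not a different route.
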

\begin{proof}
This follows from (Proposition \ref{prop:ewsupmu}) and (Proposition \ref{eq:anonvide}).    
\end{proof}

\noindent In what follows, we  illustrate our constructions.

\subsection{Examples}
\label{subsec:example}
The following examples illustrate the learning paradigm. In the first example we have $\mu > 0$, while in the second example, which was introduced by Pedrycz in \cite{pedrycz1991neurocomputations}, we have $\mu=0$. 

\begin{example}\label{subsec:ex1}

\noindent Let us consider the following training data:
\begin{table}[H]
    \centering
    \begin{tabular}{c|c}
       $x^{(1)} =(0.7, 0.4, 0.4)^t$  & $y^{(1)} =(0.7, 0.1, 0.3)^t$  \\
       $x^{(2)} =(1.0, 0.2, 0.5)^t$  & $y^{(2)} =(1.0, 0.7, 0.0)^t$  \\ 
    \end{tabular}
    \caption{Training data of the  example. We have $N=2, m = 3$ and $n=3$.}
    \label{tab:learning:ex2}
\end{table}
\noindent  We have $L = \begin{bmatrix}
        0.7 & 0.4 & 0.4 \\
        1.0 & 0.2 & 0.5
    \end{bmatrix}$, $b^{(1)} = \begin{bmatrix}
        0.7\\
        1.0
    \end{bmatrix}$, $b^{(2)} = \begin{bmatrix}
        0.1\\
        0.7
    \end{bmatrix}$ and $b^{(3)} = \begin{bmatrix}
        0.3\\
        0.0
    \end{bmatrix}$.
\noindent We form three systems $(S_1), (S_2)$ and $(S_3)$:
\begin{align*}
    (S_1):  L \Box_{\min}^{\max} u_1 = b^{(1)}, \\
    (S_2):  L \Box_{\min}^{\max} u_2 = b^{(2)},\\
    (S_3):  L \Box_{\min}^{\max} u_3 = b^{(3)}. 
\end{align*}

\noindent $\bullet$ The system $(S_1)$ is consistent because the Chebyshev distance associated to its second member is~$\Delta(L,b^{(1)}) =~0$.  It has  $\begin{bmatrix}1\\1 \\ 1 \end{bmatrix}$ as greatest solution, and it has a unique minimal solution $\begin{bmatrix}1 \\ 0\\ 0\end{bmatrix}$ computed using the method of \cite{matusiewicz2013increasing}.\\
\noindent $\bullet$ The system $(S_2)$ is inconsistent because the Chebyshev distance associated to its second member is $\Delta(L,b^{(2)}) = 0.3$.  We get $\eta = \begin{bmatrix} 0.4 \\ 1 \\ 1\end{bmatrix}$ and the greatest Chebyshev approximation of $b^{(2)}$ is: $\begin{bmatrix}
    0.4\\
0.5
\end{bmatrix}$ because $L \Box_{\min}^{\max} \eta = \begin{bmatrix}
    0.4\\
0.5
\end{bmatrix}$. The vectors $\begin{bmatrix}0.4 \\0.0\\0.0\end{bmatrix}$ and $\begin{bmatrix}0.0 \\0.0\\0.4\end{bmatrix}$ are solutions of the system of inequalities $\underline{b^{(2)}}(\Delta(L,b^{(2)})) \leq L  \Box_{\min}^{\max} x$ and lower than $\eta$. We have $L  \Box_{\min}^{\max} \begin{bmatrix}0.4 \\0.0\\0.0\end{bmatrix} = L  \Box_{\min}^{\max} \begin{bmatrix}0.0\\0.0\\0.4\end{bmatrix} = \begin{bmatrix}
    0.4\\
    0.4
\end{bmatrix}$, therefore, from (Corollary \ref{corollary:ctitlde}), we have a unique minimal Chebyshev approximation of $b^{(2)}$ which is  $\begin{bmatrix}0.4\\0.4\end{bmatrix}$. We use the greatest  Chebyshev approximation. The system $(S'_2): \begin{bmatrix}
        0.4\\
        0.5
    \end{bmatrix} = \begin{bmatrix}
        0.7 & 0.4 & 0.4 \\
        1.0 & 0.2 & 0.5
    \end{bmatrix} \Box_{\min}^{\max} u'_2$ is consistent and it has  $\begin{bmatrix}0.4\\ 1\\ 1\end{bmatrix}$ as greatest solution and one unique minimal solution  $\begin{bmatrix}0\\ 0\\ 0.5\end{bmatrix}$.\\
\noindent $\bullet$ The system $(S_3)$ is inconsistent because the Chebyshev distance associated to its second member is $\Delta(L, b^{(3)}) = 0.15$.  We use the greatest Chebyshev approximation of $b^{(3)}$: $\begin{bmatrix}
    0.15\\
0.15
\end{bmatrix}$. The system $(S'_3): \begin{bmatrix}
        0.15\\
        0.15
    \end{bmatrix} = \begin{bmatrix}
        0.7 & 0.4 & 0.4 \\
        1.0 & 0.2 & 0.5
    \end{bmatrix} \Box_{\min}^{\max} u'_3$ is consistent and it has  $\begin{bmatrix}0.15\\ 0.15\\ 0.15 \end{bmatrix}$ as greatest solution and  three  minimal solutions $\begin{bmatrix}0.15 \\ 0\\ 0\end{bmatrix}$, $\begin{bmatrix}0 \\ 0.15\\ 0\end{bmatrix}$ and $\begin{bmatrix}0 \\ 0\\ 0.15\end{bmatrix}$.

\noindent As, we have $\Delta(L, b^{(1)}) = 0$,   $\Delta(L, b^{(2)}) = 0.3$ and $\Delta(L, b^{(3)}) =  0.15$, we have $\mu = 0.3$. 

\noindent From the solutions of $(S_1)$, $(S'_2)$ and $(S'_3)$, we can construct an approximate weight matrix $W$ row by row. For instance, $W  = \begin{bmatrix}
    1 & 0 & 0.2\\
    0.2 & 1 & 0.5 \\
    0.15 & 0.15 & 0.0
\end{bmatrix}$ where $\begin{bmatrix}1 \\ 0 \\ 0.2 \end{bmatrix}$ is a solution of $(S_1)$,  $\begin{bmatrix}0.2 \\ 1.0 \\ 0.5 \end{bmatrix}$ is a solution of $(S'_2)$ and  $\begin{bmatrix}0.15 \\ 0.15 \\ 0\end{bmatrix}$ is a solution of $(S'_3)$.   From the training data, we observe that:
\begin{align*}
    W \Box_{\min}^{\max} x^{(1)} = \begin{bmatrix}0.7\\ 0.4\\ 0.15\end{bmatrix} \text{ and } \Vert \begin{bmatrix}0.7\\ 0.4\\ 0.15\end{bmatrix} - y^{(1)} \Vert = 0.3 = \mu, \nonumber\\
    W \Box_{\min}^{\max} x^{(2)} = \begin{bmatrix}1\\ 0.5\\ 0.15 \end{bmatrix} \text{ and } \Vert \begin{bmatrix}1\\ 0.5\\ 0.15\end{bmatrix} - y^{(2)} \Vert = 0.2 < \mu. \nonumber
\end{align*}
\end{example}

\begin{example}

In \cite{pedrycz1991neurocomputations}, Pedrycz learns a weight matrix according to the following training data:
\begin{table}[H]
    \centering
    \begin{tabular}{c|c}
       $x^{(1)} = (0.3, 1.0, 0.5, 0.2)^t$  & $y^{(1)} = (0.7, 0.5, 0.6)^t$  \\
       $x^{(2)} = (0.1, 1.0, 1.0, 0.5)^t$  & $y^{(2)} = (0.7, 1.0, 0.6)^t$  \\ 
       $x^{(3)} = (0.5, 0.7, 0.2, 1.0)^t$  & $y^{(3)} = (0.7, 0.7, 0.6)^t$  \\ 
       $x^{(4)} = (1.0, 0.7, 0.5, 0.3)^t$  & $y^{(4)} = (1.0, 0.5, 0.6)^t$  \\ 
    \end{tabular}
    \caption{Training data used in \cite{pedrycz1991neurocomputations}. We have $N=4, m = 4$ and $n=3$.}
    \label{tab:learning:ex1}
\end{table}

\noindent We put $L=\begin{bmatrix}
        0.3 & 1.0 & 0.5 & 0.2 \\
        0.1 & 1.0 & 1.0 & 0.5 \\
        0.5 & 0.7 & 0.2 & 1.0 \\
        1.0 & 0.7 & 0.5 & 0.3 
    \end{bmatrix}$, $b^{(1)}=\begin{bmatrix}
        0.7 \\
        0.7 \\
        0.7 \\
        1.0
    \end{bmatrix}$,$b^{(2)}= \begin{bmatrix}
        0.5 \\
        1.0 \\
        0.7 \\
        0.5
    \end{bmatrix}$ and $b^{(3)}= \begin{bmatrix}
        0.6 \\
        0.6 \\
        0.6 \\
        0.6
    \end{bmatrix}$. We form  three systems $
    (S_1):   L\Box_{\min}^{\max} u_1 = b^{(1)}$, $
    (S_2):   L \Box_{\min}^{\max} u_2 = b^{(2)}$ and  $(S_3):  L \Box_{\min}^{\max} u_3 =b^{(3)}$. 

\noindent $\bullet$ The system $(S_1)$ is consistent because the Chebyshev distance associated to its second member is~$\Delta(L, b^{(1)}) =~0$. Its greatest solution is $\begin{bmatrix}
            1.0 \\
            0.7 \\
            0.7\\
            0.7
        \end{bmatrix}$ and it has a unique minimal solution~$\begin{bmatrix}
            1.0 \\
            0.7 \\
            0.0 \\
            0.0
        \end{bmatrix}$.\\
\noindent $\bullet$ The system $(S_2)$ is consistent because the Chebyshev distance associated to its second member is $\Delta(L, b^{(2)}) = 0$.  Its greatest solution is $\begin{bmatrix}0.5\\ 0.5\\ 1.0 \\  0.7\end{bmatrix}$ and it has a unique minimal solution~$\begin{bmatrix}
        0.0\\ 0.0\\ 1.0\\ 0.7
    \end{bmatrix}$. \\
\noindent $\bullet$ The system $(S_3)$ is consistent because the Chebyshev distance associated to its second member is  $\Delta(L, b^{(3)}) =~0$.       Its greatest solution is $\begin{bmatrix}0.6 \\ 0.6 \\ 0.6\\ 0.6\end{bmatrix}$ and it has a unique minimal solution~$\begin{bmatrix}
            0.0 \\
            0.6 \\
            0.0 \\
            0.0
        \end{bmatrix}$.\\
\noindent As,  $\Delta(L, b^{(1)}) =   \Delta(L, b^{(2)}) = \Delta(L, b^{(3)}) =  0.0$ we have: $\mu = 0.0$. 

\noindent Therefore, in the set of matrices $\mathcal{A}$, see (\ref{eq:learning:setA}), each of the weight matrices $W$ has three rows constructed  from the minimal solution and the greatest solution of each of the three systems $(S_1), (S_2)$ and $(S_3)$:
\begin{equation*}
  \begin{bmatrix}
            1.0 \\
            0.7 \\
            0.0\\
            0.0
        \end{bmatrix} \leq u_1 \leq \begin{bmatrix}
            1.0 \\
            0.7 \\
            0.7 \\
            0.7
        \end{bmatrix}, \,   \begin{bmatrix}
            0.0 \\
            0.0 \\
            1.0\\
            0.7
        \end{bmatrix} \leq u_2 \leq \begin{bmatrix}
            0.5 \\
            0.5 \\
            1.0 \\
            0.7
        \end{bmatrix} \text{ and } \begin{bmatrix}
            0.0 \\
            0.6 \\
            0.0\\
            0.0
        \end{bmatrix} \leq u_3 \leq \begin{bmatrix}
            0.6 \\
            0.6 \\
            0.6 \\
            0.6
        \end{bmatrix}.
\end{equation*}
    
\noindent Let us consider for example the weight matrix $W = \begin{bmatrix}
    1.0 & 0.7 & 0.3 & 0.3 \\
    0.4 & 0.4 & 1.0 & 0.7 \\
    0.1 & 0.6 & 0.2 & 0.2 
\end{bmatrix} \in \mathcal{A}$. One can check from the training data that:
\begin{gather*}
    W \Box_{\min}^{\max} x^{(1)} = y^{(1)}, \\
    W \Box_{\min}^{\max} x^{(2)} = y^{(2)},\\ 
    W \Box_{\min}^{\max} x^{(3)} = y^{(3)},\\
    W \Box_{\min}^{\max} x^{(4)} = y^{(4)}.
\end{gather*}

\end{example}

\section{Application}
\label{sec:applications}
In what follows, we study an application of our results: how to approximately learn the rule parameters of a possibilistic rule-based system. 
Recently, Dubois and Prade have emphasized the  development of  possibilistic learning methods that would be consistent with if-then rule-based reasoning \cite{dubois2020possibilistic}. For this purpose, the author of \cite{baaj2022learning} introduced a system of  $\min-\max$ fuzzy relational equations for learning the rule parameters of a possibilistic rule-based system according to a training datum: 
\[(\Sigma): Y = \Gamma \Box_{\max}^{\min} X, \]
\noindent where $\Box_{\max}^{\min}$ is the matrix product
 which takes $\max$ as the product and $\min$ as the addition. In the equation system  $(\Sigma)$, the second member $Y$ describes an output possibility distribution, the matrix $\Gamma$ contains the  possibility degrees  of the rule premises and $X$ is an unknown vector containing the rule parameters. 
 If the  system $(\Sigma)$ is inconsistent, e.g., due to poor training data, an approximate solution is desirable. The general method that we introduced for obtaining approximate solutions of a system of $\max-\min$ fuzzy relational equations can be applied to the case of a system of $\min-\max$ fuzzy relational equations
 such as $(\Sigma)$.

\noindent In what follows, we show how to switch from a system of $\min-\max$ fuzzy relational equations such as $(\Sigma)$ to a system of $\max-\min$ fuzzy relational equations  and vice versa. We introduce analogous tools for a system of $\min-\max$ fuzzy relational equations to those already introduced for a system of $\max-\min$ fuzzy relational equations and we show their  correspondences in (Table \ref{tab:correspondences}).
From these results, we propose a method for finding approximate solutions of the rule parameters of possibilistic rule-based system when we have multiple training data. 

\subsection{Switching from a system of \texorpdfstring{$\max-\min$}{max-min} fuzzy relational equations to a system of \texorpdfstring{$\min-\max$}{min-max} fuzzy relational equations (and vice versa)}
\label{subsec:switch}

In this subsection, we use the following notation:
\begin{notation}
    To any matrix $A = [a_{ij}]$, we associate the matrix  $A^{\circ} = [1 - a_{ij}]$ and we have $(A^{\circ})^{\circ} = A$. 
\end{notation}

\noindent Let $A$ and $B$ be matrices  of respective size $(n,m)$ and $(m,p)$, the transformation $A \mapsto A^{\circ}$ switches the two matrix products $\Box_{\max}^{\min}$ and $\Box_{\min}^{\max}$ in the following sense: 
\begin{equation}\label{eq:Atensetruc}
    (A \Box_{\max}^{\min} B)^{\circ} = A^{\circ} \Box_{\min}^{\max} B^{\circ} \text{ and }  (A \Box_{\min}^{\max} B)^{\circ}  = A^{\circ} \Box_{\max}^{\min}B^{\circ}.
\end{equation}
\noindent This transformation establishes that the study of systems of $\max-\min$ fuzzy relational equations is equivalent to the study of systems of $\min-\max$  fuzzy relational equations in a precise sense that we will develop in what follows and summarize in (Table \ref{tab:correspondences}).\\ 
Let us remark that the transformation $t \mapsto 1 - t$ switches the Gödel product, see (\ref{eq:implication:godel}),  to  the $\epsilon$-product  defined by:
\[ x \epsilon  y = \left\{\begin{array}{rrl}
y & \text{if}  & x < y\\
0 & \text{if}  & x \geq  y \\\end{array}\right. \text{ in } [0, 1].\]
Therefore, we deduce that the matrix product $\Box_{\rightarrow_G}^{\min}$ is switched to the matrix product $\Box_{\epsilon}^{\max}$ where we take the $\epsilon$-product as product and $\max$ as addition.\\

Let:
\begin{itemize}
    \item $A \Box_{\min}^{\max} x = b$ be a system of $\max-\min$ fuzzy relational equations,
    \item $G \Box_{\max}^{\min} x = d$ be a system of $\min-\max$ fuzzy relational equations.
\end{itemize}
\noindent In (Table \ref{tab:correspondences}), for a system $G \Box_{\max}^{\min} x = d$, we introduce analogous tools (second column) to those already introduced for a system $A \Box_{\min}^{\max} x = b$ (first column). The last column shows how to relate the tools of the two systems iff
\begin{equation}
    G = A^\circ\text{ and }d = b^\circ.
\end{equation}

\begin{table}[H]
\footnotesize
\centering
\begin{tabular}{c|c|c|c|}
\hhline{~---}
& \multicolumn{1}{c|}{System: $A \Box_{\min}^{\max} x = b$} & \multicolumn{1}{c|}{System: $G \Box_{\max}^{\min} x = d$} & 
\multicolumn{1}{c|}{
\cellcolor[HTML]{C0C0C0}
\begin{tabular}[c]{@{}l@{}}
Relation \\
 iff $G = A^\circ$ and $d= b^\circ$
\end{tabular}}\\\hline

\multicolumn{1}{|l|}{\begin{tabular}[c]{@{}l@{}}Set of solutions\\ $\quad$\end{tabular}}  &{${\cal {S}}(A,b)$}& {${\cal {S}}(G,d)$}& {\cellcolor[HTML]{C0C0C0}${\cal {S}}(G,d)={\cal {S}}(A,b)^{\circ}$} \\ \hline
\multicolumn{1}{|l|}{
\begin{tabular}[c]{@{}l@{}}Potential greatest/lowest\\ solution\end{tabular}}   &   \begin{tabular}[c]{@{}l@{}} $e=A^t\Box_{\rightarrow_G}^{\min} b$ \\ (greatest solution) \end{tabular}     & \begin{tabular}[c]{@{}l@{}} $ r=G^t\Box_{\epsilon}^{\max} d $\\ (lowest solution) \end{tabular}        &  \cellcolor[HTML]{C0C0C0}$r= e^\circ$  \\ \hline
\multicolumn{1}{|l|}{\begin{tabular}[c]{@{}l@{}}Application computing \\ the matrix product of\\ the system matrix and\\a given vector in $[0,1]^{m\times 1}$\end{tabular}}&{
\begin{tabular}[c]{@{}l@{}}
$\theta: [0,1]^{m\times 1}\rightarrow[0,1]^{n\times 1}$\\ 
$\,\,\,\,\, : x \mapsto A \Box_{\min}^{\max} x$
\end{tabular}

}& {
\begin{tabular}[c]{@{}l@{}}
$\psi: [0,1]^{m\times 1}\rightarrow[0,1]^{n\times 1}$ \\
$\,\,\,\,\,\, : x \mapsto G \Box_{\max}^{\min} x$
\end{tabular}
}&  {
\cellcolor[HTML]{C0C0C0}$\psi(x)=\theta(x^{\circ})^{\circ}$
}\\ \hline 
\multicolumn{1}{|l|}{\begin{tabular}[c]{@{}l@{}}Set of second members\\  of the consistent systems \\ defined with the matrix\end{tabular}}                                        &    {${\cal {C}}= \{ \theta(x) \mid x \in [0,1]^{m\times 1} \}$}& {${\cal {T}}= \{ \psi(x) \mid x \in [0,1]^{m\times 1} \}$}& {
\cellcolor[HTML]{C0C0C0}${\cal {T}}={\cal {C}}^{\circ}$
}\\ \hline 
\multicolumn{1}{|l|}{\begin{tabular}[c]{@{}l@{}}Application for checking\\ 
if a system defined with\\ the matrix  and a given \\vector in $[0,1]^{n \times 1}$
as \\second member is a\\
consistent system\end{tabular}}       &   
\begin{tabular}[c]{@{}l@{}}
$F : [0,1]^{n \times 1} \rightarrow [0,1]^{n \times 1} $\\
 $\,\,\,\,\,\,\,\,\,c \mapsto A \, \Box_{\min}^{\max} (A^t\, \Box_{\rightarrow_G}^{\min} c)$ 
 \end{tabular}
 
 & 
 \begin{tabular}[c]{@{}l@{}}
$U : [0,1]^{n \times 1} \rightarrow [0,1]^{n \times 1} $\\
 $\,\,\,\,\,\,\,\,\,c \mapsto G\Box_{\max}^{\min} (G^t\, \Box_{\epsilon}^{\max} c)$ 
 \end{tabular}
        & \cellcolor[HTML]{C0C0C0} $U(c)=F(c^{\circ})^{\circ}$ \\ \hline

\multicolumn{1}{|l|}{\begin{tabular}[c]{@{}l@{}}Chebyshev distance\\ associated  to \\the second member\end{tabular}}                                                                & $\Delta=\Delta(A,b)$        & $\nabla=\nabla(G,d)$        &                    \cellcolor[HTML]{C0C0C0}            $\nabla(G,d)=\Delta(A,b)$        \\ \hline
\multicolumn{1}{|l|}{\begin{tabular}[c]{@{}l@{}}Set of Chebyshev\\ approximations of\\ the second member\end{tabular}}  &                                                      $\mathcal{C}_b$      &   $\mathcal{T}_{d}$               &                 \cellcolor[HTML]{C0C0C0}        $\mathcal{T}_d = \mathcal{C}_b^\circ$               \\ \hline
\multicolumn{1}{|l|}{\begin{tabular}[c]{@{}l@{}}Extremal Chebyshev\\ approximations \\ of the second member\end{tabular}}  & 
\begin{tabular}[c]{@{}l@{}}greatest: $F(\overline{b}(\Delta))$\\
minimal approx. set: $\mathcal{C}_{b,\min}$
\end{tabular}   &  
\begin{tabular}[c]{@{}l@{}}lowest: $U(\underline{d}(\nabla))$\\
 maximal approx. set: $\mathcal{T}_{d,\max}$
\end{tabular}
        &                 \cellcolor[HTML]{C0C0C0}  
        \begin{tabular}[c]{@{}l@{}}
        $U(\underline{d}(\nabla))={F(\overline{b}(\Delta))}^\circ$
        \\
        $\mathcal{T}_{d,\max} = \mathcal{C}_{b,\min}^\circ$     \end{tabular}          \\ \hline
\multicolumn{1}{|l|}{\begin{tabular}[c]{@{}l@{}}Approximate solutions set\vspace{1em}\end{tabular}}  & 
\begin{tabular}[c]{@{}l@{}}$\Lambda_b$ 
\end{tabular}   &  
\begin{tabular}[c]{@{}l@{}}$\Upsilon_d$
\end{tabular}
        &                 \cellcolor[HTML]{C0C0C0}  
        \begin{tabular}[c]{@{}l@{}}
        $\Upsilon_d = \Lambda_b^\circ$
             \end{tabular}          \\ \hline

\multicolumn{1}{|l|}{\begin{tabular}[c]{@{}l@{}}Extremal approximate \\ solutions\end{tabular}}  & 
\begin{tabular}[c]{@{}l@{}}greatest: $\eta=A^t \Box_{\rightarrow_G}^{\min} F(\overline{b}(\Delta))$\\
a min. approx. sol. set: ${\Lambda}_{b,\min}$
\end{tabular}   &  
\begin{tabular}[c]{@{}l@{}}lowest: $\nu=G^t \Box_{\epsilon}^{\max} U(\underline{d}(\nabla))$\\
 a max. approx. sol. set: $\Upsilon_{d,\max}$
\end{tabular}
        &                 \cellcolor[HTML]{C0C0C0}  
        \begin{tabular}[c]{@{}l@{}}
        $\nu = \eta^\circ$
        \\
        $\Upsilon_{d,\max} = \Lambda_{b,\min}^\circ$     \end{tabular}          \\ \hline
\end{tabular}
\vspace{1em}
\caption{Tools of the systems $A \Box_{\min}^{\max} x = b$ and $G \Box_{\max}^{\min} x = d$ and their relations iff $G = A^\circ$ and $d= b^\circ$.}
\label{tab:correspondences}
\end{table}

\noindent The relations established for the first five rows of  (Table \ref{tab:correspondences}) are justified by the general switch in  (\ref{eq:Atensetruc}). Assuming the relation in the sixth row is established, the remaining rows are also justified by the general switch in (\ref{eq:Atensetruc}). 

In the following, we define the Chebyshev distance associated to the second member $d$ of the system $G \Box_{\max}^{\min} x = d$, denoted $\nabla(G,d)$ and we prove the last four relations.  

\begin{definition}\label{def:chebdistminmax}
The Chebyshev distance associated to the second member $d$ of the system $G \Box_{\max}^{\min} x = d$ is:
\begin{align*}
    \nabla(G,d) &= \inf_{ c \in \mathcal{T}} \Vert d - c \Vert.
\end{align*}
\end{definition}

We remark that if  $d=b^\circ$, then for all $c \in [0,1]^{n \times 1}$ we have $\Vert d - c \Vert = \Vert b - c^\circ \Vert.$ From this property, we deduce:
\begin{proposition}
If $G = A^\circ$ and $d=b^\circ$, then the Chebyshev distance associated to the second member $d$ of the system $G \Box_{\max}^{\min} x = d$ is equal to  the Chebyshev distance associated to the second member $b$ of the system $A \Box_{\min}^{\max} x = b$: 
    \begin{equation}\label{equaldeltas}
        \nabla(G,d) = \Delta(A,b).
    \end{equation} 
\end{proposition}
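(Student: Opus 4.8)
The plan is to reduce the infimum defining $\nabla(G,d)$ to the one defining $\Delta(A,b)$ by means of the complementation bijection $c \mapsto c^\circ$. First I would assemble the two ingredients already at hand: the set equality $\mathcal{T} = \mathcal{C}^\circ$ (the fourth row of Table~\ref{tab:correspondences}, which is among the first five rows justified by the global switch~(\ref{eq:Atensetruc})), together with the elementary isometry recorded just above the statement, namely that $d = b^\circ$ forces $\Vert d - c \Vert = \Vert b - c^\circ \Vert$ for every $c \in [0,1]^{n \times 1}$.

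Next I would start from the definition $\nabla(G,d) = \inf_{c \in \mathcal{T}} \Vert d - c \Vert$ (Definition~\ref{def:chebdistminmax}) and reindex. Since $\mathcal{T} = \mathcal{C}^\circ$ and the map $c \mapsto c^\circ$ is an involution, hence a bijection, on $[0,1]^{n \times 1}$, a vector $c$ ranging over $\mathcal{T}$ can be written uniquely as $c = (c')^\circ$ with $c'$ ranging over $\mathcal{C}$. The norm identity then yields
\[ \Vert d - (c')^\circ \Vert = \Vert b - \big((c')^\circ\big)^\circ \Vert = \Vert b - c' \Vert, \]
where the last equality uses $(c^\circ)^\circ = c$.

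Consequently the infimum over $\mathcal{T}$ is taken over exactly the same collection of real numbers as the infimum over $\mathcal{C}$, so that
\[ \nabla(G,d) = \inf_{c' \in \mathcal{C}} \Vert b - c' \Vert = \Delta(A,b), \]
which is the claimed equality. (Attainment is not needed here, since the two infima range over identical value sets; if desired one may invoke Corollary~\ref{corollary:deltazero} to replace $\inf$ by $\min$ on the $\Delta$ side.)

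There is no genuine obstacle in this argument. The only point demanding care is the bookkeeping of the reindexing: one must check that $c \mapsto c^\circ$ carries $\mathcal{T}$ precisely onto $\mathcal{C}$ \emph{and} simultaneously transforms $\Vert d - \cdot \Vert$ into $\Vert b - \cdot \Vert$, so that the two infima truly coincide termwise. Both of these facts are direct consequences of the previously established set equality $\mathcal{T} = \mathcal{C}^\circ$ and the one-line norm identity, which themselves descend from the duality $(A\Box_{\max}^{\min}B)^\circ = A^\circ \Box_{\min}^{\max} B^\circ$ of~(\ref{eq:Atensetruc}).
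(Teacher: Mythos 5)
Your proposal is correct and follows essentially the same route as the paper: the paper's proof likewise rewrites $\nabla(G,d)=\inf_{c\in\mathcal{T}}\Vert d-c\Vert$ using the identity $\Vert d-c\Vert=\Vert b-c^{\circ}\Vert$ and then reindexes the infimum over $\mathcal{C}=\mathcal{T}^{\circ}$ to recover $\Delta(A,b)$. Your additional remarks on the bijectivity of $c\mapsto c^{\circ}$ and the termwise equality of the two value sets only make explicit the bookkeeping that the paper leaves implicit.
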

\begin{proof}
\begin{align*}
    \nabla(G,d) &=  \inf_{ c \in \mathcal{T}} \Vert d - c \Vert\\
    &= \inf_{ c \in \mathcal{T}} \Vert b - c^\circ \Vert\\
    &= \inf_{ c'\in \mathcal{C}} \Vert b - c' \Vert \quad\quad \text{(because $\mathcal{C}= \mathcal{T}^\circ$)}\\
    &= \Delta(A,b) \quad\quad\quad \text{(see (Definition \ref{def:chebyshevdist}))}\\
\end{align*}
\end{proof}
\noindent The equality $\nabla(G,d) = \Delta(A,b)$, allows us to establish analogous properties for $\nabla(G,d)$:

\begin{corollary}
     $\nabla(G,d) = \min_{ c \in \mathcal{T}} \Vert d - c \Vert.$
\end{corollary}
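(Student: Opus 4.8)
The plan is to transport the corresponding result on the $\max$-$\min$ side (Corollary \ref{corollary:deltazero}, which asserts $\Delta = \min_{c \in \mathcal{C}} \Vert b - c \Vert$) across the complementation map $c \mapsto c^\circ$. First I would fix $A = G^\circ$ and $b = d^\circ$, so that $G = A^\circ$ and $d = b^\circ$; this places us exactly in the setting of the proposition just proven, and in particular gives $\nabla(G,d) = \Delta(A,b)$.

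Next I would recall the two structural facts already established. The first is the norm identity $\Vert d - c \Vert = \Vert b - c^\circ \Vert$, valid for every $c \in [0,1]^{n \times 1}$ whenever $d = b^\circ$ (noted just before the preceding proposition). The second is the set equality $\mathcal{T} = \mathcal{C}^\circ$ from the fourth row of (Table \ref{tab:correspondences}). Taken together, these say that $c \mapsto c^\circ$ is a norm-preserving bijection from $\mathcal{T}$ onto $\mathcal{C}$.

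By Corollary \ref{corollary:deltazero}, the infimum defining $\Delta(A,b)$ is attained: there is a vector $c' \in \mathcal{C}$ with $\Vert b - c' \Vert = \Delta(A,b)$. Setting $c_0 = (c')^\circ$, the bijection gives $c_0 \in \mathcal{T}$, and the norm identity gives $\Vert d - c_0 \Vert = \Vert b - c' \Vert = \Delta(A,b) = \nabla(G,d)$. Hence the infimum $\inf_{c \in \mathcal{T}} \Vert d - c \Vert$ is attained at $c_0$, which is precisely the claim that $\nabla(G,d) = \min_{c \in \mathcal{T}} \Vert d - c \Vert$.

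There is no genuine obstacle here: the whole content is that complementation is an $L_\infty$ isometry carrying $\mathcal{T}$ onto $\mathcal{C}$, so the minimum on the $\min$-$\max$ side is inherited from the minimum on the $\max$-$\min$ side. The only point requiring a moment's care is the componentwise identity $\mid d_i - c_i \mid = \mid b_i - c_i^\circ \mid$, which follows at once from $d_i = 1 - b_i$ and $c_i^\circ = 1 - c_i$, since $\mid (1 - b_i) - c_i \mid = \mid b_i - (1 - c_i) \mid$.
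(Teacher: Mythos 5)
Your proof is correct and takes essentially the same route as the paper: the paper derives this corollary implicitly from the preceding equality $\nabla(G,d) = \Delta(A,b)$ together with the correspondence $\mathcal{T} = \mathcal{C}^\circ$ and the norm identity $\Vert d - c \Vert = \Vert b - c^\circ \Vert$, which is exactly the isometric-bijection argument you spell out. Your explicit transport of the attained minimum from Corollary \ref{corollary:deltazero} through the complementation map is precisely the detail the paper leaves to the reader.
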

\noindent In order to give an explicit formula for $\nabla(G,d)$, we will use the following lemma:
\begin{lemma}If $G = A^\circ$ and $d=b^\circ$, then we have: 
\begin{equation}
    \forall c \in [0 , 1]^{n \times 1}, \forall \delta\in [0 , 1],\quad U({\underline{c}(\delta)}) \leq {\overline{c}(\delta)}\,\Longleftrightarrow \, {\underline{c'}(\delta)} \leq F({\overline{c'}(\delta)}), 
\end{equation}
\noindent where  
$c' = c^\circ$. 
\end{lemma}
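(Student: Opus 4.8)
The plan is to reduce the stated equivalence to the already-established relation $U(u) = F(u^{\circ})^{\circ}$ (sixth row of Table~\ref{tab:correspondences}), by carefully tracking how the operations $\overline{\cdot}(\delta)$, $\underline{\cdot}(\delta)$ and the order relation behave under the involution $z \mapsto z^{\circ} = 1-z$. The whole argument is essentially bookkeeping once the right identities are isolated.

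First I would record two elementary scalar identities, valid for every $z,\delta \in [0,1]$ and extending componentwise to vectors:
\[ (\overline{z}(\delta))^{\circ} = \underline{z^{\circ}}(\delta) \quad \text{and} \quad (\underline{z}(\delta))^{\circ} = \overline{z^{\circ}}(\delta). \]
These follow directly from $1 - \min(z+\delta,1) = \max((1-z)-\delta,0)$ and $1 - \max(z-\delta,0) = \min((1-z)+\delta,1)$. I would also note that $z \mapsto z^{\circ}$ is an order-reversing involution, so that for vectors $u,v$ one has $u \leq v \Longleftrightarrow u^{\circ} \geq v^{\circ}$ and $(u^{\circ})^{\circ} = u$.

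Next, using $U(u) = F(u^{\circ})^{\circ}$ together with the second identity above, I would rewrite the left-hand member as
\[ U(\underline{c}(\delta)) = F\bigl((\underline{c}(\delta))^{\circ}\bigr)^{\circ} = F(\overline{c'}(\delta))^{\circ}, \]
where $c' = c^{\circ}$. Consequently the inequality $U(\underline{c}(\delta)) \leq \overline{c}(\delta)$ reads $F(\overline{c'}(\delta))^{\circ} \leq \overline{c}(\delta)$.

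Finally, applying the order-reversing involution to both sides and using the first identity $(\overline{c}(\delta))^{\circ} = \underline{c^{\circ}}(\delta) = \underline{c'}(\delta)$ yields
\[ F(\overline{c'}(\delta))^{\circ} \leq \overline{c}(\delta) \Longleftrightarrow F(\overline{c'}(\delta)) \geq \underline{c'}(\delta), \]
which is precisely $\underline{c'}(\delta) \leq F(\overline{c'}(\delta))$, establishing the equivalence. There is no genuine obstacle in this proof; the only point demanding care is the \emph{swap} of the bar and underline operations under $\circ$ combined with the consistent reversal of the inequality direction, which is exactly why I would isolate the two scalar identities at the outset rather than expand every term inline.
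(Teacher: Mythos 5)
Your proof is correct and follows exactly the route the paper intends: the paper's own proof is the one-line remark that the lemma ``is a consequence of the relation $U(c)=F(c^{\circ})^{\circ}$'', and your argument simply makes this explicit via the two scalar identities $(\overline{z}(\delta))^{\circ}=\underline{z^{\circ}}(\delta)$, $(\underline{z}(\delta))^{\circ}=\overline{z^{\circ}}(\delta)$ and the order-reversal under $z\mapsto z^{\circ}$. The bookkeeping you supply is precisely the detail the paper leaves to the reader, so there is nothing to correct.
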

\begin{proof}
This is a consequence of the relation  $\forall c \in [0 , 1]^{n \times 1}, U(c)=F(c^{\circ})^{\circ}$. 
\end{proof}
For a  system of $\min-\max$ fuzzy relational equations, (Theorem 1 of \cite{cuninghame1995residuation})   becomes: 
\begin{corollary}
 $\nabla(G,d) =        \min\{\delta\in [0, 1] \mid U(\underline{d}(\delta)) \leq \overline{d}(\delta) \}$.   
\end{corollary}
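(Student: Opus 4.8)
The plan is to transport the fundamental formula (\ref{eq:delta1}) for the $\max-\min$ Chebyshev distance across the $\circ$ correspondence, using the equality (\ref{equaldeltas}) together with the preceding lemma. A preliminary observation makes the statement fully general: although (\ref{equaldeltas}) and the preceding lemma are phrased under the hypotheses $G=A^\circ$ and $d=b^\circ$, every system $G \Box_{\max}^{\min} x = d$ satisfies these hypotheses once we set $A := G^\circ$ and $b := d^\circ$, because the map $\cdot^\circ$ is an involution, giving $G = (G^\circ)^\circ = A^\circ$ and $d = (d^\circ)^\circ = b^\circ$. So I would fix these $A$ and $b$ at the outset and work with them throughout.

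First I would recall from (\ref{eq:delta1}) that
\begin{equation*}
\Delta(A,b) = \min\{\delta \in [0,1] \mid \underline{b}(\delta) \leq F(\overline{b}(\delta))\}.
\end{equation*}
Next I would invoke the preceding lemma with $c = d$, so that $c' = c^\circ = d^\circ = b$. The lemma then gives, for every $\delta \in [0,1]$, the equivalence
\begin{equation*}
U(\underline{d}(\delta)) \leq \overline{d}(\delta) \Longleftrightarrow \underline{b}(\delta) \leq F(\overline{b}(\delta)).
\end{equation*}
Consequently the two subsets $\{\delta \in [0,1] \mid U(\underline{d}(\delta)) \leq \overline{d}(\delta)\}$ and $\{\delta \in [0,1] \mid \underline{b}(\delta) \leq F(\overline{b}(\delta))\}$ of $[0,1]$ coincide, hence they share the same minimum.

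Finally I would chain these facts with (\ref{equaldeltas}):
\begin{equation*}
\nabla(G,d) = \Delta(A,b) = \min\{\delta \mid \underline{b}(\delta) \leq F(\overline{b}(\delta))\} = \min\{\delta \mid U(\underline{d}(\delta)) \leq \overline{d}(\delta)\},
\end{equation*}
which is the asserted formula. There is no genuine obstacle here: all the analytic content (the order-reversing behaviour of $\cdot^\circ$, which swaps $\underline{\cdot}(\delta)$ with $\overline{\cdot}(\delta)$ and conjugates $F$ into $U$) is already absorbed into the preceding lemma and into (\ref{equaldeltas}). The only point requiring minor care is to confirm that the minimum on the right is legitimately attained, i.e. that the defining set is non-empty; this is inherited directly from the corresponding fact for $\Delta$ established via (\ref{eq:delta1}), since the two defining sets are literally equal.
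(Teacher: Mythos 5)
Your proof is correct and follows precisely the route the paper intends for this corollary: it is stated immediately after the lemma exactly so that, upon setting $A = G^\circ$ and $b = d^\circ$, the lemma (applied with $c = d$, hence $c' = b$) identifies the two defining sets of $\delta$'s, and the conclusion then follows by chaining (\ref{eq:delta1}) with the equality $\nabla(G,d)=\Delta(A,b)$ of (\ref{equaldeltas}). Your preliminary remark that the hypotheses $G=A^\circ$, $d=b^\circ$ are automatically satisfiable via the involution $\cdot^\circ$, and your observation that attainment of the minimum is inherited from the $\max-\min$ case, are exactly the (implicit) details the paper leaves to the reader.
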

For a  system of $\min-\max$ fuzzy relational equations, (Theorem \ref{th:Deltamin}) of this article becomes:
\begin{corollary} Let $G = [g_{ij}]_{1 \leq i \leq n, 1 \leq j \leq m} \in [0,1]^{n\times m}$ be a matrix and $d = [d_i]_{1 \leq i\leq n}$ be a column vector. The Chebyshev distance associated to the second member $d$ of the system $G \Box_{\max}^{\min} x = d$ is:
\begin{equation}\label{eq:nablapremier}
    \nabla = \nabla(G,d) =  \max_{1 \leq i \leq n}\,\nabla_i
\end{equation}
where for $i = 1, 2, \dots n$: 
\begin{equation} \label{eq:Nablamini} 
\nabla_i =  \min_{1 \leq j \leq m}\,\max[ ( g_{ij}-d_i)^+,  \max_{1 \leq k \leq n}\,  \,\sigma_\epsilon\,(d_i, g_{kj}, d_k)]
\end{equation}
\noindent and
\begin{equation}\label{eq:sigmaespi}
    \sigma_\epsilon\,(u,v,w) = \min(\frac{(w-u)^+}{2}, (w - v)^+).  
\end{equation}
\end{corollary}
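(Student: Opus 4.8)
The plan is to reduce the claim entirely to the $\max$-$\min$ case that was already settled in Theorem~\ref{th:Deltamin}, exploiting the complementation duality. Given the matrix $G$ and the vector $d$, I would set $A := G^{\circ}$ and $b := d^{\circ}$; since $(A^{\circ})^{\circ}=A$, this yields $G = A^{\circ}$ and $d = b^{\circ}$, putting us exactly in the situation of the Proposition establishing (\ref{equaldeltas}). That proposition already gives $\nabla(G,d) = \Delta(A,b)$, so it suffices to expand $\Delta(A,b)$ by the explicit formula (\ref{eq:Deltamini}) and then rewrite every occurrence of $a_{ij}$ and $b_i$ back in terms of $g_{ij} = 1 - a_{ij}$ and $d_i = 1 - d_i^{\circ} = 1 - b_i$, checking that the result matches (\ref{eq:Nablamini}) term by term.

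First I would handle the outer term of (\ref{eq:Deltamini}). Because $a_{ij} = 1 - g_{ij}$ and $b_i = 1 - d_i$, we get $(b_i - a_{ij})^+ = ((1-d_i)-(1-g_{ij}))^+ = (g_{ij} - d_i)^+$, which is precisely the first argument of the maximum in (\ref{eq:Nablamini}).

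The heart of the argument is the identity $\sigma_G(b_i, a_{kj}, b_k) = \sigma_\epsilon(d_i, g_{kj}, d_k)$, which I would verify directly from the definitions. Recalling $\sigma_G(x,y,z) = \min(\frac{(x-z)^+}{2}, (y-z)^+)$, the substitutions give $(b_i - b_k)^+ = (d_k - d_i)^+$ and $(a_{kj} - b_k)^+ = (d_k - g_{kj})^+$, so that $\sigma_G(b_i, a_{kj}, b_k) = \min(\frac{(d_k - d_i)^+}{2}, (d_k - g_{kj})^+)$, which is exactly $\sigma_\epsilon(d_i, g_{kj}, d_k)$ as defined in (\ref{eq:sigmaespi}). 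With both term-by-term identifications in hand, the quantity $\delta_i$ of Theorem~\ref{th:Deltamin} coincides with $\nabla_i$ of (\ref{eq:Nablamini}); taking the maximum over $i$ then gives $\Delta(A,b) = \max_{1 \leq i \leq n} \nabla_i$, and combining with $\nabla(G,d)=\Delta(A,b)$ yields (\ref{eq:nablapremier}).

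I do not expect a genuine obstacle: the entire content is bookkeeping under the involution $t \mapsto 1-t$, and the interchange of the roles of $\max$ and $\min$ as well as of the order relation is already absorbed by the general switch (\ref{eq:Atensetruc}) and recorded in Table~\ref{tab:correspondences}. The one place demanding care is matching the halved term $\frac{(x-z)^+}{2}$ of $\sigma_G$ with its counterpart in $\sigma_\epsilon$: one must track which variable plays the role of the output component $b_k$ (resp.\ $d_k$) and confirm that the two differences flip sign consistently so that the halved quantities agree, after which the proof closes immediately.
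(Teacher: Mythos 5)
Your proof is correct and takes essentially the same route as the paper's own (very terse) proof: set $A = G^\circ$, $b = d^\circ$, invoke the equality $\nabla(G,d) = \Delta(A,b)$, expand $\Delta(A,b)$ via Theorem~\ref{th:Deltamin}, and conclude with the identity $\sigma_G(b_i, a_{kj}, b_k) = \sigma_\epsilon(d_i, g_{kj}, d_k)$ under the involution $t \mapsto 1-t$. The only difference is that the paper asserts this last identity without verification, whereas you check it term by term, which is a welcome addition rather than a divergence.
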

\begin{proof}
If we set $A = G^\circ$ and $b = d^\circ$,  we deduce (\ref{eq:nablapremier}), (\ref{eq:Nablamini}) and (\ref{eq:sigmaespi})  from the equality $\Delta(A,b)=\nabla(G,d)$, see (\ref{equaldeltas}), (Theorem \ref{th:Deltamin}) and the relation $\sigma_G(x,y,z)=\sigma_\epsilon(u,v,w)$ where $x = u^\circ$, $y= v^\circ$ and $z= w^\circ$. 
\end{proof}

We define the set of Chebyshev approximations of the second member $d$ of the system $G \Box_{\max}^{\min} x = d$:
\begin{equation}
    \mathcal{T}_d = \{ c \in \mathcal{T} \mid \Vert d -c \Vert = \nabla(G,d)   \}.
\end{equation}
\noindent  If $G = A^\circ$ and $d=b^\circ$, then the equality $\mathcal{T}_d = \mathcal{C}^\circ_b$ follows from (\ref{equaldeltas}).\\
From $U(\underline{d}(\nabla))={F(\overline{b}(\Delta))}^\circ$, see (Table \ref{tab:correspondences}), we
 deduce:
\begin{corollary}
 The lowest Chebyshev approximation of the second member $d$ of the system $G \Box_{\max}^{\min} x = d$ is $U(\underline{d}(\nabla))$.
\end{corollary}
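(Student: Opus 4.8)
The plan is to transfer the corresponding result for the $\max-\min$ system through the complementation map $c \mapsto c^\circ$, exploiting that this map is an order-reversing involution. First I would set $A = G^\circ$ and $b = d^\circ$, so that $G = A^\circ$ and $d = b^\circ$, placing us in the situation covered by (Table \ref{tab:correspondences}). By (Proposition \ref{proposition:greatestcheb}), the vector $F(\overline{b}(\Delta))$ is the greatest Chebyshev approximation of $b$, i.e., the greatest element of $\mathcal{C}_b$ for the usual order on $[0,1]^{n \times 1}$.

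Next I would observe that the map $c \mapsto c^\circ = [1 - c_i]$ satisfies $(c^\circ)^\circ = c$ and reverses the usual order: $c \leq c'$ if and only if $c^\circ \geq {c'}^\circ$, since $t \mapsto 1 - t$ is a decreasing bijection of $[0,1]$ applied componentwise. Hence this map restricts to an order-reversing bijection from $\mathcal{C}_b$ onto $\mathcal{C}_b^\circ = \mathcal{T}_d$, the set equality being the relation already established from (\ref{equaldeltas}). Under an order-reversing bijection, the greatest element of $\mathcal{C}_b$ is necessarily carried to the lowest element of its image $\mathcal{T}_d$.

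Finally I would invoke the relation $U(\underline{d}(\nabla)) = {F(\overline{b}(\Delta))}^\circ$ recorded in (Table \ref{tab:correspondences}). Combining the three steps, the vector ${F(\overline{b}(\Delta))}^\circ = U(\underline{d}(\nabla))$ is the image under $\circ$ of the greatest element of $\mathcal{C}_b$, hence it is the lowest element of $\mathcal{T}_d$. This is precisely the statement that $U(\underline{d}(\nabla))$ belongs to $\mathcal{T}_d$ and satisfies $U(\underline{d}(\nabla)) \leq c$ for every $c \in \mathcal{T}_d$, i.e., it is the lowest Chebyshev approximation of $d$.

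The only point requiring care—and the main (if modest) obstacle—is verifying that the complementation map genuinely sends the greatest element of $\mathcal{C}_b$ to the lowest element when restricted to $\mathcal{T}_d$; this reduces to the elementary order-reversing property of $t \mapsto 1 - t$ together with the already-established set equality $\mathcal{T}_d = \mathcal{C}_b^\circ$, so no new computation is needed beyond citing (\ref{equaldeltas}) and the relevant rows of (Table \ref{tab:correspondences}).
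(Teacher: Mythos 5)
Your proposal is correct and takes essentially the same approach as the paper: the paper deduces the corollary exactly from the relation $U(\underline{d}(\nabla))={F(\overline{b}(\Delta))}^\circ$ of (Table \ref{tab:correspondences}) combined with the equality $\mathcal{T}_d = \mathcal{C}_b^\circ$ obtained from (\ref{equaldeltas}), the point being that the order-reversing involution $c \mapsto c^\circ$ carries the greatest element of $\mathcal{C}_b$ (Proposition \ref{proposition:greatestcheb}) to the lowest element of $\mathcal{T}_d$. Your write-up simply makes explicit the order-reversal step that the paper leaves implicit.
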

\noindent The method for obtaining maximal Chebyshev approximations of the second member $d$ of the system $G \Box_{\max}^{\min} x = d$ is  analogous  to the practical method presented in (Subsection \ref{subsec:minicheb}).
We use the following notation:
\begin{notation}\label{notationwwww}
Let 
    $\{ w^{(1)},w^{(2)},\dots,w^{(h)} \}$  be the set of maximal solutions of the system of inequalities $ G  \Box_{\max}^{\min} x \leq \overline{d}(\nabla)$  such that $\forall i \in \{1,2,\dots,h\}, w^{(i)} \geq \nu=G^t \Box_{\epsilon}^{\max} U(\underline{d}(\nabla))$.

\end{notation}
We have:
\begin{corollary}\label{eq:maxsol} 
\noindent We  put:
\begin{equation*}\label{eq:ttilde}
   \widetilde{\mathcal{T}} = \{ \psi(w^{(1)}),\psi(w^{(2)}),\dots,\psi(w^{(h)}) \} 
\end{equation*}
\noindent and
\begin{equation*}\label{eq:ttildemin}
    (\widetilde{\mathcal{T}})_{\max} = \{ c \in \widetilde{\mathcal{T}} \mid c \text{ is maximal in }\widetilde{\mathcal{T}}\}.
\end{equation*}
 
\noindent Then, we have: \[\widetilde{\mathcal{T}} \subseteq \mathcal{T}_{d} \text{ and } \mathcal{T}_{d,\max} = (\widetilde{\mathcal{T}})_{\max}, \]
\noindent where  $\mathcal{T}_{d,\max}$ is the set  formed by the maximal Chebyshev approximations of the second member $d$ of the system $G \Box_{\max}^{\min} x = d$.
\end{corollary}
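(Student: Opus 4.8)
The plan is to derive this corollary from its $\max$-$\min$ counterpart, Corollary \ref{corollary:ctitlde}, purely by the complementation duality $x \mapsto x^\circ$ recorded in (Table \ref{tab:correspondences}). Throughout I set $A = G^\circ$ and $b = d^\circ$, so that $G = A^\circ$, $d = b^\circ$, and $\nabla(G,d) = \Delta(A,b)$ by (\ref{equaldeltas}); I write $\nabla = \Delta$ for this common value.

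First I would show that the complement map sends the feasible set of (Notation \ref{notationwwww}) onto the feasible set of (Notation \ref{notations:minimalsol}). The switch relation (\ref{eq:Atensetruc}) gives $(G \Box_{\max}^{\min} x)^\circ = A \Box_{\min}^{\max} x^\circ$, and a direct computation yields $\overline{d}(\nabla)^\circ = \underline{b}(\Delta)$: indeed $1 - \min(d_i + \nabla, 1) = \max((1-d_i) - \nabla, 0) = (b_i - \Delta)^+$. Since $u \leq v \Longleftrightarrow u^\circ \geq v^\circ$, applying $(\cdot)^\circ$ to $G \Box_{\max}^{\min} x \leq \overline{d}(\nabla)$ gives exactly $\underline{b}(\Delta) \leq A \Box_{\min}^{\max} x^\circ$. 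Thus $x$ solves the $\min$-$\max$ inequality system iff $x^\circ$ solves the $\max$-$\min$ inequality system, and because complementation reverses the order, \emph{maximal} solutions of the former correspond bijectively to \emph{minimal} solutions of the latter. Finally the constraint $x \geq \nu$ becomes $x^\circ \leq \nu^\circ = \eta$, using $\nu = \eta^\circ$ from (Table \ref{tab:correspondences}). Hence, after relabeling, $\{w^{(1)}, \dots, w^{(h)}\}$ of (Notation \ref{notationwwww}) coincides under complement with $\{v^{(1)}, \dots, v^{(h)}\}$ of (Notation \ref{notations:minimalsol}), i.e. $(w^{(i)})^\circ = v^{(i)}$.

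Next I would transfer the images. From $\psi(x) = \theta(x^\circ)^\circ$ (Table \ref{tab:correspondences}) I get $\psi(w^{(i)}) = \theta(v^{(i)})^\circ$, so $\widetilde{\mathcal{T}} = \widetilde{\mathcal{C}}^\circ$. Corollary \ref{corollary:ctitlde} states $\widetilde{\mathcal{C}} \subseteq \mathcal{C}_b$ and $\mathcal{C}_{b,\min} = (\widetilde{\mathcal{C}})_{\min}$. Taking complements and using $\mathcal{T}_d = \mathcal{C}_b^\circ$ gives $\widetilde{\mathcal{T}} = \widetilde{\mathcal{C}}^\circ \subseteq \mathcal{C}_b^\circ = \mathcal{T}_d$. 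Because $(\cdot)^\circ$ reverses the order, the maximal elements of $\widetilde{\mathcal{C}}^\circ$ are exactly the complements of the minimal elements of $\widetilde{\mathcal{C}}$, so $(\widetilde{\mathcal{T}})_{\max} = ((\widetilde{\mathcal{C}})_{\min})^\circ = \mathcal{C}_{b,\min}^\circ = \mathcal{T}_{d,\max}$, the last equality being the row of (Table \ref{tab:correspondences}) for extremal Chebyshev approximations.

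The main obstacle is bookkeeping rather than depth: I must verify carefully that the order-reversal of $(\cdot)^\circ$ converts ``maximal'' to ``minimal'' consistently at every stage — for the solution sets of the two inequality systems, for the bounding vectors $\nu$ and $\eta$, and for the extremal elements $(\widetilde{\mathcal{T}})_{\max}$ versus $(\widetilde{\mathcal{C}})_{\min}$. The one genuine computation to secure is the identity $\overline{d}(\nabla)^\circ = \underline{b}(\Delta)$, which aligns the right-hand sides of the two inequality systems; everything else is a mechanical consequence of the duality already established in (Table \ref{tab:correspondences}) together with Corollary \ref{corollary:ctitlde}.
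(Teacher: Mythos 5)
Your proposal is correct and follows essentially the same route as the paper's proof: set $A = G^\circ$, $b = d^\circ$, use the switch (\ref{eq:Atensetruc}) together with $\nabla(G,d)=\Delta(A,b)$ to identify the two inequality systems and their bounding vectors (so that $w^{(i)} = (v^{(i)})^\circ$), and then transfer Corollary \ref{corollary:ctitlde} through the order-reversing complementation. The only difference is one of detail — the paper leaves the final transfer ("this last equality implies the claims") implicit, while you spell out the intermediate identities $\widetilde{\mathcal{T}} = \widetilde{\mathcal{C}}^\circ$, $(\widetilde{\mathcal{T}})_{\max} = ((\widetilde{\mathcal{C}})_{\min})^\circ$, and $\overline{d}(\nabla)^\circ = \underline{b}(\Delta)$ explicitly.
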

\begin{proof}
Let $A = G^\circ$ and $b = d^\circ$.  From the equality $\nabla(G,d)=\Delta(A,b)$ and the general switch in (\ref{eq:Atensetruc}), we have for any $x \in [0,1]^{m \times 1}$:
\[ G \Box^{\min}_{\max} x \leq \overline{d}(\nabla) \Longleftrightarrow \underline{b}(\Delta) \leq A \Box_{\min}^{\max} x^\circ, \]
\[ x \geq \nu \Longleftrightarrow x^\circ \leq \eta=A^t \Box_{\rightarrow_G}^{\min} F(\overline{b}(\Delta)). \]
\noindent From these two equivalences, we deduce:
\[  \{ w^{(1)},w^{(2)},\dots,w^{(h)} \} = \{{v^{(1)}}^\circ,{v^{(2)}}^\circ,\dots,{v^{(h}}^\circ \}    \]
\noindent where the set $\{v^{(1)},v^{(2)},\dots,v^{(h)}  \}$ is defined in (Notation \ref{notations:minimalsol}) for the system $A \Box_{\min}^{\max} x = b$.  \\
\noindent Using the switch (\ref{eq:Atensetruc}), this last equality implies the claims of the  Corollary.
\end{proof}
\begin{corollary}\label{eq:tdmaxnonempty}
The set $\mathcal{T}_{d,\max}$ is non-empty and finite. 
\end{corollary}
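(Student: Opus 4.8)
The plan is to deduce this result directly from its $\max-\min$ counterpart (Corollary \ref{cor:nonemptyfinite}) through the involution $c \mapsto c^\circ$. First I would set $A = G^\circ$ and $b = d^\circ$, so that by the involution property $(A^\circ)^\circ = A$ we recover $G = A^\circ$ and $d = b^\circ$, placing us exactly in the setting of the last column of (Table \ref{tab:correspondences}). Under this identification the relation $\mathcal{T}_{d,\max} = \mathcal{C}_{b,\min}^\circ$ holds.

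Next I would invoke (Corollary \ref{cor:nonemptyfinite}), which guarantees that $\mathcal{C}_{b,\min}$ is non-empty and finite. Since the transformation $c \mapsto c^\circ = [1-c_i]$ is a bijection of $[0,1]^{n\times 1}$ onto itself, it carries a non-empty finite set to a non-empty finite set of the same cardinality. Applying this to $\mathcal{C}_{b,\min}$ yields that $\mathcal{T}_{d,\max} = \mathcal{C}_{b,\min}^\circ$ is non-empty and finite, which is the claim.

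An alternative, more self-contained route avoids the correspondence and mirrors the proof of (Corollary \ref{cor:nonemptyfinite}): by (Corollary \ref{eq:maxsol}) we have $\mathcal{T}_{d,\max} = (\widetilde{\mathcal{T}})_{\max}$, where $\widetilde{\mathcal{T}} = \{\psi(w^{(1)}),\dots,\psi(w^{(h)})\}$ is a finite set of at most $h$ elements. Because the set of maximal elements of a finite non-empty partially ordered set is again finite and non-empty, the conclusion follows once one checks that $\widetilde{\mathcal{T}}$ is non-empty, i.e.\ that $h \geq 1$.

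I do not expect a real obstacle here: the statement is a formal consequence of results already established. The only point requiring a word of care is the non-emptiness of $\mathcal{T}_{d,\max}$, which ultimately rests on the non-emptiness of the set $\{w^{(1)},\dots,w^{(h)}\}$ of maximal solutions of the relevant system of $\min-\max$ inequalities; under the switch $c \mapsto c^\circ$ this corresponds to the set $\{v^{(1)},\dots,v^{(h)}\}$ of (Notation \ref{notations:minimalsol}), whose non-emptiness is guaranteed by the theory of \cite{matusiewicz2013increasing}. The finiteness is then immediate from that of the underlying finite index set.
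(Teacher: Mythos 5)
Your proposal is correct, and your ``alternative'' route is in fact precisely the paper's own proof: the paper invokes (Corollary \ref{eq:maxsol}) to write $\mathcal{T}_{d,\max} = (\widetilde{\mathcal{T}})_{\max}$ and then observes that the maximal elements of the finite non-empty ordered set $\widetilde{\mathcal{T}}$ form a non-empty finite set. Your primary route --- transporting (Corollary \ref{cor:nonemptyfinite}) through the involution via the identity $\mathcal{T}_{d,\max} = \mathcal{C}_{b,\min}^\circ$ --- is a genuine variant: it exhibits $\mathcal{T}_{d,\max}$ as the image of $\mathcal{C}_{b,\min}$ under the order-reversing bijection $c \mapsto c^\circ$, so non-emptiness and finiteness (indeed the exact cardinality) are inherited in one stroke. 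What this buys is brevity and an explicit display of the $\max-\min$ / $\min-\max$ duality; what it hides is that the table relation $\mathcal{T}_{d,\max} = \mathcal{C}_{b,\min}^\circ$ is itself justified in the paper by the same switch (\ref{eq:Atensetruc}) and (Corollary \ref{eq:maxsol}), so the two routes share their logical backbone rather than being independent proofs. One refinement to your closing remark: the non-emptiness of $\{w^{(1)},\dots,w^{(h)}\}$ (equivalently, of the set $\{v^{(1)},\dots,v^{(h)}\}$ of (Notation \ref{notations:minimalsol})) does not follow from \cite{matusiewicz2013increasing} alone; one also needs the system of inequalities $\underline{b}(\Delta) \leq A \Box_{\min}^{\max} x$ to admit at least one solution below $\eta$, which the paper secures from $\Lambda_b \neq \emptyset$ (Proposition \ref{prop:nonemptylambdab}) together with the characterization in (Proposition \ref{proposition:firstlambdab}).
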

\begin{proof}
As $\widetilde{{\cal T}}$ is a finite non-empty ordered set, the set $(\widetilde{\mathcal{T}})_{\max} = \mathcal{T}_{d,\max}$ is non-empty and finite. 
\end{proof}

\noindent We study the approximate solutions set  $\Upsilon_d$ of the system $G \Box_{\max}^{\min} x = d$:
\begin{definition}
The approximate solutions set of the system $G \Box_{\max}^{\min} x = d$ is:
    \begin{equation*}
    \Upsilon_d = \psi^{-1}(\mathcal{T}_{d}) = \{x\in[0,1]^{m\times 1} \mid \psi(x) \in \mathcal{T}_{d}\}.
\end{equation*}
\end{definition}
\noindent If $A = G^\circ$ and $b = d^\circ$, we have $\Upsilon_{d}= \Lambda_b^\circ$. From this, we deduce a particular element of $\Upsilon_d$:
\begin{proposition}
   The lowest approximate solution of the system  $G \Box_{\max}^{\min} x = d$ is $\nu = G^t \Box_{\epsilon}^{\max} U(\underline{d}(\nabla))$.
\end{proposition}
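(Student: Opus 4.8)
The plan is to deduce this statement directly from its max-min counterpart (Proposition~\ref{prop:greatestaprox}) via the order-reversing involution $x \mapsto x^\circ$, exactly in the spirit of how the last four rows of Table~\ref{tab:correspondences} are justified. First I would set $A = G^\circ$ and $b = d^\circ$, so that $(A^\circ, b^\circ) = (G, d)$ and every tool developed for the system $A \Box_{\min}^{\max} x = b$ becomes available. By Proposition~\ref{prop:greatestaprox}, the vector $\eta = A^t \Box_{\rightarrow_G}^{\min} F(\overline{b}(\Delta))$ is the greatest element of $\Lambda_b$; and from the row of Table~\ref{tab:correspondences} giving $\nu = \eta^\circ$ together with $\nabla(G,d) = \Delta(A,b)$, the candidate lowest approximate solution is precisely $\nu = G^t \Box_{\epsilon}^{\max} U(\underline{d}(\nabla)) = \eta^\circ$.

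The key observation to exploit is that $x \mapsto x^\circ$ is an order-reversing bijection of $[0,1]^{m\times 1}$ onto itself, satisfying $(x^\circ)^\circ = x$ and $x \leq y \Longleftrightarrow x^\circ \geq y^\circ$. Combined with the already established equality $\Upsilon_d = \Lambda_b^\circ$, this means that a vector $y$ lies in $\Upsilon_d$ if and only if $y^\circ$ lies in $\Lambda_b$.

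Carrying out the argument: since $\eta \in \Lambda_b$, we get $\nu = \eta^\circ \in \Lambda_b^\circ = \Upsilon_d$, so $\nu$ is indeed an approximate solution of $G \Box_{\max}^{\min} x = d$. To see that it is the lowest, I would take any $y \in \Upsilon_d$; then $y^\circ \in \Lambda_b$, hence $y^\circ \leq \eta$ because $\eta$ is the greatest element of $\Lambda_b$. Applying the order-reversing map $\circ$ to both sides yields $y = (y^\circ)^\circ \geq \eta^\circ = \nu$. Thus $\nu$ is a lower bound of $\Upsilon_d$ that belongs to $\Upsilon_d$, i.e., its lowest element.

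I do not anticipate a genuine obstacle here, since every ingredient is already in place; the only point requiring care is to invoke the order-reversal of $\circ$ in the correct direction (greatest maps to lowest) and to confirm that $\Upsilon_d = \Lambda_b^\circ$ has been established before this statement, so that membership can be transferred in both directions. Everything else is a direct transport of Proposition~\ref{prop:greatestaprox} across the involution.
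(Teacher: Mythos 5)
Your proposal is correct and follows essentially the same route as the paper: the paper's own proof is the one-line remark that, with $A = G^\circ$ and $b = d^\circ$, one has $\nu = \eta^\circ$, implicitly combining this with $\Upsilon_d = \Lambda_b^\circ$ and the order-reversing involution $x \mapsto x^\circ$ to transport Proposition~\ref{prop:greatestaprox} (greatest element of $\Lambda_b$) into the lowest element of $\Upsilon_d$. You have simply spelled out the details that the paper leaves implicit.
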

\begin{proof}
This follows from that the fact that, if $A = G^\circ$ and $b = d^\circ$, we have $\nu = \eta^\circ$.    
\end{proof}
  We have a set of maximal approximate solutions $\Upsilon_{d,\max}$ of the system  $G \Box_{\max}^{\min} x = d$  that satisfies:
\begin{equation}\label{eq:satisifactiontruc}
    \Upsilon_{d,\max} \subseteq \Upsilon_d  \text{ and } \mathcal{T}_{d,\max} = \{ \psi(x) \mid x \in \Upsilon_{d,\max} \},
\end{equation} 
which is defined by:
\begin{definition}
$$\Upsilon_{d,\max}= \{ x \in \{ w^{(1)}, w^{(2)},\dots, w^{(h)} \} \mid \psi(x) \in \mathcal{T}_{d,\max} \}, \quad \text{see (Notation \ref{notationwwww})}.$$
\end{definition}
We have: 
\begin{proposition}
With the above definition of $\Upsilon_{d,\max}$, we have $\Upsilon_{d,\max} \subseteq \Upsilon_d  \text{ and } \mathcal{T}_{d,\max} = \{ \psi(x) \mid x \in \Upsilon_{d,\max} \}$. Therefore, the set $\Upsilon_{d,\max}$ is  non-empty and finite.
\end{proposition}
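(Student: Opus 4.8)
The plan is to reduce the statement entirely to the already-established max-min results by transporting everything through the $\circ$ transformation. I would set $A = G^\circ$ and $b = d^\circ$, and collect the correspondences that have already been justified: the identity $\psi(x) = \theta(x^\circ)^\circ$ (Table \ref{tab:correspondences}), the equalities $\mathcal{T}_{d,\max} = \mathcal{C}_{b,\min}^\circ$ and $\Upsilon_d = \Lambda_b^\circ$, and the identification $\{w^{(1)},\dots,w^{(h)}\} = \{(v^{(1)})^\circ,\dots,(v^{(h)})^\circ\}$ obtained in the proof of (Corollary \ref{eq:maxsol}). Together with the involutivity $(x^\circ)^\circ = x$, these are all the ingredients needed.

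The key step is to prove $\Upsilon_{d,\max} = \Lambda_{b,\min}^\circ$. For a fixed index $i$, I would combine $\psi(x) = \theta(x^\circ)^\circ$ with involutivity to get $\psi((v^{(i)})^\circ) = \theta(v^{(i)})^\circ$. Then $w^{(i)} = (v^{(i)})^\circ$ belongs to $\Upsilon_{d,\max}$ exactly when $\theta(v^{(i)})^\circ \in \mathcal{T}_{d,\max} = \mathcal{C}_{b,\min}^\circ$, i.e. exactly when $\theta(v^{(i)}) \in \mathcal{C}_{b,\min}$, which is precisely the defining condition for $v^{(i)} \in \Lambda_{b,\min}$. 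Running this equivalence over all indices gives $\Upsilon_{d,\max} = \Lambda_{b,\min}^\circ$.

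The three conclusions then follow by transporting the max-min facts. The inclusion $\Lambda_{b,\min} \subseteq \Lambda_b$ yields $\Upsilon_{d,\max} = \Lambda_{b,\min}^\circ \subseteq \Lambda_b^\circ = \Upsilon_d$. Applying $\circ$ to $\mathcal{C}_{b,\min} = \{\theta(x) \mid x \in \Lambda_{b,\min}\}$ and using $\theta(x)^\circ = \psi(x^\circ)$ gives $\mathcal{T}_{d,\max} = \mathcal{C}_{b,\min}^\circ = \{\psi(z) \mid z \in \Lambda_{b,\min}^\circ\} = \{\psi(z) \mid z \in \Upsilon_{d,\max}\}$. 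Finally, since $\circ$ is a bijection of $[0,1]^{m \times 1}$ and $\Lambda_{b,\min}$ has already been shown to be non-empty and finite, its image $\Upsilon_{d,\max} = \Lambda_{b,\min}^\circ$ is non-empty and finite as well.

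The argument is essentially bookkeeping once the correspondences are recalled; the only point that requires care is the verification of $\Upsilon_{d,\max} = \Lambda_{b,\min}^\circ$, where one must chain the identities $\psi(x) = \theta(x^\circ)^\circ$, the set identification $\{w^{(i)}\} = \{(v^{(i)})^\circ\}$, and $\mathcal{T}_{d,\max} = \mathcal{C}_{b,\min}^\circ$ without introducing a complement or indexing error.
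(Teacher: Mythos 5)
Your proof is correct and takes essentially the same route as the paper: both reduce the statement to the already-proved $\max-\min$ results through the $\circ$ transformation, resting on the identities $\psi(x)=\theta(x^{\circ})^{\circ}$, $\Upsilon_{d}=\Lambda_{b}^{\circ}$ and $\Upsilon_{d,\max}=\Lambda_{b,\min}^{\circ}$. The only (harmless) differences are that you verify the key equality $\Upsilon_{d,\max}=\Lambda_{b,\min}^{\circ}$ explicitly rather than quoting it from (Table \ref{tab:correspondences}), and you deduce non-emptiness and finiteness from $\Lambda_{b,\min}$ via the bijection $\circ$, whereas the paper transports these properties from $\mathcal{T}_{d,\max}$ using (Corollary \ref{eq:tdmaxnonempty}); both derivations are valid.
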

\begin{proof}
 If $A = G^\circ$ and $b = d^\circ$, we have: $\psi(x)=\theta(x^{\circ})^{\circ}$,
 $\Upsilon_{d} = \Lambda_{b}^\circ$ and $\Upsilon_{d,\max} = \Lambda_{b,\min}^\circ$. From these three equalities, we deduce immediately $\Upsilon_{d,\max} \subseteq \Upsilon_d$ and $\mathcal{T}_{d,\max} = \{ \psi(x) \mid x \in \Upsilon_{d,\max} \}$.\\
As we know that the set $\mathcal{T}_{d,\max}$ is non-empty and finite (Corollary \ref{eq:tdmaxnonempty}), we deduce from the equality $\mathcal{T}_{d,\max} = \{ \psi(x) \mid x \in \Upsilon_{d,\max} \}$ that the set $\Upsilon_{d,\max}$ is also non-empty and finite.
\end{proof}

 The structure of the set ${\cal T}_d$ is  described by the following result:
\begin{corollary}\label{corollary:anologth3} For all $c\in [0 , 1]^{n \times 1}$, we have: 
\begin{equation}c \text{ is a Chebyshev approximation of } d \text{ i.e., } c\in {\cal T}_d \,\Longleftrightarrow \, U(c)  = c  \,\,\text{and}\,\,\exists \, c'\in {\cal T}_{d,\max} \,\,
\text{s.t.}\,\, U(\underline{d}(\nabla)) \leq c \leq c'.\end{equation}
\end{corollary}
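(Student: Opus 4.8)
The plan is to deduce this structure theorem directly from its max-min counterpart, Theorem \ref{th:3}, by exploiting the complement correspondence recorded in Table \ref{tab:correspondences}. First I would set $A = G^\circ$ and $b = d^\circ$, so that $G = A^\circ$ and $d = b^\circ$, and all the relations in the last column of Table \ref{tab:correspondences} are in force. In particular, for the system $A \Box_{\min}^{\max} x = b$ we have at our disposal Theorem \ref{th:3}: for all $c' \in [0,1]^{n \times 1}$,
\[ c' \in \mathcal{C}_b \,\Longleftrightarrow\, F(c') = c' \text{ and } \exists\, c'' \in \mathcal{C}_{b,\min} \text{ s.t. } c'' \leq c' \leq F(\overline{b}(\Delta)). \]

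Next, given $c \in [0,1]^{n\times 1}$, I would introduce $c' = c^\circ$ and translate each of the three conditions through the complement map. Using the relation $\mathcal{T}_d = \mathcal{C}_b^\circ$, membership $c \in \mathcal{T}_d$ is equivalent to $c' \in \mathcal{C}_b$. Using $U(c) = F(c^\circ)^\circ$ (sixth row of Table \ref{tab:correspondences}), the fixed-point equation $U(c) = c$ is equivalent to $F(c^\circ)^\circ = c$, hence to $F(c') = c'$. For the sandwich condition I would invoke $\mathcal{T}_{d,\max} = \mathcal{C}_{b,\min}^\circ$ and $U(\underline{d}(\nabla)) = F(\overline{b}(\Delta))^\circ$, together with the fundamental fact that $t \mapsto 1-t$ reverses the order, i.e. $x \leq y \Longleftrightarrow y^\circ \leq x^\circ$. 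Under this, an element $\widetilde{c} \in \mathcal{T}_{d,\max}$ satisfying $U(\underline{d}(\nabla)) \leq c \leq \widetilde{c}$ corresponds, upon taking complements, exactly to an element $c'' = \widetilde{c}^\circ \in \mathcal{C}_{b,\min}$ satisfying $c'' \leq c' \leq F(\overline{b}(\Delta))$.

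Assembling these three equivalences, the statement to be proven for $c$ becomes, after the substitution $c' = c^\circ$, precisely the statement of Theorem \ref{th:3} for $c'$, which holds. I expect the only delicate point to be the order reversal under complementation in the two-sided inequality: one must be attentive that the lower bound $U(\underline{d}(\nabla))$ and the maximal elements of $\mathcal{T}_{d,\max}$ swap roles with the upper bound $F(\overline{b}(\Delta))$ and the minimal elements of $\mathcal{C}_{b,\min}$ respectively, so that the sandwich is preserved but with its orientation flipped. Everything else is a routine transcription through the dictionary of Table \ref{tab:correspondences}, justified by the general switch \eqref{eq:Atensetruc}.
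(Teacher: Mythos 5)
Your proposal is correct and follows exactly the paper's own route: setting $A = G^\circ$, $b = d^\circ$, invoking the dictionary relations $\mathcal{T}_d = \mathcal{C}_b^\circ$, $\mathcal{T}_{d,\max} = \mathcal{C}_{b,\min}^\circ$, $U(\underline{d}(\nabla)) = F(\overline{b}(\Delta))^\circ$ and $U(c) = F(c^\circ)^\circ$, and then applying Theorem \ref{th:3}. The paper states this more tersely, while you make the order-reversal under complementation explicit, which is a worthwhile clarification but not a different argument.
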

\begin{proof}
This follows from that the fact that, if $A = G^\circ$ and $b = d^\circ$, we have $\mathcal{T}_d = \mathcal{C}_b^\circ$, $\mathcal{T}_{d,\max} = \mathcal{C}_{b,\min}^\circ$ and $U(\underline{d}(\nabla))={F(\overline{b}(\Delta))}^\circ$; by applying (Theorem \ref{th:3}), we get the   result. 
\end{proof}

\noindent In the following, we illustrate the switch from the system of $\min-\max$ fuzzy relations equations  $(\Sigma)$ of \cite{baaj2022learning} to  its associated system of $\max-\min$ fuzzy relational equations. 

\begin{example}\label{ex:example7poss}
Let us reuse the example in \cite{baaj2022learning}. 
\begin{align*}
(\Sigma): \, \,\, \, Y\, \,\, \, &= \Gamma \Box_{\max}^{\min} X\\
 \begin{bmatrix}
    0.3 \\ 1 \\ 0.3 \\ 0.8 \\ 0.3 \\ 0.7 \\  0.3 \\ 0.7
    \end{bmatrix} &= \begin{bmatrix}
0.1 & 1 & 1 & 1 & 1 & 1\\ 
1 & 1 & 1 & 1 & 1 & 1\\
0.1 & 1 & 1 & 0.8 & 1 & 1\\
1 & 1 & 1 & 0.8 & 1 & 1\\ 
0.1 & 1 & 1 & 1 & 1 & 0.3\\ 
1 & 1 & 1 & 1 & 1 & 0.3\\
0.1 & 1 & 1 & 0.8 & 1 & 0.3\\
1 & 1 & 1 & 0.8 & 1 & 0.3\\
    \end{bmatrix}\Box_{\max}^{\min} \begin{bmatrix}
    s_1 \\
    r_1 \\
    s_2 \\
    r_2 \\
    s_3 \\
    r_3 
    \end{bmatrix}.    
\end{align*}
\noindent where $s_1,r_1,s_2,r_2,s_3$, and $r_3$ are unknown rule parameters. The system $(\Sigma)$ is consistent.  
We have:
\begin{align*}
X=\begin{bmatrix}
    s_1 \\
    r_1 \\
    s_2 \\
    r_2 \\
    s_3 \\
    r_3 
    \end{bmatrix}\text{ is a solution iff} \begin{bmatrix}
0.3 \\
0 \\
0 \\
0 \\
0 \\
0.7
 \end{bmatrix}
 \leq 
 \begin{bmatrix}
    s_1 \\
    r_1 \\
    s_2 \\
    r_2 \\
    s_3 \\
    r_3 
    \end{bmatrix}
    \leq 
 \begin{bmatrix}
 0.3 \\
 1 \\
 1 \\
 0.8\\
 1 \\
 0.7
 \end{bmatrix}.
\end{align*}
\noindent Let $A = \Gamma^\circ$,  $x= X^\circ$ and  $b= Y^\circ$. We have:
\begin{align*}
 \, \,\, \, b\, \,\, \, &= A \Box_{\min}^{\max} x\\
\begin{bmatrix}
   0.7 \\ 0 \\ 0.7 \\ 0.2 \\ 0.7 \\ 0.3 \\  0.7 \\ 0.3
   \end{bmatrix} &= \begin{bmatrix}
0.9 & 0 & 0 & 0 & 0 & 0\\ 
0 & 0 & 0 & 0 & 0 & 0\\
0.9 & 0 & 0 & 0.2 & 0 & 0\\
0 & 0 & 0 & 0.2 & 0 & 0\\ 
0.9 & 0 & 0 & 0 & 0 & 0.7\\ 
0 & 0 & 0 & 0 & 0 & 0.7\\
0.9 & 0 & 0 & 0.2 & 0 & 0.7\\
0 & 0 & 0 & 0.2 & 0 & 0.7\\
   \end{bmatrix}\Box_{\min}^{\max} \begin{bmatrix}
   1 - s_1 \\
   1 - r_1 \\
   1 - s_2 \\
   1 - r_2 \\
   1 - s_3 \\
   1 - r_3 
   \end{bmatrix}.    
\end{align*}  

We have:
\begin{align*}
x=\begin{bmatrix}
    1-s_1 \\
    1-r_1 \\
    1-s_2 \\
    1-r_2 \\
    1-s_3 \\
    1-r_3 
    \end{bmatrix}\text{ is a solution iff} 
  \begin{bmatrix}
 0.7 \\
 0 \\
 0 \\
 0.2\\
 0 \\
 0.3
 \end{bmatrix}
 \leq 
 \begin{bmatrix}
    1-s_1 \\
    1-r_1 \\
    1-s_2 \\
    1-r_2 \\
    1-s_3 \\
    1-r_3 
    \end{bmatrix}
    \leq 
 \begin{bmatrix}
0.7 \\
1 \\
1 \\
1 \\
1 \\
0.3
 \end{bmatrix}.
\end{align*}
\end{example}

\noindent Let  $(\Sigma)$ be an inconsistent system. Using our results, we can obtain approximate solutions of the system $(\Sigma)$ which are solutions of the consistent systems defined by the  matrix of $(\Sigma)$ and a Chebyshev approximation of the  second member of $(\Sigma)$. For obtaining maximal Chebyshev approximations of the second member of the system $(\Sigma)$, we have to use (Corollary  \ref{eq:maxsol}), which requires the solving of a particular system of $\min-\max$ fuzzy relational inequalities. The solving of such a  system can be done by adapting the results of \cite{matusiewicz2013increasing} using an approach similar to the one we used to establish the correspondences (Table \ref{tab:correspondences}).  

\noindent We illustrate how to obtain approximate solutions of the system $(\Sigma)$ when it is inconsistent.
\begin{example}
(continued)
Let us reuse the matrix $\Gamma$ of the previous example and a new second member $Y = \begin{bmatrix}
    0.3 \\ 1 \\ 0.3 \\ 0.8 \\ 0.7 \\ 0.7 \\  0.3 \\ 0.7
    \end{bmatrix}$. In this case, the system $(\Sigma)$ is inconsistent and the Chebyshev distance associated to the second member $Y$ of $(\Sigma)$ is $\nabla = 0.2$. 
    The lowest Chebyshev approximation of $Y$  is denoted $\check{Y}$ and from (Corollary  \ref{eq:maxsol}) we find that there is a unique maximal Chebyshev approximation of $Y$  which is denoted $\hat{Y}$:
    \begin{equation}
        \check{Y} = \begin{bmatrix} 0.5\\ 1\\0.5\\ 0.8\\ 0.5\\ 0.5\\ 0.5\\ 0.5\end{bmatrix} \quad \text{and} \quad \hat{Y} = \begin{bmatrix}
        0.5 \\1\\ 0.5\\ 1\\ 0.5\\ 0.9\\ 0.5\\ 0.9
    \end{bmatrix}.
    \end{equation}

\noindent Some approximate solutions of the system $(\Sigma) : Y = \Gamma \Box_{\max}^{\min} X$ are  the solutions of the system $\check{Y} = \Gamma \Box_{\max}^{\min} X$ i.e.,  $
\begin{bmatrix}
0.5 \\
0 \\
0 \\
0 \\
0 \\
0.5
 \end{bmatrix}
 \leq 
 \begin{bmatrix}
    s_1 \\
    r_1 \\
    s_2 \\
    r_2 \\
    s_3 \\
    r_3 
    \end{bmatrix}
    \leq 
 \begin{bmatrix}
 0.5 \\
 1 \\
 1 \\
 0.8\\
 1 \\
 0.5
 \end{bmatrix}$ and the  solutions of the system $\hat{Y} = \Gamma \Box_{\max}^{\min} X$  i.e, $
\begin{bmatrix}
0.5 \\
0 \\
0 \\
1 \\
0 \\
0.9
 \end{bmatrix}
 \leq 
 \begin{bmatrix}
    s_1 \\
    r_1 \\
    s_2 \\
    r_2 \\
    s_3 \\
    r_3 
    \end{bmatrix}
    \leq 
 \begin{bmatrix}
 0.5 \\
 1 \\
 1 \\
 1\\
 1 \\
 0.9
 \end{bmatrix}$. One can check that $\begin{bmatrix}
 0.5 \\
 1 \\
 1 \\
 1\\
 1 \\
 0.9
 \end{bmatrix}$ belongs to $\Upsilon_{Y,\max}$ i.e., it is a maximal approximate solution.  
\end{example}

\subsection{Finding approximate solutions
of the rule parameters according to multiple training data}

The equation system $(\Sigma)$ has been introduced for learning the rule parameters according to a training datum \cite{baaj2022learning}. 
Our results let us tackle the problem of determining values of the rule parameters when we have multiple training data  as follows.

\noindent Let us consider that we have $N$ equation systems $(\Sigma_1): Y_1 = \Gamma_1 \Box_{\max}^{\min} X, (\Sigma_2): Y_2 = \Gamma_2 \Box_{\max}^{\min} X, \dots, (\Sigma_N): Y_N = \Gamma_N \Box_{\max}^{\min} X$, where each of them is formed from a training datum using the procedure introduced in \cite{baaj2022learning}.  From the matrices $\Gamma_1, \Gamma_2,\dots, \Gamma_N$ and the second members $Y_1, Y_2, \dots Y_N$   of the equation systems, we form a new matrix and a new column vector by block matrix construction:
\begin{equation}
     \mathbf{\Gamma} = \begin{bmatrix}
        \Gamma_1\\
        \Gamma_2\\
        \vdots\\
        \Gamma_N
    \end{bmatrix} \text{ and } \mathbf{Y} =  \begin{bmatrix}
        Y_1\\
        Y_2\\
        \vdots \\
        Y_N
    \end{bmatrix} 
\end{equation}
\noindent We introduce the following  equation system, which in fact stacks the equation systems $(\Sigma_1), (\Sigma_2),\dots,(\Sigma_N)$ into one:
\begin{equation}
    (\mathbf{\Sigma}): \mathbf{Y} = \mathbf{\Gamma} \Box^{\min}_{\max} X.
\end{equation}
By solving $(\mathbf{\Sigma})$, we obtain solutions for the rule parameters that take into account all the training data. If the system  $(\mathbf{\Sigma})$ is inconsistent, $\nabla(\mathbf{\Gamma},\mathbf{Y})$ is the Chebyshev distance associated to its second member $\mathbf{Y}$  and using our results, we can obtain approximate solutions of the rule parameters which are  approximate solutions of the system $(\mathbf{\Sigma})$.

We illustrate this paradigm by the following example. 
\begin{example}
\noindent We consider two systems, each of them being built from a training datum using the method presented in~\cite{baaj2022learning}:
    \begin{align*}
(\Sigma_1): \, \,\, \, Y_1\, \,\, \, &= \Gamma_1 \Box_{\max}^{\min} X\\
 \begin{bmatrix}
    0.3 \\ 1 \\ 0.3 \\ 0.8 
    \end{bmatrix} &= \begin{bmatrix}
0.4 & 1 & 1 & 1\\ 
1 & 1 & 1 & 1 \\
0.4 & 1 & 1 & 0.8\\
1 & 1 & 1 & 0.8 \\ 
    \end{bmatrix}\Box_{\max}^{\min} \begin{bmatrix}
    s_1 \\
    r_1 \\
    s_2 \\
    r_2 \\
    \end{bmatrix}
    \end{align*}
\noindent and 
    \begin{align*}
    (\Sigma_2): \, \,\, \, Y_2\, \,\, \, &= \Gamma_2 \Box_{\max}^{\min} X\\
  \begin{bmatrix}
    1 \\ 0.8 \\ 0.3 \\ 0.3 
    \end{bmatrix} &= \begin{bmatrix}
1 & 1 & 1 & 1\\ 
1 & 0.7 & 1 & 1 \\
1 & 1 & 1 & 0.1\\
1 & 0.7 & 1 & 0.1 \\ 
    \end{bmatrix}\Box_{\max}^{\min} \begin{bmatrix}
    s_1 \\
    r_1 \\
    s_2 \\
    r_2 \\
    \end{bmatrix}.
\end{align*}
\noindent We remind that $s_1,r_1,s_2,r_2$ are the unknown rule parameters. 
\noindent We form the system $(\mathbf{\Sigma}):$
\begin{align*}
    (\mathbf{\Sigma}): \, \,\, \, \mathbf{Y}\, \,\, \, &= \mathbf{\Gamma} \Box_{\max}^{\min} X\\
   \begin{bmatrix}
    0.3 \\ 1 \\ 0.3 \\ 0.8\\ 1 \\ 0.8 \\ 0.3 \\ 0.3 
    \end{bmatrix} &= \begin{bmatrix}
    0.4 & 1 & 1 & 1\\ 
1 & 1 & 1 & 1 \\
0.4 & 1 & 1 & 0.8\\
1 & 1 & 1 & 0.8 \\ 
1 & 1 & 1 & 1\\ 
1 & 0.7 & 1 & 1 \\
1 & 1 & 1 & 0.1\\
1 & 0.7 & 1 & 0.1 \\ 
    \end{bmatrix}\Box_{\max}^{\min} \begin{bmatrix}
    s_1 \\
    r_1 \\
    s_2 \\
    r_2 \\
    \end{bmatrix}. \\    
\end{align*}
\noindent The system $(\mathbf{\Sigma})$ is inconsistent, because the  Chebyshev distance associated to its second member $\mathbf{Y}$ is $\nabla=0.1$. The lowest Chebyshev approximation of $\mathbf{Y}$ is denoted $ \mathbf{\check{Y}}$ and 
we find that we have a unique maximal  Chebyshev approximation of $\mathbf{Y}$ denoted $\mathbf{\hat{Y}}$:
\begin{equation*}
    \mathbf{\check{Y}} = \begin{bmatrix}
    0.4\\ 1\\0.4\\ 0.8\\ 1\\ 0.7\\ 0.2 \\ 0.2
    \end{bmatrix} \text{ and } \mathbf{\hat{Y}}= \begin{bmatrix}0.4\\ 1\\ 0.4\\ 0.8\\ 1\\ 0.9\\ 0.4 \\  0.4\end{bmatrix}.
\end{equation*}

\noindent Some approximate solutions of the system $(\mathbf{\Sigma}) $ are  the solutions of the system $\check{\mathbf{Y}} = \mathbf{\Gamma} \Box_{\max}^{\min} X$ i.e.,  $
\begin{bmatrix}
0 \\
0.7 \\
0 \\
0.2 \\
 \end{bmatrix}
 \leq 
 \begin{bmatrix}
    s_1 \\
    r_1 \\
    s_2 \\
    r_2 
    \end{bmatrix}
    \leq 
 \begin{bmatrix}
 0.4 \\
 0.7 \\
 1 \\
 0.2
 \end{bmatrix}$ and the  solutions of the system $\hat{\mathbf{Y}} = \mathbf{\Gamma} \Box_{\max}^{\min} X$  i.e, $
\begin{bmatrix}
0 \\
0.9 \\
0 \\
0.4\\
 \end{bmatrix}
 \leq 
 \begin{bmatrix}
    s_1 \\
    r_1 \\
    s_2 \\
    r_2 \\
    \end{bmatrix}
    \leq 
 \begin{bmatrix}
 0.4 \\
 0.9\\
 1 \\
 0.4
 \end{bmatrix}$.
\end{example}

\section{Conclusion}
In this article, for an inconsistent system of  $\max-\min$ fuzzy relational equations denoted $(S): A \Box_{\min}^{\max}x =b$,  we have described the approximate solutions set of the system and the set of Chebyshev approximations of the second member $b$.  The main tool of our study is an explicit analytical formula to compute  the Chebyshev distance $\Delta = \inf_{c \in {\cal C}} \Vert b - c \Vert$, which is expressed in $L_\infty$ norm, and where ${\cal C}$ is the set of second members of the consistent systems defined with the same matrix $A$. The Chebyshev distance is obtained by elementary calculations involving only the components of the matrix $A$ and those of the second member $b$.

We defined an approximate solution of an inconsistent system  $A \Box_{\min}^{\max}x =b$ as a solution of a consistent system $A \Box_{\min}^{\max}x = c$, where $c$ is a vector such that $\Vert b - c \Vert=\Delta$ ; $c$ is called a Chebyshev approximation of $b$. We first related the  approximate solutions set to the set of Chebyshev approximation of $b$. We gave two sharp characterizations of the approximate solutions set and showed how to get minimal Chebyshev approximations of $b$ from minimal approximate solutions. As a consequence of our result, we proved that the set of minimal Chebyshev approximations of $b$ is non-empty and finite. Furthermore, we described the structure of the  approximate solutions set and that of the set of Chebyshev approximations of $b$.

We introduced a paradigm for $\max-\min$ learning approximate weight matrices relating input and output data from training data, where the learning error is expressed in terms of $L_\infty$ norm. For this purpose, we canonically associated to the training data  systems of $\max-\min$ fuzzy relational equations. These systems  allowed us to compute the minimal value $\mu$ of the learning error according to the training data. This minimal value $\mu$ is expressed in terms of the Chebyshev distances associated to the second member of the already introduced systems. Moreover, we gave a method for constructing approximate weight matrices whose learning error is equal to $\mu$.

By introducing analogous tools for a system of $\min-\max$  fuzzy
relational equations to those already introduced for a system of $\max-\min$ fuzzy relational equations, and then establishing the correspondences between them,  we have shown  that the study of the approximate solutions of a system of $\max-\min$ fuzzy relational equations is equivalent to the study of the approximate solutions of a system
of $\min-\max$ fuzzy relational equations. This  allowed us to extend the results of \cite{baaj2022learning}: we gave a method to approximately learn the rule parameters of  a possibilistic rule-based system according to multiple training data. 

In perspectives, we are currently working on the development of analogous tools for  systems of $\max-T$ fuzzy relational equations, where $T$ is the t-norm product or the t-norm of Łukasiewicz. For these systems, we already have analytical formulas to compute the Chebyshev distance associated to their second member. 
As applications, for the problem of the $\max-\min$ invertibility of a fuzzy relation, when a fuzzy matrix $A$ has no preinverse (resp. postinverse), we know how to compute, using the $L_\infty$ norm,  an approximate preinverse  (resp. postinverse) for $A$. We also tackle the development of new applications based on systems of $\max-T$ fuzzy relational equations where $T$ is a t-norm among $\min$, product or the one of Łukasiewicz.

\bibliographystyle{plainnat}

\end{document}